\theoremstyle{plain}
\newtheorem{theorem}{Theorem}[section]
\newtheorem{proposition}[theorem]{Proposition}
\newtheorem{lemma}[theorem]{Lemma}
\theoremstyle{definition}
\theoremstyle{remark}
\newtheorem{property}{Property}
\newcommand{\ie}{\text{i.e.}, }
\DeclarePairedDelimiter\norm{\lVert}{\rVert}
\DeclarePairedDelimiter\innorm{\langle}{\rangle}
\DeclareMathOperator{\R}{\mathbb{R}} 
\DeclareMathOperator*{\argmax}{arg\,max}
\DeclareMathOperator*{\argmin}{arg\,min}
\DeclareMathOperator*{\arginf}{arg\,inf}
\DeclareMathOperator{\St}{\mathcal{S}}
\DeclareMathOperator{\A}{\mathcal{A}}
\DeclareMathOperator{\Z}{\mathcal{Z}}
\DeclareMathOperator{\B}{\mathcal{B}}
\DeclareMathOperator{\K}{\mathcal{K}}
\DeclareMathOperator{\D}{\mathcal{D}}
\newcommand{\Uc}{\mathcal{U}}
\newcommand{\rob}{_{\mathcal{\Uc}}}
\title{ Dual Formulation for Non-Rectangular $L_p$ Robust Markov Decision Processes}
\author[1]{Navdeep Kumar}
\author[4]{Adarsh Gupta}
\author[2]{Maxence Mohamed Elfatihi}
\author[3]{Giorgia Ramponi}
\author[1]{Kfir Yehuda Levy}
\author[1,5]{Shie Mannor}
\affil[1]{Technion}
\affil[2]{École Polytechnique}
\affil[3]{University of Zurich}
\affil[4]{Finsyth AI}
\affil[5]{NVIDIA Research}
\date{}
\begin{document}

\maketitle





\vskip 0.3in




\begin{abstract}
We study robust Markov decision processes (RMDPs) with non-rectangular uncertainty sets, which capture interdependencies across states unlike traditional rectangular models. While non-rectangular robust policy evaluation is generally NP-hard, even in approximation, we identify a powerful class of $L_p$-bounded uncertainty sets that avoid these complexity barriers due to their structural simplicity. We further show that this class can be decomposed into infinitely many \texttt{sa}-rectangular $L_p$-bounded sets and leverage its structural properties to derive a novel dual formulation for $L_p$ RMDPs. This formulation provides key insights into the adversary’s strategy and enables the development of the first robust policy evaluation algorithms for non-rectangular RMDPs. Empirical results demonstrate that our approach significantly outperforms brute-force methods, establishing a promising foundation for future investigation into non-rectangular robust MDPs.
\end{abstract}

\section{Introduction}
Robust Markov Decision Processes (MDPs) provide a framework for developing solutions that are more resilient to uncertain environmental parameters compared to standard MDPs \cite{BiasVarianceShie, HKuhn2013, tamar14, Nilim2005RobustCO, Iyenger2005}. This approach is particularly critical in high-stakes domains, such as robotics, finance, healthcare, and autonomous driving, where catastrophic failures can have severe consequences. The study of robust MDPs is further motivated by their potential to offer superior generalization capabilities over non-robust methods \cite{robustnessAndGeneralization, genralization1, generalization2}.

Robust solutions are highly desirable, but obtaining them is a challenging task. In particular, robust policy evaluation has been shown to be strongly NP-hard \cite{RVI} for general convex uncertainty sets. Consequently, much of the existing work makes rectangularity assumptions, with the most common being \(s\)-rectangular uncertainty sets and its special case sa-rectangular uncertainty sets \cite{k-rectangularRMDP, r-rectRMDP, RVI, Kaufman2013RobustMP, Bagnell01solvinguncertain, Nilim2005RobustCO, Iyenger2005, ppi, Rcontamination, PG_RContamination, RPG_conv, derman2021twice, LpRMDP, LpPgRMDP, Ewok_dileep, Ewok}.

The \texttt{s}-rectangularity assumption simplifies the modeling of uncertainty by treating it as independent across states \cite{RVI}. This assumption is analytically appealing due to the existence of contractive robust Bellman operators, which facilitate computational tractability \cite{RVI}. However, real world uncertainties are often coupled across the states, and modelling it with \texttt{s}-rectangular uncertainty sets, can overly introduce conservatism in solutions (as illustrated in Figure 1 of \cite{LpRewardRobust}). In other words, the ratio between the volume of a real world coupled uncertainty set and the smallest \texttt{s}-rectangular uncertainty containing it, can be exponential in the state space (as the ratio of volumes of a n-dimension sphere and a n-dimension cube containing it is $O(2^n)$ \cite{n-sphere}). In contrast, non-rectangular RMDPs capture much better these interdependencies but lack the existence of any contractive robust Bellman operators, which makes the problem unwieldy \cite{LpRewardRobust}.

To the best of our knowledge, research on non-rectangular robust MDPs remains limited \cite{LpRewardRobust, RPG_conv}. While \cite{LpRewardRobust} explored robust policy evaluation for non-rectangular uncertainty sets, their work was confined to reward uncertainties. In contrast, this paper addresses kernel uncertainty, which presents a significantly greater level of complexity compared to reward uncertainty. Additionally, \cite{RPG_conv} demonstrated the convergence of robust policy gradient methods with an iteration complexity of \( O(\epsilon^{-4}) \) for all types of uncertainty sets, including non-rectangular ones. However, their method depends on oracle access to the robust gradient (worst-case kernel). As a result, robust policy evaluation under non-rectangular kernel uncertainties remains an unresolved challenge.

Further, dual formulation for non-robust MDPs \cite{Puterman1994MarkovDP} has played a significant role in advancing the field. Unfortunately, no such formulation exists for the robust MDPs.

The key insight of this work is decomposing minimization over a nonrectangular $L_p$ norm-bounded uncertainty sets into minimization over a union of \texttt{sa}-rectangular $L_p$-norm bounded uncertainty sets. For each minimization over  rectangular uncertainty set, we have the robust return in close form. Now, we minimize this expression over the set of possible all \texttt{sa}-rectangular $L_p$-norm bounded uncertainty sets that makes up the original nonrectangular set. Using the fact that worst kernel is rank-one perturbation of the nominal kernel in \texttt{sa}-rectangular robust MDPs, we derive dual formulation for the $L_p$ robust MDPs. This reveals a very interesting insishts over how the adversary chooses the worst kernel. Further, this dual formulation inspires a method for evaluation of robust policy. 

\textbf{Contributions.}
\begin{itemize}
    \item We show that the general NP-hardness result for policy evaluation in non-rectangular RMDPs does not apply to \(L_p\)-bounded robust MDPs.
    \item We derive a novel dual formulation for \(L_p\)-RMDPs, providing key insights into the adversary's strategy and enabling the development of first robust policy evaluation algorithms.
    \item We experimentally validate our proposed algorithms, demonstrating significant improvements over existing brute-force methods.
\end{itemize}
This work opens up the avenue for the further investigation of non-rectangular RMDPs otherwise believed too hard. 

\subsection{Related Work}
To the best of our knowledge, there exists no work on non-rectangular robust MDPs with kernel uncertainty. This work is the first to propose an efficient method for robust policy evaluation for a very useful class of uncertainty sets, otherwise thought to be NP-Hard \cite{RVI}.

\textbf{Rectangular Robust MDPs.} In literature, \texttt{sa}-rectangular uncertainty is a very old assumption \cite{Iyenger2005,Nilim2005RobustCO}. \cite{RVI} introduced \texttt{s}-rectangular uncertainty sets and proved its tractability, in addition to the intractability of the general non-rectangular uncertainty sets. 

The most advantageous aspect of the \texttt{s}-rectangularity, is the existence of contractive robust Bellman operators. This gave rise to many robust value based methods \cite{ppi,RPG_conv}. Further, for many specific uncertainty sets, robust Bellman operators are equivalent to regularized non-robust operators, making the robust value iteration as efficient as non-robust MDPs \cite{derman2021twice, Rcontamination, LpRMDP}.  

There exists many policy gradient based methods for robust MDPs, relying upon contractive robust Bellman operators for the robust policy evaluation \cite{PG_RContamination, LpPgRMDP}.

Further, \cite{Ewok_dileep,Ewok} trie to tweak the process, and directly get samples from the adversarial model via pessimistic sampling.

There exist other notions of rectangularity such as k-rectangularity \cite{k-rectangularRMDP} and r-rectangularity \cite{r-rectRMDP} which are sparsely studied. However, \cite{TractablerRMDP} shows, the theses uncertainty sets are either equivalent to s-rectangularity or non-tractable.

\textbf{Non-Rectangular Reward Robust MDPs. } 
Policy evaluation for robust MDPs with non-rectangular uncertainty set is proven to be a Strongly-NP-Hard problem \cite{RVI}, in general. For a very specific case, where uncertainty is limited only to reward uncertainty bounded with $L_p$ norm, \cite{LpRewardRobust} proposed robust policy evaluation via frequency (occupation measure) regularization, and derived the policy gradient for policy improvement. 

\textbf{Convergence Rate of Robust Policy Gradient .} The robust policy gradient method has been shown to converge with iteration complexity of \(O(\epsilon^{-4})\) for general robust MDPs \cite{RPG_conv}. However, it requires oracle access to robust policy evaluation (i.e., the computation of the worst kernel), which can be computationally expensive \cite{RPG_conv}.

\section{Preliminary}

A Markov Decision Process (MDP) can be described as a tuple $(\mathcal{S},\mathcal{A},P, R,\gamma,\mu)$, where $\mathcal{S}$ is the state space, $\mathcal{A}$ is the action space, $P$ is a transition kernel mapping $\mathcal{S}\times\mathcal{A}$ to $\Delta_{\mathcal{S}}$, $R$ is a reward function mapping $\mathcal{S}\times\mathcal{A}$ to $\mathbb{R}$, $\mu$ is an initial distribution over states in $\mathcal{S}$, and $\gamma$ is a discount factor in $[0,1)$ \cite{Puterman1994MarkovDP,Sutton1998}. A policy $\pi:\mathcal{S}\to\Delta_\mathcal{A}$ is a decision rule that maps state space to a probability distribution over action space. Let $\Pi = (\Delta_{\A})^{\St}$ denote set  of all possible policies. Further, $\pi(a|s), P(s'|s,a)$ denotes the probability of taking action $a$ in state $s$ by policy $\pi$, and the probability of transition to state $s'$ from state $s$  under action $a$ respectively. In addition, we denote $P^\pi(s'|s) = \sum_{a}\pi(a|s)P(s'|s,a)$ 
and $
R^\pi(s) = \sum_{a}\pi(a|s)R(s,a)$ as  short-hands. 

The return of a policy $\pi$, is defined as 
$ J^\pi_{P} = \langle\mu,v^\pi_{P}\rangle = \langle R^\pi,d^\pi_{P}\rangle $ where $v^{\pi}_{P} :=D^\pi R^\pi$ is value function, $d^\pi_P = \mu^\top D^\pi$ is occupation measure  and $D^\pi = 
(I-\gamma P^\pi)^{-1}$ is occupancy matrix  \cite{Puterman1994MarkovDP}. As a shorthand, we denote $d^\pi_P(s,a) = d^\pi_P(s)\pi(a|s)$ and the usage shall be clear from the context. 


A robust Markov Decision Process (MDP) is a tuple $(\mathcal{S},\mathcal{A},R,\Uc,\gamma,\mu)$ which generalizes the standard MDP,  by containing a set of environments $\mathcal{\Uc}$ \cite{Iyenger2005,Nilim2005RobustCO}. The reward robust MDPs is well-studied in the precious work of rectangular 
\cite{derman2021twice,LpRMDP} and non-rectangular \cite{LpRewardRobust} uncertainty sets. Hence, in this work , we consider only uncertainty in the kernel which is much more challenging. 

For  an uncertainty set \(\mathcal{U}\), the robust return \( J^\pi_\Uc \) for a policy \(\pi\), and the optimal robust return \( J^*_\Uc \), are defined as: \[J^\pi_{\mathcal{U}} = \min_{P \in \mathcal{U}} J^\pi_{P}, \quad \text{and} \quad J^*_{\mathcal{U}} = \max_{\pi} J^\pi_{\mathcal{U}}, \]
respectively. The objective is to determine an optimal robust policy \(\pi^*_\mathcal{U}\) that achieves the optimal robust performance \(J^*_{\mathcal{U}}\). Unfortunately, even robust policy evaluation (i.e., finding the worst-case transition kernel \(P^\pi_\Uc \in \argmin_{P \in \Uc} J^\pi_P\)) is strongly NP-hard for general (non-rectangular) convex uncertainty sets \cite{RVI}. This makes solving non-rectangular robust MDPs a highly challenging problem.

To make the problem tractable, a common approach is to use \(\texttt{s}\)-rectangular uncertainty sets, \(\mathcal{U}^{\texttt{s}}  = \times_{s \in \mathcal{S}} \mathcal{P}_s\), where the uncertainty is modeled independently across states \cite{RVI}. These sets decompose state-wise, capturing correlated uncertainties within each state while ignoring inter-dependencies across states. This allows the robust value function to be defined as (vector minimum) $v^\pi_{\Uc^{\texttt{s}}} = \min_{P\in\Uc^{\texttt{s}}}v^\pi_P $\cite{RVI}.

A further simplification is the \(\texttt{sa}\)-rectangular uncertainty set, \(\mathcal{U}^{\texttt{sa}}\), where uncertainties are assumed to be independent across both states and actions. Formally,  
$\mathcal{U}^{\texttt{sa}} = \times_{(s,a) \in \mathcal{S} \times \mathcal{A}} \mathcal{P}_{s,a},  $
where \(\mathcal{P}_{s,a}\) are independent component sets for each state-action pair \cite{Iyenger2005,Nilim2005RobustCO,Rcontamination,PG_RContamination}. Notably, \(\texttt{sa}\)-rectangular sets are a special case of \(\texttt{s}\)-rectangular sets.

Various types of rectangular uncertainty sets have been explored in the literature \cite{Rcontamination,WR2L, Phi_RMDP}. In this work, we focus specifically on \(L_p\)-bounded uncertainty sets 
$\mathcal{U}^{\texttt{sa}}_p/ \mathcal{U}^{\texttt{s}}_p$ , which are centered around a nominal transition kernel \(\hat{P}\) \cite{ppi, derman2021twice, LpRMDP, LpPgRMDP}, defined as 
\begin{align*}
 \mathcal{U}^{\texttt{sa}}_p  &= \{ P\Bigm| \sum_{s'}P_{sa}(s')=1, \lVert P_{sa}-\hat{P}_{sa}\rVert_p\leq \beta_{sa}\},\\
    \mathcal{U}^{\texttt{s}}_p &= \{P \Bigm| \sum_{s'}P_{sa}(s')=1,\lVert P_s-\hat{P}_{s}\rVert_p\leq \beta_{s}
     \},
\end{align*}
with small enough radius vector $\beta$ is small enough, to ensure all the kernels in the uncertainty sets are valid. Further, symbol \(q\) is the Hölder conjugate of $p$ and $\sigma_p$ is the generalized standard deviation (GSTD)\cite{LpRMDP} defined as:
\[\frac{1}{p} + \frac{1}{q} = 1, \quad \text{and}\quad \sigma_p(v):=\min_{\omega\in\R}\norm{v-\omega\mathbf{1}}_p. \]
One of the most surprising, and useful facts about $L_p$ bounded uncertainty sets, is that the adversarial kernel is a rank one perturbation of the nominal kernel, as stated below. 
\begin{proposition}\label{main:bg:worstPm} (\textbf{Nature of the Adversary},\cite{LpPgRMDP}) For uncertainty set $\Uc =\Uc^{sa}_p/\Uc^s_p$, the worst kernel is given as
\begin{align*}
     P^\pi_{\Uc} =  \hat{P} -bk^\top,
    \end{align*}
where $k$ depends on the robust value function $v^\pi_\Uc$ and 
$b$ is (policy weighted of $\Uc^{\texttt{s}}_p$) radius vector.
\end{proposition}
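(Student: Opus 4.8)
The plan is to reduce the worst-kernel computation to a family of independent linear minimizations over $L_p$ balls and to solve each one in closed form with Hölder's inequality. Because $\Uc$ is rectangular, $v := v^\pi_\Uc$ is the unique fixed point of the contractive robust evaluation operator, so I may treat it as a fixed, known vector; the minimizing kernel is then recovered by carrying out the state(-action)-wise minimizations that define that operator. For $\Uc = \Uc^{\texttt{sa}}_p$ this means, for each $(s,a)$,
\[
(P^\pi_{\Uc})_{sa} \in \argmin\Bigl\{\, \innorm{P_{sa}, v} \;\Bigm|\; \mathbf{1}^\top P_{sa} = 1,\ \norm{P_{sa} - \hat{P}_{sa}}_p \le \beta_{sa} \,\Bigr\};
\]
the non-negativity constraints $P_{sa}(s')\ge 0$ are inactive since $\beta$ is assumed small enough to keep the whole ball inside the simplex.

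Next I would substitute $P_{sa} = \hat{P}_{sa} + \delta_{sa}$ with $\mathbf{1}^\top\delta_{sa}=0$ and $\norm{\delta_{sa}}_p\le\beta_{sa}$, leaving $\min_{\delta}\innorm{\delta, v}$ over that set. Since $\mathbf{1}^\top\delta_{sa}=0$, we have $\innorm{\delta_{sa}, v} = \innorm{\delta_{sa}, v-\omega\mathbf{1}}$ for every $\omega\in\R$, so by Hölder $\innorm{\delta_{sa}, v} \ge -\beta_{sa}\norm{v-\omega\mathbf{1}}_q$ for every $\omega$; taking the best such $\omega$ yields $\innorm{\delta_{sa},v}\ge -\beta_{sa}\sigma_q(v)$. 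The key step is that this bound is attained by a \emph{feasible} perturbation: the Hölder extremizer $\delta \propto -\operatorname{sign}(u)\,\abs{u}^{q-1}$ with $u = v - \omega^\star\mathbf{1}$ satisfies $\mathbf{1}^\top\delta = 0$ exactly when $\omega^\star$ minimizes $\omega\mapsto\norm{v-\omega\mathbf{1}}_q$, i.e. it obeys the first-order condition defining $\sigma_q(v)$. Hence the optimal perturbation is $\delta_{sa} = -\beta_{sa}\,k$ for the single vector
\[
k \;=\; \frac{\operatorname{sign}(v-\omega^\star\mathbf{1})\,\abs{v-\omega^\star\mathbf{1}}^{q-1}}{\norm{v-\omega^\star\mathbf{1}}_q^{q-1}},
\]
which depends only on $v=v^\pi_\Uc$ (through $\omega^\star$) and is the same across all $(s,a)$; a short check using $(q-1)p=q$ confirms $\norm{k}_p = 1$. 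Stacking over $(s,a)$ gives $P^\pi_\Uc = \hat{P} - b\,k^\top$ with $b = (\beta_{sa})_{s,a}$.

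For $\Uc = \Uc^{\texttt{s}}_p$ the argument is the same in spirit, but the actions of a state are coupled through a single budget $\norm{P_s - \hat{P}_s}_p\le\beta_s$: one minimizes $\sum_a \pi(a|s)\innorm{\delta_{sa}, v}$ over $(\delta_{sa})_a$ with each $\mathbf{1}^\top\delta_{sa}=0$ and the joint norm at most $\beta_s$. Pulling out the shift $v-\omega^\star\mathbf{1}$ and applying Hölder in the product space shows all optimal $\delta_{sa}$ lie along the same $q$-dual direction $k$, with magnitudes set by the weights $\pi(a|s)$; this gives $P^\pi_\Uc = \hat{P} - b\,k^\top$ where $b$ is the policy-weighted radius vector (its $(s,a)$ entry being $\beta_s$ times a normalized power of $\pi(a|s)$). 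The main obstacle is exactly this last step: correctly identifying $b$ in the $\texttt{s}$-rectangular case and verifying feasibility of the product-space Hölder extremizer, together with the mild but necessary argument that the simplex constraints stay inactive. The apparent circularity — $k$ depends on $v^\pi_\Uc$, which depends on the worst kernel — is not a genuine obstacle, since $v^\pi_\Uc$ is pinned down independently as the fixed point of the contractive rectangular robust operator.
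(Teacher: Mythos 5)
Your derivation is correct, and it reconstructs exactly the result the paper only imports by citation: this proposition is stated as background from \cite{LpPgRMDP}, and the paper's appendix merely restates the explicit answer ($k=u^\pi_\Uc$, the normalized balanced robust value vector, and $b_{sa}=\beta_{sa}$ resp. $\beta_s(\pi(a|s)/\norm{\pi_s}_q)^{q-1}$) without a proof. Your Hölder argument matches that characterization: the zero-sum constraint is absorbed by shifting $v$ by $\omega^\star\mathbf{1}$, the first-order condition for $\omega^\star$ in $\min_\omega\norm{v-\omega\mathbf{1}}_q$ is precisely what makes the Hölder extremizer sum to zero, and $(q-1)p=q$ gives $\norm{k}_p=1$, so your $k$ coincides with the paper's $u^\pi_\Uc$. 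Two places are glossed but standard and repairable in a line each: (i) passing from "worst kernel for the return" to "state(-action)-wise minimizer against the fixed $v^\pi_\Uc$" uses monotonicity of the evaluation operator ($v^\pi_P\succeq v^\pi_\Uc$ for all $P\in\Uc$, with equality for the state-wise minimizer); (ii) in the \texttt{s}-rectangular case the product-space Hölder step should be written out — optimizing $\sum_a\pi(a|s)c_{sa}$ over $\norm{c_s}_p\le\beta_s$ yields $c_{sa}=\beta_s\pi(a|s)^{q-1}/\norm{\pi_s}_q^{q-1}$, all actions sharing the same direction $k$ — which is exactly the policy-weighted radius $b$ claimed. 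Edge cases $p\in\{1,\infty\}$ need the usual subgradient version of the sign argument, but the conclusion is unchanged.
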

This result is insightful however it doesn't characterize the direction of perturbation $k$ in nominal terms.

\paragraph{Robust Policy Gradient Methods.}  
The absence of contractive robust Bellman operators renders the development of value-based methods for robust MDPs particularly challenging. Consequently, policy gradient methods naturally emerge as a viable alternative. The update rule is given by:  
\begin{align}\label{def:RPG:updateRule}
    \pi_{k+1} = \text{Proj}_{\pi \in \Pi} \Big[\pi_k - \eta_k \nabla_{\pi} J^{\pi_k}_{P_k}\Big],
\end{align} 
where \( J^{\pi_k}_{P_k} - J^{\pi_k}_\Uc \leq \epsilon \gamma^k \) and learning rate $\eta_k = O(\frac{1}{\sqrt{k}})$. This approach guarantees convergence to a global solution within \( O(\epsilon^{-4}) \) iterations \cite{RPG_conv}.  

However, this update rule depends on oracle access to the robust gradient, which is highly challenging to obtain because robust policy evaluation is an NP-hard problem. Moreover, no prior work has addressed robust gradient evaluation in the context of non-rectangular robust MDPs. This work constitutes the first attempt to compute the robust gradient for such MDPs by leveraging the dual structure of robust MDPs, paving the way for practical robust policy gradient methods.

\paragraph{Dual Formulation of MDPs.}  
The primal formulation of an MDP is defined as:  
\[
\max_{v \in \mathcal{V}} \langle \mu, v \rangle, \quad \text{with its dual:} \quad \max_{d \in \mathcal{D}} \langle d, R \rangle,  
\]
where \( \mathcal{V} = \{v \mid v = R^\pi + \gamma P^\pi v, \pi \in \Pi\} \) represents the set of value functions. The dual formulation relies on the state-action occupancy measure \(d\), where \(d \in \mathcal{D} \subset \mathbb{R}^{|S| \times |A|}\) satisfies the non-negativity constraint (\(d \succeq 0\)) and the flow conservation constraint:  
$\sum_{a} d(s, a) - \gamma \sum_{s', a'} d(s', a') P(s \mid s', a') = \mu(s), \quad \forall s \in \mathcal{S}.$
The feasible set \(\mathcal{D}\) forms a convex polytope \cite{altman-constrainedMDP}, whereas the set of value functions, \(\mathcal{V}\), is a polytope that is generally non-convex \cite{valuePolytope}. This dual formulation offers several advantages, including efficient handling of constraints and the ability to solve the problem using linear programming techniques.

For robust MDPs, the geometry of robust value functions is significantly more intricate compared to standard MDPs \cite{wang}. While the dual formulation for standard MDPs is well-established, this work is the first to derive a dual formulation for robust MDPs. This novel formulation provides critical insights and lays the foundation for the development of robust policy evaluation methods.

\section{Method}
In this section, we derive the dual formulation for non-rectangular robust Markov decision processes (RMDPs) with uncertainty sets bounded by \(L_p\) balls. This dual perspective not only introduces several new research questions but also provides critical insights into the underlying problem. Further, we develop a method to compute the worst kernel, thereby enabling robust policy evaluation.

 We begin with defining the non-rectangular \(L_p\)-constrained uncertainty set around the nominal kernel $\hat{P}$ as:  
\[
\mathcal{U}_p = \Big\{P \;\Big|\; \lVert P-\hat{P}  \rVert_p \leq \beta, \; \sum_{s'} P(s' \mid s, a) = 1  \Big\}.
\]  
Throughout the paper, we use \(d^\pi, v^\pi, J^\pi, D^\pi\) as shorthand for \(d^\pi_{\hat{P}}, v^\pi_{\hat{P}}, J^\pi_{\hat{P}}\), and \(D^\pi_{\hat{P}}\), respectively w.r.t. nominal kernel $\hat{P}$.  The simplex constraint ensures that the transition kernel \(P\) satisfies the unity-sum-rows property, as discussed in \cite{LpRMDP}. The kernel radius \(\beta\) is assumed to be small enough to guarantee that all kernels within \(\mathcal{U}_p\) are well-defined, consistent with assumptions made in prior works \cite{derman2021twice, LpPgRMDP, LpRMDP}.

Note that this setting allows noise in one state to be coupled with noise in other states. Before delving into solving it, we first discuss why its  important? Why are uncertainty sets modeled with non-rectangular sets \(\Uc_p\) (e.g., \(L_2\)-balls) better than rectangular ones?

In Figure \ref{fig:lpBall}, we illustrate this by capturing the uncertainty set using non-rectangular \(\Uc_2\) (circle/sphere) balls and rectangular (square/cube) balls. The blue dots represent possible environments, with the origin being the nominal environment. Points farther away from the origin indicate larger perturbations. Specifically, points near the corners of the square/cube represent environments with large perturbations in all dimensions or coordinates simultaneously. The likelihood of such simultaneous perturbations is very low, and this issue becomes even more pronounced in higher dimensions. This phenomenon is well discussed in the paper \textit{Lightning Doesn't Strike Twice: Coupled RMDPs}\cite{Lightning_Deos_Not_Strike_Twice_RMDP}.

\begin{figure}[ht]
    \centering
    \includegraphics[width=\linewidth]{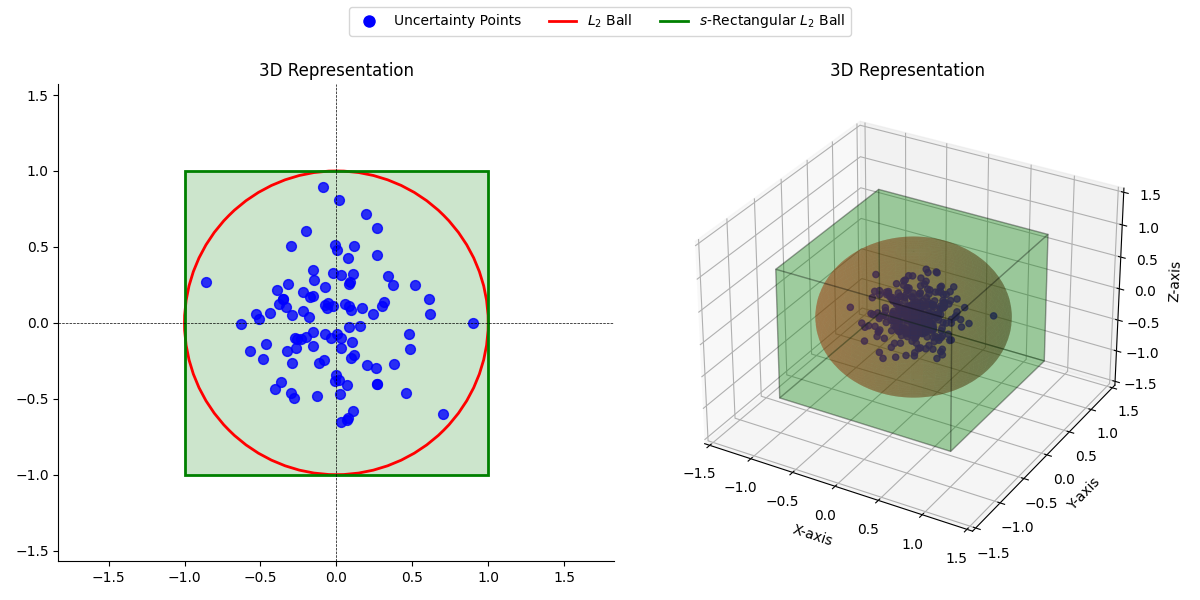}
    \caption{Modeling Uncertainty with Non-Rectangular and Rectangular \(L_2\)-Balls.}
    \label{fig:lpBall}
\end{figure}

To make matters worse, as shown in the result below, most of the volume of a high-dimensional cube lies near its corners outside the embedded sphere. This implies that rectangular robust MDPs are overly conservative, as their uncertainty sets focus on environments near the corners—corresponding to highly unlikely extreme perturbations.

\begin{proposition}  
Let \(\Uc^{\texttt{sa}}_2\) and \(\Uc^{\texttt{s}}_2\) denote the smallest \texttt{sa}-rectangular and \texttt{s}-rectangular sets, respectively, that contain \(\Uc_2\). Then:
\[
\frac{\textbf{vol}(\Uc_2)}{\textbf{vol}(\Uc^{\texttt{sa}}_2)} = O(c_{sa}^{-SA}),\quad \text{and}\quad \frac{\textbf{vol}(\Uc_2)}{\textbf{vol}(\Uc^{\texttt{s}}_2)} = O(c_s^{-S}),
\]
where \(\textbf{vol}(X)\) denotes the volume of the set \(X\), and \(c_s, c_{sa} > 1\) are constants.
\end{proposition}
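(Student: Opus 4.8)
The claim is a volume-ratio bound between the non-rectangular $L_2$ ball $\Uc_2$ and the smallest $\texttt{sa}$-rectangular (resp. $\texttt{s}$-rectangular) $L_2$ set containing it. My plan is to make each of these three sets concrete as a geometric body in the ambient space of kernels — the affine subspace cut out by the row-sum constraints $\sum_{s'}P(s'|s,a)=1$ — and then reduce the claim to a clean statement about the volume of a Euclidean ball versus a product of lower-dimensional Euclidean balls.

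\begin{itemize}
\item First I would fix dimensions. A kernel $P$ lives in $\R^{SA\times S}$, but the row-sum constraints confine $P-\hat P$ to a linear subspace of dimension $SA(S-1)$; write $m:=S-1$ for the dimension of each ``row block'' and $n:=SA$ for the number of blocks, so $\Uc_2$ is (the intersection with this subspace of) a Euclidean ball of radius $\beta$ in dimension $nm$. I would work entirely inside this subspace so that ``volume'' means $nm$-dimensional Lebesgue measure.
\item Next I would identify the two enclosing sets. The smallest $\texttt{sa}$-rectangular $L_2$ set containing $\Uc_2$ is $\times_{(s,a)}\{P_{sa}:\|P_{sa}-\hat P_{sa}\|_2\le\beta_{sa}^{\min}\}$ where each radius $\beta_{sa}^{\min}$ must be at least the maximal possible deviation of the $(s,a)$-block over $\Uc_2$ — and since $\Uc_2$ is a full ball of radius $\beta$, that maximal block deviation is exactly $\beta$. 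Hence $\Uc^{\texttt{sa}}_2$ is a product of $n=SA$ Euclidean balls each of radius $\beta$ and dimension $m=S-1$. Similarly $\Uc^{\texttt{s}}_2$ is a product of $S$ Euclidean balls each of radius $\beta$ and dimension $A(S-1)$. I should argue the ``smallest'' claim carefully: any rectangular set containing $\Uc_2$ must contain its projection onto each block, and the projection of an $L_2$ ball of radius $\beta$ onto a coordinate block is an $L_2$ ball of radius $\beta$ in that block.
\item Then the computation is the ratio $\mathrm{vol}(B^{nm}_\beta)\big/\prod_{i=1}^n \mathrm{vol}(B^{m}_\beta)$. The $\beta$'s cancel in the exponent $\beta^{nm}$ on top and $\prod\beta^m=\beta^{nm}$ on the bottom, leaving a pure ratio of unit-ball volume constants: $V_{nm}\big/V_m^{\,n}$ where $V_d=\pi^{d/2}/\Gamma(d/2+1)$. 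I would then invoke the standard asymptotic (essentially the $n$-sphere volume-decay fact the paper already cites) that $V_{nm}/V_m^n$ decays geometrically in $n$: by Stirling, $\log(V_{nm}/V_m^n) = -\tfrac{nm}{2}\log n + O(n)$, which is $O(c^{-n})$ for a suitable constant $c>1$ once one is slightly generous. Setting $c_{sa}$ so that $c_{sa}^{-SA}$ dominates $V_{SA\cdot m}/V_m^{SA}$ gives the first bound; the $\texttt{s}$-rectangular case is identical with $n=S$ blocks of dimension $A(S-1)$, yielding $c_s^{-S}$.
\end{itemize}

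\textbf{Main obstacle.} The routine part is the Stirling estimate; the genuinely delicate point is the ``smallest enclosing rectangular set'' characterization and making sure the radii really are forced to equal $\beta$ (not something smaller because of the simplex/row-sum constraints interacting across blocks). I would handle this by noting the row-sum constraint is block-separable — it constrains each $P_{sa}$-row independently — so the feasible subspace is itself a product of per-block subspaces, and the projection of the round ball $\Uc_2$ onto any block is the full round ball of radius $\beta$ in that block's subspace; hence no smaller radius can work. A secondary nuisance is being honest about the constants: I only need \emph{existence} of $c_s,c_{sa}>1$, so I can afford to absorb all polynomial-in-$S,A$ and $\Gamma$-factor slack into the choice of constant rather than optimizing it, which keeps the proof short.
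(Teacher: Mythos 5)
Your proposal is correct and takes essentially the same route as the paper's own proof: bound the ratio by comparing the volume of the Euclidean ball with the volume of the enclosing rectangular product set, and invoke the super-exponential decay of ball-volume constants in the dimension. The paper's version is a looser two-line sketch (sphere-versus-enclosing-cube for the \texttt{sa}-case, $SA$-dimensional sphere versus a product of $S$ spheres of dimension $A$ for the \texttt{s}-case), whereas you additionally track the correct per-block dimensions after the simplex constraint, justify minimality of the enclosing rectangular set via block projections, and carry out the Stirling/Gamma estimate explicitly — refinements of the same geometric idea rather than a different argument.
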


The result follows from the \(n\)-dimensional sphere's volume \(c_n r^n\) (\(c_n \to 0\)) \cite{n-sphere}, compared to the enclosing cube's volume \(2^n r^n\) (side \(2r\)), resulting in a ratio of \(O(2^n)\).

From the above discussion, we conclude that non-rectangular robust MDPs are less conservative. However, robust policy evaluation (even approximation) has been proven NP-hard  for general uncertainty sets defined as intersections of finite hyperplanes \cite{RVI}. Specifically, \cite{RVI} reduces an Integer Program (IP) with $m$ constraints to robust MDPs where the uncertainty set consists of intersections of $m$  half-spaces (\( m \)-linear constraints). This polyhedral structure is fundamental to the hardness proof, hence, it does not extend to our uncertainty sets \( \mathcal{U}_p \) for \( p > 1 \).  For the case of \( \mathcal{U}_1 \), the IP reduction does apply, but since \( \mathcal{U}_1 \) is defined by a single global constraint (\(\|P - \hat{P}_1\|_1 \leq \beta\)), it forces the IP to have only one simple constraint which is efficiently solvable. A detailed discussion can be found in Appendix \ref{app:sec:IP2RMDP}.

We conclude that \(L_p\)-robust MDPs are potentially tractable. A key insight is that a non-rectangular uncertainty set \( \mathcal{U}_p \) can be expressed as a union of \( \texttt{sa} \)-rectangular sets \( \mathcal{U}^{\texttt{sa}}_p(b) \) with varying radius vectors \( b \), each of which can be solved more easily on its own. 
\begin{figure}
    \centering
\includegraphics[width=0.5\linewidth]{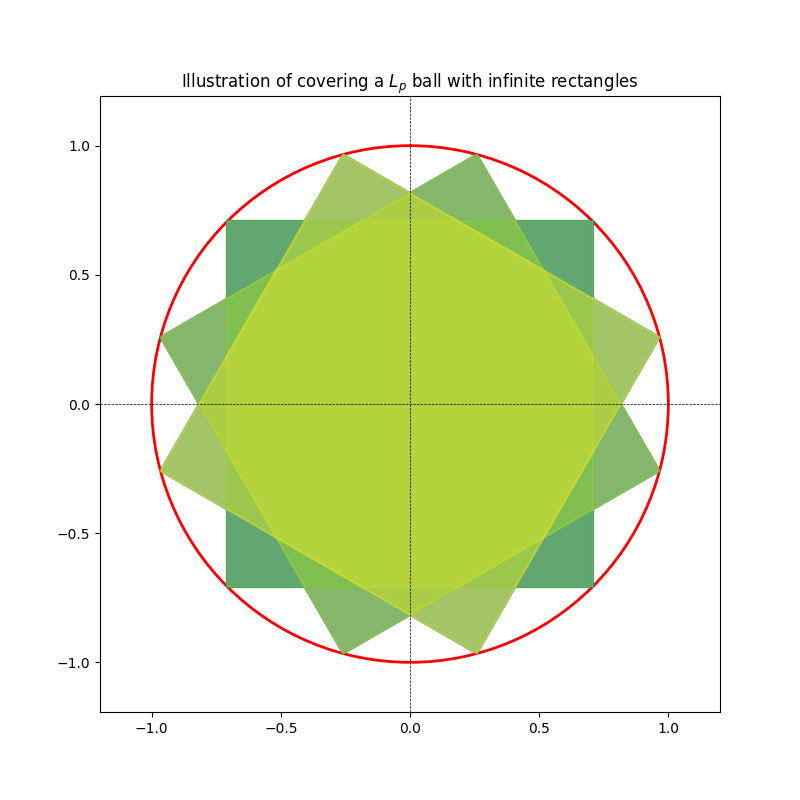}
    \caption{Illustration of Proposition \ref{main:rs:Set:sa2nr}: N-dimensional sphere can be written as infinite union of n-dimenssional inscribing cubes.}
    \label{fig:main:rs:Set:sa2nr}
\end{figure}

\begin{proposition}\label{main:rs:Set:sa2nr} Non-rectangular uncertainty $\Uc_p$ can be written as infinite union of \texttt{sa}-rectangular sets $\Uc^{sa}_p$, as
\[ \Uc_{p} = \bigcup_{b\in\mathcal{B}} \Uc^{\texttt{sa}}_p(b),\]
where $\mathcal{B} =\{b\in\R_+^{\St \times \A}\mid  \norm{b}_p\leq \beta\} $. Note that  all of them share the same nominal kernel $\hat{P}$.
\end{proposition}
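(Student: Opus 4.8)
The plan is to establish the set identity by proving the two inclusions $\mathcal{U}_p \subseteq \bigcup_{b\in\mathcal{B}} \mathcal{U}^{\texttt{sa}}_p(b)$ and $\bigcup_{b\in\mathcal{B}} \mathcal{U}^{\texttt{sa}}_p(b) \subseteq \mathcal{U}_p$ separately, where $\mathcal{U}^{\texttt{sa}}_p(b)$ denotes the $\texttt{sa}$-rectangular $L_p$ set with per-pair radius vector $b$ in place of $\beta$. The engine behind both directions is the elementary fact that the entrywise $L_p$ norm on $\R^{\St\times\A\times\St}$ decomposes over state--action pairs: for any kernel $P$,
\[
\lVert P-\hat{P}\rVert_p^p \;=\; \sum_{(s,a)} \sum_{s'} \bigl\lvert P(s'\mid s,a)-\hat{P}(s'\mid s,a)\bigr\rvert^p \;=\; \sum_{(s,a)} \lVert P_{sa}-\hat{P}_{sa}\rVert_p^p ,
\]
so that the single global budget $\lVert P-\hat{P}\rVert_p\le\beta$ is exactly the statement that the vector of row-wise deviations $b:=\bigl(\lVert P_{sa}-\hat{P}_{sa}\rVert_p\bigr)_{(s,a)}$ satisfies $\lVert b\rVert_p\le\beta$, i.e. $b\in\mathcal{B}$. (For $p=\infty$ the identical reasoning applies with sums replaced by maxima.) I also note up front that the unity-sum-rows constraint $\sum_{s'}P(s'\mid s,a)=1$ appears verbatim in the definitions of both $\mathcal{U}_p$ and every $\mathcal{U}^{\texttt{sa}}_p(b)$, so it is simply carried along and plays no active role.

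For the forward inclusion I would take $P\in\mathcal{U}_p$, set $b_{sa}:=\lVert P_{sa}-\hat{P}_{sa}\rVert_p$, and observe that $b\in\R_+^{\St\times\A}$ with $\lVert b\rVert_p=\lVert P-\hat{P}\rVert_p\le\beta$ by the displayed identity, hence $b\in\mathcal{B}$; moreover $P\in\mathcal{U}^{\texttt{sa}}_p(b)$ since each row deviation meets its own budget $b_{sa}$ (with equality) and the simplex constraints hold, so $P$ lies in the union. Conversely, given any $b\in\mathcal{B}$ and any $P\in\mathcal{U}^{\texttt{sa}}_p(b)$, the per-row bounds $\lVert P_{sa}-\hat{P}_{sa}\rVert_p\le b_{sa}$ feed into the identity to give $\lVert P-\hat{P}\rVert_p^p\le\sum_{(s,a)}b_{sa}^p=\lVert b\rVert_p^p\le\beta^p$, and again the simplex constraints transfer directly, so $P\in\mathcal{U}_p$. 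Taking the union over $b\in\mathcal{B}$ closes this direction, and the claim that all pieces share the nominal kernel $\hat{P}$ is immediate from the construction.

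There is no genuine obstacle here: the statement is essentially a book-keeping identity for the $\ell_p$ norm of a block-structured vector, and the ``hard part'' is merely being precise about conventions. The two points deserving care are (i) fixing that $\lVert\cdot\rVert_p$ applied to the kernel means the flat entrywise norm over all $(s,a,s')$ triples, as opposed to any mixed or operator norm --- this is precisely what makes the decomposition valid --- and (ii) handling $p=\infty$ with the obvious substitution of $\max$ for $\sum$. It is also worth remarking that the index set $\mathcal{B}$ is uncountable, which is why the union is truly infinite, and that for fixed $b$ the set $\mathcal{U}^{\texttt{sa}}_p(b)$ is the standard rectangular ball whose robust return is available in closed form --- this is exactly the structural feature the subsequent dual derivation will exploit.
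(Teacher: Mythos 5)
Your proof is correct and follows essentially the same route as the paper's: the paper's appendix proof is a chain of set equalities that rewrites $\lVert P-\hat{P}\rVert_p^p$ as $\sum_{s,a}\lVert P_{sa}-\hat{P}_{sa}\rVert_p^p$, introduces $b_{sa}$ as the row-wise deviations, and passes from equality to the inequality defining $\mathcal{U}^{\texttt{sa}}_p(b)$ before taking the union over $b\in\mathcal{B}$ --- exactly the content of your two inclusions with the choice $b_{sa}=\lVert P_{sa}-\hat{P}_{sa}\rVert_p$. Your explicit handling of the norm convention and the $p=\infty$ case is a minor added precision, but the underlying argument is the same.
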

The proof of the above result intuitively generalizes the idea that a circle (or \(n\)-dimensional sphere) can be covered by an inscribed square (or \(n\)-dimensional rectangles) touching its boundaries and a continuum of its rotated versions, as shown in Figure \ref{fig:main:rs:Set:sa2nr}. This offers a significant simplification to the problem at hand, as it implies that non-rectangular policy evaluation (difficult) can be decomposed into \(\texttt{sa}\)-rectangular uncertainty sets (easier) as:
\begin{align}\label{main:eq:rr:BU}
J^\pi_{\Uc_p} = \min_{b \in \mathcal{B}} \min_{P \in \Uc^{\texttt{sa}}_p(b)} J^\pi_P.
\end{align}
In essence, we have reduced a challenging problem into an infinite number of simpler ones. However, our task is not complete yet. While there exists a closed-form expression for \(J^\pi_{\Uc^{\texttt{sa}}_p} = J^\pi - \sum_{s,a} d^\pi(s,a) b_{sa} \sigma_q(v^\pi_{\Uc^{\texttt{sa}}_p})\), where \(\sigma_q(v^\pi_{\Uc^{\texttt{sa}}_p})\) represents the \(q\)-generalized standard deviation (GSTD) of the robust value function as defined in \cite{LpRMDP}, this formulation is still impractical, as  \(\max_{b\in\B}\sum_{s,a} d^\pi(s,a) b_{sa}\sigma_q(v^\pi_{\Uc^{\texttt{sa}}_p(b)})\) remains computationally challenging.

To address this issue, we turn to the dual formalism developed in the next section.

\subsection{Dual Formulation of Robust MDPs}
In this section, we derive, for the first time, a dual formulation for robust MDPs. While it is more complex than the dual formulation for non-robust MDPs and applies specifically to \(L_p\)-bounded uncertainty sets, it lays the groundwork for all the subsequent results in the paper.

From \cite{LpPgRMDP}, we know \texttt{sa}-rectangular worst-case kernel \(P^\pi_{\Uc^{\texttt{sa}}_p(b)}= \hat{P} - b k^T\) is a rank-one perturbation of the nominal kernel, where $k \in \K :=\{k|\mid \norm{k}_p\leq 1, \mathbf{1}^\top k = 0 \}$. Hence, it is enough for the adversary to focus on the rank-one perturbations, allowing us to  rewrite \eqref{main:eq:rr:BU} as 
\[J^\pi_{\Uc_p} = \min_{b \in \mathcal{B}} \min_{k\in\K} J^\pi_{\hat{P}-bk^\top} = \min_{b \in \mathcal{B}} \min_{k\in\K} \mu^\top D^\pi_{\hat{P}-bk^\top}R^\pi,
\]
where the last equality comes from  $J^\pi_P = \mu^TD^\pi_PR^\pi$. Further, as shown in Lemma 4.4 of \cite{LpPgRMDP}, applying the Sherman–Morrison formula \cite{ShermanMorrisonFormula} (see Proposition \ref{app:bg:ShermanMorrisonFormula}), the robust return can be expressed as:  
\[
J^\pi_{\Uc_p} = \min_{b \in \mathcal{B}, k \in \mathcal{K}} \Big[\mu^\top D^\pi R^\pi - \gamma \mu^\top D^\pi b^\pi \frac{k^\top D^\pi R^\pi}{1 + \gamma k^\top D^\pi b^\pi} \Big],
\]
where \(b^\pi_s := \sum_{a} \pi(a|s) b_{sa}\). The following result presents a more compact and interpretable form of this expression.

\begin{lemma}\label{main:rs:RPE:dual}
The robust return can be expressed as:
\[
J^\pi_{\Uc_p} = J^\pi - \gamma \max_{b \in \mathcal{B}, k \in \mathcal{K}} \frac{\innorm{k, v^\pi_R} \innorm{d^\pi, b^\pi}}{1 + \gamma \innorm{k, v^\pi_b}},
\]
where \(v^\pi_b = D^\pi b^\pi\) represents the value function with uncertainty radius \(b\) as the reward vector.
\end{lemma}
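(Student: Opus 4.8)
## Proof Proposal

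The plan is to start from the expression already derived just before the lemma,
\[
J^\pi_{\Uc_p} = \min_{b \in \mathcal{B}, k \in \mathcal{K}} \Big[\mu^\top D^\pi R^\pi - \gamma \mu^\top D^\pi b^\pi \frac{k^\top D^\pi R^\pi}{1 + \gamma k^\top D^\pi b^\pi} \Big],
\]
and simply rewrite each factor in the second term using the shorthands of the paper. First I would recognize $\mu^\top D^\pi R^\pi = J^\pi$, pulling that constant out of the minimization. Then, since $D^\pi R^\pi = v^\pi$ and $D^\pi b^\pi = v^\pi_b$ by definition, the numerator $\mu^\top D^\pi b^\pi \cdot (k^\top D^\pi R^\pi)$ becomes $\innorm{\mu, v^\pi_b}\,\innorm{k, v^\pi}$, and the denominator $1+\gamma k^\top D^\pi b^\pi$ becomes $1+\gamma\innorm{k,v^\pi_b}$. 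Note that in the lemma statement the numerator is written as $\innorm{k,v^\pi_R}\innorm{d^\pi,b^\pi}$, so I would also use $\mu^\top D^\pi = d^\pi$ (the definition of the occupation measure) to turn $\innorm{\mu,v^\pi_b} = \mu^\top D^\pi b^\pi$ into $\innorm{d^\pi,b^\pi}$, and read $v^\pi_R$ as the value function $v^\pi = D^\pi R^\pi$ with reward $R^\pi$ — i.e. the same object, just written to emphasize the reward vector in parallel with $v^\pi_b$. This is purely notational bookkeeping via $D^\pi R^\pi = v^\pi_R$, $D^\pi b^\pi = v^\pi_b$, $\mu^\top D^\pi = d^\pi$.

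Second, I would convert the $\min$ into a $\max$: since $J^\pi$ is a constant w.r.t. $(b,k)$,
\[
\min_{b,k} \Big[ J^\pi - \gamma \,\frac{\innorm{k,v^\pi_R}\innorm{d^\pi,b^\pi}}{1+\gamma\innorm{k,v^\pi_b}} \Big]
= J^\pi - \gamma \max_{b,k}\, \frac{\innorm{k,v^\pi_R}\innorm{d^\pi,b^\pi}}{1+\gamma\innorm{k,v^\pi_b}},
\]
which is exactly the claimed identity. The only subtlety worth a sentence is sign/well-definedness of the quotient: one must check that $1+\gamma\innorm{k,v^\pi_b} > 0$ over the feasible region $b\in\B$, $k\in\K$, so that the $\min\leftrightarrow\max$ swap and the division are legitimate. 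This follows because $P^\pi_{\Uc^{\texttt{sa}}_p(b)} = \hat P - bk^\top$ is a valid transition kernel for every $b\in\B$ (the radius $\beta$ is assumed small enough), hence $I - \gamma(\hat P - bk^\top)^\pi$ is invertible and the Sherman–Morrison denominator $1+\gamma k^\top D^\pi b^\pi$ cannot vanish; combined with continuity and the fact that it equals $1>0$ at $b=0$, it stays strictly positive throughout $\B\times\K$.

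I do not anticipate a genuine obstacle here — the content of the lemma is the change of variables already performed in the preceding displays (the $\Uc_p = \bigcup_b \Uc^{\texttt{sa}}_p(b)$ decomposition of Proposition~\ref{main:rs:Set:sa2nr}, the rank-one form of the adversary from \cite{LpPgRMDP}, and Sherman–Morrison), so the proof is a short sequence of substitutions plus the constant-extraction step. If anything needs care, it is stating precisely that $v^\pi_R$ denotes $D^\pi R^\pi$ (so that the reader does not confuse it with a reward-robust value function) and justifying the positivity of the denominator as above; both are one-liners given the standing small-$\beta$ assumption.
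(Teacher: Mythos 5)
Your proposal is correct and follows essentially the same route as the paper: the appendix proof likewise combines the $\Uc_p=\bigcup_b\Uc^{\texttt{sa}}_p(b)$ decomposition, the rank-one adversary, and Sherman–Morrison to get the displayed expression, then pulls out $J^\pi$ and merges the minimization over $b$ with the maximization over $k$, with the remaining work being exactly the notational substitutions $D^\pi R^\pi=v^\pi_R$, $D^\pi b^\pi=v^\pi_b$, $\mu^\top D^\pi=d^\pi$. Your added remark on the strict positivity of $1+\gamma\innorm{k,v^\pi_b}$ matches the paper's helper proposition (which argues it via finiteness of $J^\pi_{\hat P-bk^\top}$), so there is no gap.
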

For the first time, the above result expresses the robust return in terms of the nominal return \(J^\pi\) and a penalty term involving only nominal values (\(d^\pi\), \(v^\pi_R = v^\pi\), and \(v^\pi_b\)). Notably, the denominator term \(1 + \gamma \innorm{k, v^\pi_b}\) is strictly positive (see appendix for details).

In the subsequent subsections, we delve deeper into evaluating this penalty term and analyzing the nature of the optimal \((k, b)\) for a given policy \(\pi\), revealing the adversary.

Additionally, by maximizing the robust return \(J^\pi_{\Uc_p}\) over policies, we derive a novel dual formulation, as stated below.

\begin{theorem} The optimal robust return is the solution to  
\[J^*_{\Uc_p} = \max_{D\in \mathcal{D} }\min_{k\in\mathcal{K},b\in\mathcal{B}}\Bigm[\mu^TD R -\gamma \mu^TDb \frac{k^TD R}{1 +\gamma  k^TD b}\Bigm]\]
where $\mathcal{D} = \bigm\{D^\pi H^\pi \mid \pi \in \Pi \bigm\}$  and  $H^\pi R := R^\pi, D^\pi = (I-P^\pi)^{-1} $.
\end{theorem}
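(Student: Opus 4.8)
The plan is to reduce the statement to the closed-form expression for $J^\pi_{\Uc_p}$ obtained just before Lemma~\ref{main:rs:RPE:dual} and then change variables from policies to occupancy-type matrices. Start from the definition $J^*_{\Uc_p} = \max_{\pi\in\Pi} J^\pi_{\Uc_p}$ and insert
\[
J^\pi_{\Uc_p} = \min_{b\in\mathcal{B},\,k\in\mathcal{K}}\Big[\mu^\top D^\pi R^\pi - \gamma\,\mu^\top D^\pi b^\pi\,\tfrac{k^\top D^\pi R^\pi}{1+\gamma\,k^\top D^\pi b^\pi}\Big],
\]
so that $J^*_{\Uc_p} = \max_{\pi}\,\min_{b,k}\,[\,\cdots(\pi)\,]$. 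The inner minimum is over the compact sets $\mathcal{B},\mathcal{K}$ and the objective is continuous there because $1+\gamma\,k^\top D^\pi b^\pi = 1+\gamma\innorm{k,v^\pi_b}>0$, as noted after Lemma~\ref{main:rs:RPE:dual}; hence the minimum is attained. No minimax exchange is needed, since the outer maximum over policies is never swapped with this inner minimum — it is only reindexed.

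The one substantive step is to show that the bracketed integrand depends on $\pi$ only through the single matrix $D^\pi H^\pi$. Here $H^\pi$ is the policy-averaging operator, i.e.\ the $\St\times(\St\times\A)$ matrix with $(H^\pi x)(s)=\sum_a\pi(a\mid s)\,x(s,a)$; by construction $H^\pi R = R^\pi$ and $H^\pi b = b^\pi$ for the state-action radius vector $b\in\mathcal{B}$ (recall $b^\pi_s=\sum_a\pi(a\mid s)b_{sa}$). Setting $D := D^\pi H^\pi$ and substituting these identities yields, simultaneously, $\mu^\top D^\pi R^\pi = \mu^\top D R$, $\mu^\top D^\pi b^\pi = \mu^\top D b$, $k^\top D^\pi R^\pi = k^\top D R$, and $k^\top D^\pi b^\pi = k^\top D b$. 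Therefore the integrand equals
\[
f(D,b,k) := \mu^\top D R - \gamma\,\mu^\top D b\,\tfrac{k^\top D R}{1+\gamma\,k^\top D b},
\]
which depends on $\pi$ through $D^\pi H^\pi$ alone, every other ingredient ($\mu$, $R$, and the sets $\mathcal{B},\mathcal{K}$) being fixed.

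To finish I would invoke the relabeling principle for suprema: since $\mathcal{D}$ is by definition exactly $\{D^\pi H^\pi : \pi\in\Pi\}$, the map $\pi\mapsto D^\pi H^\pi$ is a surjection onto $\mathcal{D}$, so
\[
J^*_{\Uc_p} = \max_{\pi\in\Pi}\;\min_{b\in\mathcal{B},k\in\mathcal{K}} f(D^\pi H^\pi,b,k) = \max_{D\in\mathcal{D}}\;\min_{k\in\mathcal{K},b\in\mathcal{B}} f(D,b,k),
\]
which is the claimed identity.

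The step that I expect to need the most care is the bookkeeping in the middle paragraph: one must verify that the bracketed term genuinely factors through the composite $D^\pi H^\pi$ and retains no residual dependence on $\pi$, $P^\pi$, or $D^\pi$ in isolation. Once that is checked, the theorem is immediate. It is worth a remark that, in contrast to the non-robust dual where the feasible set is the convex occupancy polytope, here $\mathcal{D}=\{D^\pi H^\pi : \pi\in\Pi\}$ is the (generally nonconvex) image of the policy class, so the formula is an exact reparametrization rather than a convex relaxation; any further convexification would be a separate matter.
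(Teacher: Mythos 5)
Your proposal is correct and follows essentially the same route as the paper: it takes the penalty form of $J^\pi_{\Uc_p}$ from Lemma \ref{main:rs:RPE:dual} (itself obtained from the rank-one worst-kernel structure plus Sherman--Morrison), maximizes over $\pi$, and observes that the objective depends on $\pi$ only through $D^\pi H^\pi$ (via $H^\pi R = R^\pi$, $H^\pi b = b^\pi$), so the outer maximization can be reindexed over $\mathcal{D}=\{D^\pi H^\pi:\pi\in\Pi\}$ without any minimax exchange. Your explicit bookkeeping of the four identities and the remark that this is an exact reparametrization of a possibly nonconvex set (not a relaxation) match the paper's argument, just stated more carefully.
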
 
Notably, the dual formulations for the \texttt{sa}-rectangular and \texttt{s}-rectangular cases differ in their definitions of \(\mathcal{B}\): for the \texttt{sa}-rectangular case, \(\mathcal{B} = \{\beta\}\), whereas for the \texttt{s}-rectangular case, \(\mathcal{B} = \{b \in \mathbb{R}^{\mathcal{S} \times \mathcal{A}} \mid \|b_s\|_p \leq \beta_s\}\), as detailed in the appendix.

The result above formulates the dual of robust MDPs as a min-max problem, which is  insightful and significant in itself. However, the set \(\mathcal{D}\) may be non-convex, as suggested by Figure \ref{fig:main:pca_projections_s3_a2} (details in the appendix), making the problem non-convex. We leave this as an open question for future work: 
\begin{quote}
    How can we effectively exploit the above dual form for more insights and better algorithmic design?
\end{quote}

In this paper, we first develop a method to approximate the worst-case kernel for robust policy evaluation, as discussed in the next section. We then derive policy gradient methods to facilitate policy improvement.




\begin{figure}
    \centering
\includegraphics[width=8cm, height=4cm]{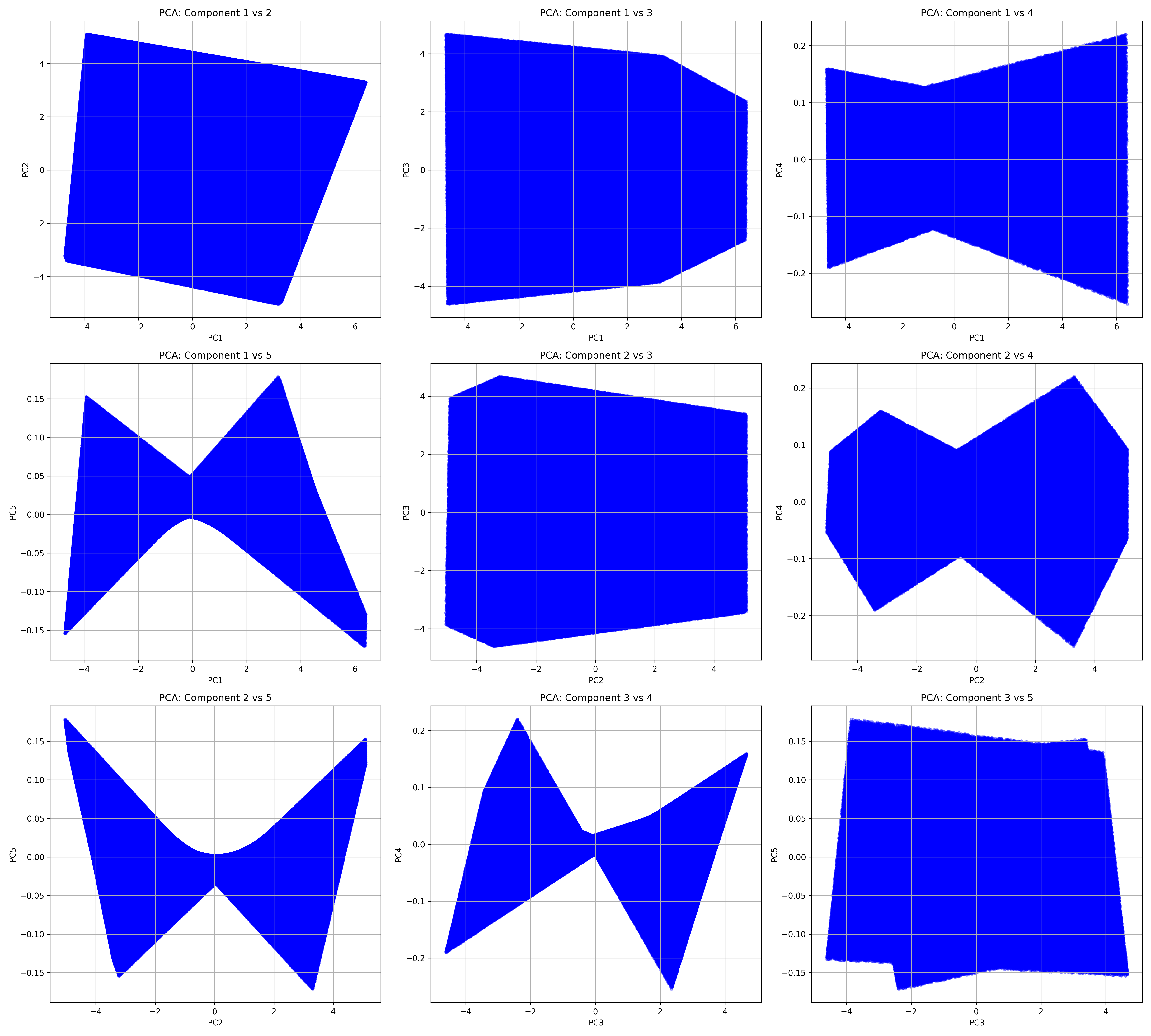}
    \caption{Projections of set $\D$ along principal components, for \(S=3, A=2\) with $10$ millions samples. This figure strongly suggests the non-convexity of the set. }
\label{fig:main:pca_projections_s3_a2}
\end{figure}
    

\subsection{Robust Policy Evaluation}
\label{subsection: robust_policy evaluation}
In this section, we propose an algorithm for robust policy evaluation and establish its performance guarantees. The following result states that the robust return can be computed using the function:
\[
F(\lambda) = \max_{b \in \mathcal{B}} \|E^\pi_\lambda b\|_q,
\]
where $
E^\pi_\lambda := \gamma \Big(I - \frac{\mathbf{1}\mathbf{1}^\top}{S}\Big) \Big[D^\pi R^\pi \mu^\top D^\pi - \lambda D^\pi\Big] H^\pi,$ and \( H^\pi R := R^\pi \) consists of easily computable quantities.
\begin{lemma}\label{main:rs:RPE:binSearch}
The robust return can be expressed as:
\[J^\pi_{\mathcal{U}_p} = J^\pi - \lambda^*,\]
where the penalty \(\lambda^*\) is a fixed point of \(F(\lambda)\). Furthermore, \(\lambda^*\) can be found via binary search since:
\[
F(\lambda) > \lambda \quad \text{if and only if} \quad \lambda > \lambda^*.
\]
\end{lemma}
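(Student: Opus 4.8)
The plan is to start from the dual form of Lemma~\ref{main:rs:RPE:dual}, namely
\[
J^\pi_{\Uc_p} = J^\pi - \gamma \max_{b \in \mathcal{B},\, k \in \mathcal{K}} \frac{\innorm{k, v^\pi_R}\innorm{d^\pi, b^\pi}}{1 + \gamma \innorm{k, v^\pi_b}},
\]
and to re-parametrize the penalty by fixing the value \(\lambda = \gamma \innorm{k, v^\pi_b}\) of the denominator offset. Writing \(\lambda^\ast\) for the optimal value of the penalty term, I would first argue that the inner maximization decouples into an outer search over \(\lambda\) and an inner linear-in-\(k\) problem: once the constraint \(\gamma\innorm{k,v^\pi_b} = \lambda - \gamma\innorm{k,\cdot}\)\dots\ more precisely, once we move the denominator into a constraint, the objective \(\gamma\innorm{k,v^\pi_R}\innorm{d^\pi,b^\pi}/(1+\lambda)\) becomes linear in \(k\), so maximizing over \(k \in \mathcal{K} = \{k : \|k\|_p \le 1,\ \mathbf 1^\top k = 0\}\) is a norm-dual computation. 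The constraint \(\mathbf 1^\top k = 0\) is handled by projecting with \(I - \tfrac{\mathbf 1 \mathbf 1^\top}{S}\), and then \(\max_{\|k\|_p\le 1} \innorm{k, w} = \|w\|_q\) by Hölder duality. Carrying the \(b\)-dependence through, and recognizing that \(\innorm{d^\pi, b^\pi}\), \(v^\pi_R\), \(v^\pi_b = D^\pi b^\pi\) are all linear in \(b\) via \(H^\pi, D^\pi\), collects exactly into the operator \(E^\pi_\lambda = \gamma\big(I - \tfrac{\mathbf 1\mathbf 1^\top}{S}\big)\big[D^\pi R^\pi \mu^\top D^\pi - \lambda D^\pi\big]H^\pi\), so that the value of the \((k,b)\)-maximization at denominator level \(1+\lambda\) equals \(F(\lambda) = \max_{b\in\mathcal{B}}\|E^\pi_\lambda b\|_q\), up to the \(1/(1+\lambda)\) normalization that is exactly absorbed by the fixed-point condition. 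This gives the characterization \(J^\pi_{\Uc_p} = J^\pi - \lambda^\ast\) with \(\lambda^\ast\) a fixed point \(F(\lambda^\ast) = \lambda^\ast\).

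Next I would establish the monotonicity statement \(F(\lambda) > \lambda \iff \lambda > \lambda^\ast\), which legitimizes binary search. The cleanest route is to show that \(\lambda \mapsto F(\lambda) - \lambda\) is strictly decreasing (or at least that it crosses zero exactly once, from positive to negative). Since \(E^\pi_\lambda b\) is affine in \(\lambda\) for each fixed \(b\), the map \(\lambda \mapsto \|E^\pi_\lambda b\|_q\) is convex in \(\lambda\); taking a supremum over \(b \in \mathcal{B}\) preserves convexity, so \(F\) is convex, hence \(F(\lambda) - \lambda\) is convex. Convexity alone does not give a single crossing, so I would supplement it with the sign behavior at the endpoints of the relevant interval \(\lambda \in [0, J^\pi]\) (or wherever the denominator \(1 + \gamma\innorm{k, v^\pi_b}\) stays positive, which the excerpt notes holds throughout): at \(\lambda = 0\) one checks \(F(0) \ge 0\) with strictness unless the uncertainty is degenerate, and as \(\lambda\) grows the \(-\lambda D^\pi\) term cannot make \(F\) grow as fast as \(\lambda\), giving \(F(\lambda) < \lambda\) eventually. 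Combined with the fixed-point identity \(F(\lambda^\ast) = \lambda^\ast\) coming from the first part, convexity then forces the equivalence: \(F(\lambda) > \lambda\) precisely for \(\lambda > \lambda^\ast\). (One must be slightly careful about whether the crossing is from above or below; tracking which side the derivative sits on at \(\lambda^\ast\), using that the maximizing \((k,b)\) are attained, pins this down.)

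The main obstacle I anticipate is the bookkeeping that turns the coupled fraction \(\innorm{k,v^\pi_R}\innorm{d^\pi,b^\pi}/(1+\gamma\innorm{k,v^\pi_b})\) into the clean single-norm quantity \(\|E^\pi_\lambda b\|_q\): one has to argue that fixing the denominator value is \emph{without loss of optimality}, i.e.\ that ranging \(\lambda\) over its feasible interval and optimizing \(b\) (with \(k\) solved in closed form by Hölder) recovers the true joint maximum, and that the scalar \(\innorm{d^\pi, b^\pi}\) in the numerator — which is also linear in \(b\) — gets folded into \(E^\pi_\lambda\) correctly via \(D^\pi R^\pi \mu^\top D^\pi\) rather than double-counted against \(v^\pi_R = D^\pi R^\pi\). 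The appearance of \(\mu^\top D^\pi\) on the right of that rank-one term is the tell-tale that the \(\innorm{d^\pi, \cdot} = \innorm{\mu^\top D^\pi H^\pi, \cdot}\) factor has been merged in. Verifying this identity, and simultaneously checking that the \(\mathbf 1^\top k = 0\) constraint interacts correctly with the projector \(I - \mathbf 1\mathbf 1^\top/S\) (so that replacing \(\max_{\|k\|_p \le 1,\, \mathbf 1^\top k = 0}\innorm{k, w}\) by \(\|(I - \mathbf 1\mathbf 1^\top/S) w\|_q\) is valid — which is really the statement \(\sigma_q\)-type duality from \cite{LpRMDP}), is the technical heart; once it is in place, the fixed-point reformulation and the binary-search monotonicity are comparatively routine convex-analysis arguments.
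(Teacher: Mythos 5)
You have the right skeleton---start from Lemma \ref{main:rs:RPE:dual}, eliminate $k$ by H\"older duality against the constraint $\mathbf{1}^\top k=0$ (which is where the projector $I-\mathbf{1}\mathbf{1}^\top/S$ and the operator $E^\pi_\lambda$ come from), and reduce everything to a one-dimensional search in $\lambda$---but the re-parametrization at the heart of your argument is mis-identified, and it is exactly the step you flag as unverified. In the paper's proof, $\lambda$ is \emph{not} the denominator offset $\gamma\innorm{k,v^\pi_b}$; it is the value of the penalty itself, and $F(\lambda)-\lambda$ is the Dinkelbach parametric objective. Writing $g(k,b)=\gamma\innorm{k,v^\pi_R}\innorm{d^\pi,b^\pi}$ and $h(k,b)=1+\gamma\innorm{k,v^\pi_b}>0$, one has
\[
g-\lambda h \;=\; k^\top\Big[\gamma\big(D^\pi R^\pi\mu^\top D^\pi-\lambda D^\pi\big)\Big]b^\pi\;-\;\lambda,
\]
so that $\max_{k\in\mathcal{K},\,b\in\mathcal{B}}(g-\lambda h)=F(\lambda)-\lambda$ after the H\"older step: the $-\lambda D^\pi$ term is the cross-term $-\lambda\cdot\gamma k^\top D^\pi b^\pi$, and the loose constant $-\lambda$ is precisely what turns ``parametric objective equals zero'' into the fixed-point equation $F(\lambda^*)=\lambda^*$. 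If you instead fix the denominator level $\gamma\innorm{k,v^\pi_b}=\lambda$ as you propose, the inner problem acquires an extra linear equality constraint on $k$, the maximization over $k$ is no longer a plain dual-norm computation, and nothing forces the multiplier of that constraint to equal $\lambda$---so the matrix $D^\pi R^\pi\mu^\top D^\pi-\lambda D^\pi$ does not fall out, and the resulting function of $\lambda$ has no reason to admit $\lambda^*$ as a fixed point.

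The Dinkelbach view also makes your second paragraph unnecessary and exposes a sign error. Since $h>0$ on the whole feasible set, $F(\lambda)-\lambda=\max(g-\lambda h)\ge 0$ iff $g(x)/h(x)\ge\lambda$ for some feasible $x$, iff $\lambda\le\lambda^*$ (this is Proposition \ref{app:rs:fracProg:opt:cond}); no convexity of $F$, no endpoint analysis, and no single-crossing argument is needed---which is fortunate, because convexity of $F(\lambda)-\lambda$ plus a fixed point does not by itself rule out a second crossing, and your claim that $F(\lambda)<\lambda$ eventually would still need the radius assumption on $\beta$ to control the slope $\gamma\norm{(I-\mathbf{1}\mathbf{1}^\top/S)D^\pi H^\pi b}_q$. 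Note that the resulting equivalence is $F(\lambda)>\lambda\iff\lambda<\lambda^*$, which is what your own endpoint reasoning ($F(0)\ge 0$ and $F(\lambda)<\lambda$ for large $\lambda$) implies and what the bisection update in Algorithm \ref{alg:RPE} uses (the lower limit is raised when $F(\lambda_n)>\lambda_n$); your concluding sentence, like the direction printed in the lemma statement itself, has the inequality reversed.
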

The proof of this result is provided in Appendix (Lemma \ref{app:rs:RPE:binSearch}). Further, the bisection property of $F$ established in the result, directly implies the linear convergence rate of Algorithm \ref{alg:RPE}, as stated below.
\begin{algorithm}
\caption{Binary Search for Robust Policy Evaluation}\label{alg:RPE}
\begin{algorithmic} [1]
\STATE \textbf{Initialize:} Upper limit  
\(\lambda_{u} = \frac{1}{1-\gamma},\)  lower limit
\(\lambda_{l} = 0 \)  
\WHILE{ not converged: $n = n+1$ }
    \STATE \textbf{Bisection value:}  \(\lambda_{n} = (\lambda_{l} + \lambda_{u}) / 2\)
    \STATE \textbf{Bisection:} \(\lambda_{l} = \lambda_n\) if  \( F(\lambda_n) > \lambda_n \),   else \(\lambda_{u} = \lambda_n\). 
    \STATE \textbf{ Update robust return:} $J_{n} =  J^\pi - \lambda_{n} $.
\ENDWHILE
\end{algorithmic}
\end{algorithm}
\begin{theorem} Algorithm \ref{alg:RPE} converges linearly, \ie
\[
J_n - J^\pi_{\mathcal{U}_p} \leq O(2^{-n}).
\]
\end{theorem}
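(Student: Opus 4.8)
The plan is to show that Algorithm~\ref{alg:RPE} is a standard bisection on the interval $[\lambda_l,\lambda_u] = [0,\tfrac{1}{1-\gamma}]$ whose target root is $\lambda^*$, and that the sign test used to decide which half to keep is exactly the one justified by Lemma~\ref{main:rs:RPE:binSearch}. First I would observe that $\lambda^* \in [0,\tfrac{1}{1-\gamma}]$: nonnegativity of $\lambda^*$ follows because the robust return cannot exceed the nominal return, $J^\pi_{\mathcal{U}_p} \le J^\pi$, so $\lambda^* = J^\pi - J^\pi_{\mathcal{U}_p} \ge 0$; the upper bound follows because both $J^\pi$ and $J^\pi_{\mathcal{U}_p}$ lie in an interval of length at most $\tfrac{1}{1-\gamma}$ once rewards are normalized to $[0,1]$ (more precisely, $\lambda^* = \gamma\max_{b,k}\frac{\innorm{k,v^\pi_R}\innorm{d^\pi,b^\pi}}{1+\gamma\innorm{k,v^\pi_b}}$ from Lemma~\ref{main:rs:RPE:dual} is bounded by the diameter of the value range, which is $O(\tfrac{1}{1-\gamma})$). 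So the initial bracket contains $\lambda^*$.

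Next I would verify the loop invariant: at the start of iteration $n$, $\lambda^* \in [\lambda_l,\lambda_u]$. This is where Lemma~\ref{main:rs:RPE:binSearch} does the work. At the midpoint $\lambda_n = (\lambda_l+\lambda_u)/2$, the algorithm checks whether $F(\lambda_n) > \lambda_n$. By the lemma, $F(\lambda_n) > \lambda_n$ is equivalent to $\lambda_n > \lambda^*$, so in that case $\lambda^*$ lies in the left half $[\lambda_l,\lambda_n]$ and the update $\lambda_u \leftarrow \lambda_n$ preserves the invariant; otherwise $\lambda_n \le \lambda^*$, so $\lambda^*$ lies in $[\lambda_n,\lambda_u]$ and the update $\lambda_l \leftarrow \lambda_n$ preserves it. (The boundary case $F(\lambda_n)=\lambda_n$, i.e. $\lambda_n=\lambda^*$, is harmless — either branch keeps $\lambda^*$ in the bracket.) Hence by induction the invariant holds for every $n$.

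Finally, since each iteration halves the length of the bracket, after $n$ iterations $\lambda_u - \lambda_l = \tfrac{1}{1-\gamma}\,2^{-n}$, and because both $\lambda_n$ and $\lambda^*$ lie in $[\lambda_l,\lambda_u]$ we get $|\lambda_n - \lambda^*| \le \tfrac{1}{1-\gamma}\,2^{-n}$. Therefore
\[
|J_n - J^\pi_{\mathcal{U}_p}| = |(J^\pi - \lambda_n) - (J^\pi - \lambda^*)| = |\lambda_n - \lambda^*| \le \frac{1}{1-\gamma}\,2^{-n} = O(2^{-n}),
\]
which is the claimed linear (geometric) convergence rate. The main obstacle is not the bisection bookkeeping — that is routine — but making sure the monotone/sign characterization in Lemma~\ref{main:rs:RPE:binSearch} is applied correctly at the endpoints and that $F$ is well-defined and finite on the whole bracket so that the test $F(\lambda_n) > \lambda_n$ is meaningful at every step; once that is in hand the result is immediate. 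I would also note in passing that one should confirm the per-iteration cost (evaluating $F(\lambda_n) = \max_{b\in\mathcal{B}}\|E^\pi_\lambda b\|_q$, a norm maximization over an $L_p$ ball, which is a dual-norm computation) is polynomial, so that "linearly convergent" also means "efficient," though the theorem as stated only concerns the iteration count.
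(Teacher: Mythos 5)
Your overall route is the same as the paper's: the paper treats this theorem as an immediate corollary of Lemma \ref{main:rs:RPE:binSearch}, namely an initial bracket $[0,\tfrac{1}{1-\gamma}]$ containing $\lambda^*$, a sign test at the midpoint, and halving of the bracket, giving $\abs{\lambda_n-\lambda^*}\le \tfrac{1}{1-\gamma}2^{-n}$ and hence $J_n - J^\pi_{\Uc_p} = \lambda^*-\lambda_n = O(2^{-n})$. Your preliminary bounds on $\lambda^*$ (nonnegativity from $J^\pi_{\Uc_p}\le J^\pi$, and the upper bound $\tfrac{1}{1-\gamma}$ from the reward normalization implicit in the initialization $\lambda_u=\tfrac{1}{1-\gamma}$) are exactly the ingredients the paper leaves implicit.

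There is, however, one concrete mismatch to fix: your invariant step analyzes the mirror image of Algorithm \ref{alg:RPE}. You take the equivalence of Lemma \ref{main:rs:RPE:binSearch} at face value ($F(\lambda)>\lambda \iff \lambda>\lambda^*$) and accordingly prescribe $\lambda_u\leftarrow\lambda_n$ when $F(\lambda_n)>\lambda_n$, whereas the pseudocode does the opposite and sets $\lambda_l=\lambda_n$ in that branch. The resolution is that the lemma as printed has the direction reversed: the appendix argument (Proposition \ref{app:rs:fracProg:opt:cond} applied in Lemma \ref{app:rs:RPE:binSearch}) yields $F(\lambda)\ge\lambda \iff \lambda^*\ge\lambda$, since $F(\lambda)-\lambda$ is a maximum of strictly decreasing affine functions of $\lambda$ that vanishes at $\lambda^*$; under this (correct) characterization the branch assignment in Algorithm \ref{alg:RPE} is the right one, and your bracketing invariant goes through verbatim with the two branches swapped. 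Because your two reversals cancel, your final bound is numerically correct, but as written the proof certifies a different update rule than the one in the pseudocode; you should state the monotone characterization in the correct direction and align the invariant argument with the algorithm's actual branches (also noting, as you do, that the boundary case $F(\lambda_n)=\lambda_n$ is harmless).
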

We conclude that robust evaluation can be performed efficiently with linear iteration complexity. However, each iteration requires solving \(\max_{x \in \mathcal{B}} \|Ax\|_q\) as a subroutine in Algorithm \ref{alg:RPE}. We focus specifically on the simplified case of \( p = 2 \), i.e., 
$\max_{\|x\|_2 \leq 1, \; x \succeq 0} \|Ax\|_2,$
 for which we proposed  a modified eigenvalue Algorithm \ref{alg:Ax:EH}.  This method has a time complexity of \( O(S^3 A^3) \), performs  very effectively (\ie very similar performance to numerical method scipy.minimize and order of magnitude of faster), with more details  in Appendix \ref{app:sec:NormEvaluation}.

\begin{algorithm}[ht]
\caption{Spectral method for $\max_{x\in\B} \|Ax\|_2$}
\label{alg:Ax:EH}
\begin{algorithmic}[1]
\STATE Compute eigenvector $v_i$ and eigenvalues $\lambda_i$ of $A^\top A$
\STATE WLOG let $\|v_i^+\|_2 \geq \|v_i^-\|_2$ where $
    v_i^+ = \max(v_i, 0), v_i^- = -\min(v_i, 0)$
\STATE Compute best score :
$
j = \argmax_{i}\lambda_i \langle v_i, \frac{v_i^+}{\|v_i^+\|_2} \rangle.$
\STATE \textbf{Output:} Approximate maximum value $\beta\|A \frac{v_j^+}{\|v_j^+\|_2}\|_2$.
\end{algorithmic}
\end{algorithm}

Performance of robust policy evaluation Algorithm \ref{alg:RPE}, is further validated experimentally in the later section.




\section{Revealing the Adversary}
Non-rectangular robust MDPs have been sparsely studied in the literature, and the nature of the adversary remains  unexplored. The following result provides the first insights into the adversary's behavior for non-rectangular uncertainty sets. It establishes that the worst-case transition kernel is a rank-one perturbation of the nominal kernel, similar to the case of rectangular uncertainty sets (see Proposition \ref{main:bg:worstPm}). However, its exact structure is significantly more intricate.

\begin{theorem}[Non-rectangular Worst Kernel] The worst kernel  for the policy $\pi$ and the uncertainty set $\Uc_p$ is 
\begin{align*}
     P^\pi_{\Uc_p}= \hat{P} - bk^\top, 
\end{align*}
where $(k,b)$ is a solution to $\max_{k\in\mathcal{K},b\in\mathcal{B}}\bigm[J^\pi_b\frac{\innorm{k,v^\pi_R}}{1 +\gamma  \innorm{k,v^\pi_b}}\bigm]$. 

\end{theorem}
The result shows that the adversary controls two variables, \( \beta \) and \( k \), with the objective of:  
\begin{itemize}
    \item \textbf{Maximizing the average uncertainty in the trajectories} \( J^\pi_\beta \), since the more frequently the agent visits high-uncertainty states, the greater the adversary's ability to steer it toward unfavorable states.  
    \item \textbf{Choosing the perturbation direction} \( k \) to maximize \( k^\top v^\pi_R \), forcing the agent into low-reward trajectories, while simultaneously minimizing \( k^\top v^\pi_b \), ensuring high exposure to high-uncertainty states.
\end{itemize}
These insights provide a deeper understanding of the adversary and can aid in designing more resilient robust algorithms.

\begin{tcolorbox}[colframe=black, colback=gray!10, title={\centering Message to Practitioners}]The adversary focuses solely on  rank-one perturbations of the nominal kernel, iteratively boosting its influence by pushing the agent into high-uncertainty states, then leveraging that influence to steer the agent toward low-reward trajectories, ultimately driving the agent to the lowest possible return.

\end{tcolorbox}

\section{Policy Improvement}
Once a worst kernel for policy is obtained using Algorithm \ref{alg:RPE}, we can compute the policy gradient to update the policy. Alternatively, we can use the policy gradient theorem derived in the result below. 

\begin{lemma}\label{main:rs:PGT}[Policy Gradient] Given the worst transition kernel $P^\pi_{\Uc_p} = \hat{P}-b k^\top$, the gradient is given as 
\begin{align*}&
\nabla _\pi  J^\pi_{\Uc_p} = d^\pi\circ Q^\pi_{R} - \gamma\frac{  k^{\top}v^\pi_R }{1 + \gamma k^{\top}v^\pi_b}d^\pi\circ Q^\pi_{b}\\&- \gamma\frac{  J^\pi_b( k^\top D^\pi)}{1 + \gamma k^{\top}v^\pi_b}\circ Q^\pi_{R}+\gamma^2\frac{  J^\pi_b (k^{\top}v^\pi)( k^\top D^\pi)}{(1 + \gamma k^{\top}v^\pi_b)^2}\circ Q^\pi_{b},
\end{align*}
where $(u\circ v)(s) := u(s)v(s)$.
\end{lemma}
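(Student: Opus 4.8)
The plan is to compute the gradient $\nabla_\pi J^\pi_{\Uc_p}$ by starting from the closed form established in Lemma~\ref{main:rs:RPE:dual}, namely
\[
J^\pi_{\Uc_p} = J^\pi - \gamma \frac{\innorm{k, v^\pi_R}\,\innorm{d^\pi, b^\pi}}{1 + \gamma \innorm{k, v^\pi_b}},
\]
where $(k,b)$ is the fixed worst-case pair attaining the maximum. By Danskin's theorem (envelope theorem), since $(k,b)$ is the maximizer, we may differentiate the expression treating $k$ and $b$ as constants and ignoring their dependence on $\pi$ --- this is the structural point that makes the calculation tractable and it should be stated explicitly. So the task reduces to differentiating a product/quotient of the four scalar-valued (or vector-valued) objects $J^\pi = \innorm{d^\pi, R^\pi}$, $\innorm{k, v^\pi_R} = k^\top D^\pi R^\pi$, $J^\pi_b = \innorm{d^\pi, b^\pi}$, and $\innorm{k, v^\pi_b} = k^\top D^\pi b^\pi$, all with respect to $\pi$.

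First I would recall or re-derive the elementary building blocks: for any fixed reward-type vector $r$, $\nabla_\pi \innorm{\mu, D^\pi r} = d^\pi \circ Q^\pi_r$ (the standard policy gradient theorem w.r.t.\ the nominal kernel $\hat P$, where $Q^\pi_r$ is the $Q$-function for reward $r$), and more generally for a fixed left-vector $c$, $\nabla_\pi (c^\top D^\pi r) = (c^\top D^\pi) \circ Q^\pi_r$ where the occupancy is now weighted by $c$ instead of $\mu$. The term $J^\pi = \innorm{\mu, v^\pi_R}$ contributes $d^\pi \circ Q^\pi_R$. For the penalty term, write it as $\gamma\, N(\pi)/M(\pi)$ with numerator $N = (k^\top D^\pi R^\pi)(\mu^\top D^\pi b^\pi) = \innorm{k,v^\pi_R} J^\pi_b$ and denominator $M = 1 + \gamma k^\top D^\pi b^\pi$, and apply the quotient rule $\nabla(N/M) = (M \nabla N - N \nabla M)/M^2$. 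Then $\nabla N$ by the product rule splits into $J^\pi_b \nabla\innorm{k,v^\pi_R} + \innorm{k,v^\pi_R}\nabla J^\pi_b = J^\pi_b\,(k^\top D^\pi)\circ Q^\pi_R + \innorm{k,v^\pi_R}\, d^\pi \circ Q^\pi_b$, and $\nabla M = \gamma\, (k^\top D^\pi)\circ Q^\pi_b$. Substituting and collecting the four resulting terms, matching $\innorm{k,v^\pi_R}=k^\top v^\pi_R$, $\innorm{k,v^\pi_b}=k^\top v^\pi_b$, $J^\pi_b = \mu^\top D^\pi b^\pi$, and simplifying the $M$ and $M^2$ factors, should reproduce exactly the four terms in the claimed formula.

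The main obstacle I anticipate is not the algebra of the quotient rule but two more delicate points. First, justifying the envelope-theorem step rigorously: one needs the maximizing pair $(k,b)$ to vary in a way that is either unique and differentiable, or at least that the subgradient picked via the maximizer is a valid (sub)gradient --- this requires care because $\B$ and $\K$ are $L_p$ balls with boundary constraints and the max need not have a unique argmax; the cleanest route is Danskin's theorem for directional derivatives, noting the constraint sets are compact and the objective is jointly continuous. Second, bookkeeping the distinction between gradients taken with the initial distribution $\mu$ (giving $d^\pi$) versus with the left-vector $k^\top D^\pi$ (giving the $k$-weighted occupancy $k^\top D^\pi$ appearing in the third and fourth terms) --- one must verify that the policy-gradient identity $\nabla_\pi(c^\top D^\pi r) = (c^\top D^\pi)\circ Q^\pi_r$ holds verbatim for an arbitrary fixed row-vector $c$, not just for $c = \mu$, which follows from the same $(I-\gamma P^\pi)^{-1}$ differentiation identity $\nabla_\pi D^\pi = \gamma D^\pi (\nabla_\pi P^\pi) D^\pi$ but should be spelled out. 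Once these are in place the result follows by direct substitution.
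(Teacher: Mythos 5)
Your proposal is correct and follows essentially the same route as the paper's proof: differentiate the Sherman--Morrison closed form of the return term by term via the product/quotient rule, using the policy-gradient identity $\nabla_\pi(c^\top D^\pi r^\pi) = (c^\top D^\pi)\circ Q^\pi_r$ for a general left-vector $c$, and collect the four resulting terms. Your explicit treatment of the Danskin/envelope step (fixing the optimal pair $(k,b)$ before differentiating) is in fact more careful than the paper, which states the gradient for a fixed rank-one perturbation and relies only implicitly on the subgradient relation $\partial_\pi J^\pi_{\Uc} = \nabla_\pi J^\pi_{P}\big|_{P = P^\pi_{\Uc}}$ given in its background section.
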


Note that the above result expresses robust gradient only in nominal terms, displaying the complex interplay of different parameters.

The first term $d^\pi\circ Q^\pi_R$ is a nominal policy gradient, trying to improve the policy's emphasis (weight) for high reward actions. The first part of the second term  $\gamma\frac{  k^{\top}v^\pi_R }{1 + \gamma k^{\top}v^\pi_b} = \sigma_q(v^\pi_{\Uc_p})$ is GSTD, hence, always positive, measuring the vulnerability towards the adversary actions. This scales the other term $d^\pi\circ Q^\pi_{b}$, that is policy gradient w.r.t. the reward being the uncertainty radius. To summarize, the second term discourages the policy's emphasis (weight) in high uncertainty Q-value by the amount proportional to the vulnerability.  

The last two terms are much more complex to interpret, showcasing the complexity of robust MDPs .

\begin{theorem} Robust policy gradient algorithm \ref{alg:RPG}, convergence to an $\epsilon$-optimal policy in a total of $O(\epsilon^{-8})$ iterations.
\end{theorem}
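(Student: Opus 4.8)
The plan is to combine the outer convergence rate of robust policy gradient from \cite{RPG_conv} with the per-iteration cost of the robust policy evaluation routine (Algorithm~\ref{alg:RPE}) developed above. Recall from \eqref{def:RPG:updateRule} that \cite{RPG_conv} guarantees that the iterates $\pi_k$ produced by the projected update reach an $\epsilon$-optimal policy after $K = O(\epsilon^{-4})$ \emph{outer} iterations, provided the step sizes satisfy $\eta_k = O(1/\sqrt{k})$ and at each step $k$ the policy is evaluated against a kernel $P_k$ with $J^{\pi_k}_{P_k} - J^{\pi_k}_{\Uc_p} \le \epsilon\gamma^k$. Thus the theorem reduces to: (i) checking that the hypotheses of that result hold in the $L_p$ setting --- boundedness of the robust return and its gradient (the latter given in nominal terms by Lemma~\ref{main:rs:PGT}, finite because the denominator $1+\gamma\innorm{k,v^\pi_b}$ is strictly positive), well-definedness of the Euclidean projection onto the polytope $\Pi$, and (local) smoothness of $\pi \mapsto J^\pi_{\Uc_p}$ for $\beta$ small --- all of which are immediate since $\Uc_p$ is compact and \cite{RPG_conv} applies to arbitrary compact uncertainty sets; and (ii) bounding the number of inner binary-search steps needed to furnish an $\epsilon\gamma^k$-accurate worst kernel at outer step $k$.

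For (ii): Algorithm~\ref{alg:RPE} starts from the bracket $[0,\tfrac{1}{1-\gamma}]$, which contains $\lambda^* = J^\pi - J^\pi_{\Uc_p}$, and by the bisection property of $F$ in Lemma~\ref{main:rs:RPE:binSearch} ($F(\lambda)>\lambda \iff \lambda>\lambda^*$) each iteration halves the bracket while keeping $\lambda^*$ inside, so after $n$ steps $\lvert\lambda_n-\lambda^*\rvert \le 2^{-n}/(1-\gamma)$. To convert an accurate penalty into an accurate \emph{kernel}, I would recover the maximizing pair $(b_n,k_n)$ from the subproblem $\max_{b\in\mathcal{B}}\lVert E^\pi_{\lambda_n}b\rVert_q$ defining $F(\lambda_n)$ --- via Algorithm~\ref{alg:Ax:EH} when $p=2$ --- and set $P_n=\hat P - b_n k_n^\top$; a Lipschitz argument (the entries of $E^\pi_\lambda$, hence its maximal output norm and the induced return $J^\pi_{P_n}$, are Lipschitz in $\lambda$ with constant polynomial in $(1-\gamma)^{-1}$ and $S,A$) then gives $J^\pi_{P_n}-J^\pi_{\Uc_p}\le C\cdot 2^{-n}$. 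Hence $n_k = O(\log(1/\epsilon)+k)$ inner iterations suffice at outer step $k$, the $k$-term arising because the target tolerance $\epsilon\gamma^k$ forces $2^{-n_k}\lesssim \epsilon\gamma^k$.

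Summing over the outer loop,
\[
T \;=\; \sum_{k=1}^{K} n_k \;=\; \sum_{k=1}^{K} O\!\big(\log(1/\epsilon)+k\big) \;=\; O\!\big(K\log(1/\epsilon)+K^2\big),
\]
and substituting $K=O(\epsilon^{-4})$ gives $T = O\!\big(\epsilon^{-4}\log(1/\epsilon)+\epsilon^{-8}\big) = O(\epsilon^{-8})$, which also dominates the $K$ outer gradient steps; this is the claimed total iteration count.

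The main obstacle is the error propagation in step (ii): one must show that truncating the binary search at finite precision --- and, for $p=2$, using the approximate subroutine Algorithm~\ref{alg:Ax:EH} in place of an exact $\max_{b\in\mathcal{B}}\lVert Ab\rVert_q$ oracle --- still yields a kernel $P_k$ that is $\epsilon\gamma^k$-suboptimal for the adversary, with every hidden constant polynomial in the problem parameters so that none re-enters the $\epsilon$-dependence. This requires quantitative stability of the map $\lambda \mapsto (b,k)\mapsto J^\pi_{\hat P - bk^\top}$ near the fixed point $\lambda^*$, and the fact that matching the $\gamma^k$-shrinking tolerance demands a commensurate $\Theta(k)$ growth in inner precision is precisely what turns the $O(\epsilon^{-4})$ outer count into the $\Theta(K^2)=\Theta(\epsilon^{-8})$ total. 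A secondary point is to confirm, using Lemma~\ref{main:rs:PGT} and the strict positivity noted after Lemma~\ref{main:rs:RPE:dual}, that the smoothness constant of $J^\pi_{\Uc_p}$ feeding into \cite{RPG_conv} is finite for $\Uc_p$ of small enough radius $\beta$.
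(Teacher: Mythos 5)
Your proposal is correct and takes essentially the same route as the paper: the outer $O(\epsilon^{-4})$ rate of \cite{RPG_conv} with tolerance $\epsilon\gamma^k$ at step $k$, met by $O(k)$ binary-search iterations of Algorithm~\ref{alg:RPE} thanks to its linear convergence, summed to $O\bigl(\sum_{k=1}^{\epsilon^{-4}} k\bigr)=O(\epsilon^{-8})$. Your additional discussion of converting penalty accuracy $\lvert\lambda_n-\lambda^*\rvert$ into kernel accuracy (and of the approximate spectral subroutine) is in fact more careful than the paper, which passes over that step silently.
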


 The policy gradient method in \cite{RPG_conv} requires \( O(\epsilon^{-4}) \) iterations to converge to the globally optimal robust policy. At the \( n \)-th policy gradient step, the guarantee in \cite{RPG_conv} necessitates an \( O(\gamma^{-n}) \)-close approximation of the worst-case kernel for the policy \( \pi_n \), which our Algorithm \ref{alg:RPE} computes in \( O(n) \) iterations. Consequently, the overall iteration complexity to achieve the global optimal robust policy becomes \( O\left(\sum_{n=1}^{\epsilon^{-4}} n\right) = O(\epsilon^{-8}) \).

Algorithm \ref{alg:RPG}, is a double loop algorithm: That is, the inner loop (Algorithm \ref{alg:RPE}) computes an approximate worst kernel for the fixed policy. On the other hand, the outer loop updates the policy using the gradient obtained using the worst kernel. Alternatively, an actor-critic style algorithm can also be obtained, where the worst kernel and the policy are updated simultaneously. We leave this direction for a future work.   

\begin{algorithm}
\caption{ Robust Policy Gradient Algorithm }\label{alg:RPG}
\begin{algorithmic} [1]
\WHILE{ not converged: $n=n+1$ }
\STATE Get worst kernel $P =\hat{P}-bk^\top$ for policy $\pi$ from Algorithm \ref{alg:RPE} with tolerance $\epsilon =\gamma^n$..
\STATE Compute gradient $G$ from Lemma \ref{main:rs:PGT} .
\STATE Policy update: $\pi \to Proj\Bigm[\pi + \alpha_n G\Bigm].$
\ENDWHILE
\end{algorithmic}
\end{algorithm}

\section{Experiments}
In this section, we evaluate the performance of Algorithm \ref{alg:RPE} for robust policy evaluation, focusing on the case \( p = 2 \). The algorithm requires computing \( F(\lambda) \) at each iteration, which involves solving the constrained matrix norm problem \( \max_{x \in \mathcal{B}} \|Ax\|_2 \). This can be efficiently handled using our spectral Algorithm \ref{alg:Ax:EH}. Figures \ref{main:fig:rve:value} and \ref{main:fig:rve:time} compare four methods for computing the robust return, specifically the penalty term \( J^\pi - J^\pi_{\Uc_2} \):  
\begin{itemize}
    \item \textbf{ SLSQP from scipy \cite{2020SciPy-NMeth} :} This is  A semi-brute force approach that uses Lemma \ref{main:rs:RPE:dual}. It computes the penalty term via scipy.minimize to directly optimize over ($b,k$). This is equivalent to optimize over only rank-one perturbation of nominal kernel as a\( (b, k) \) corresponds to selecting a rank-one perturbation of the nominal kernel \( P = \hat{P} - bk^\top \).  Note that this method is local, hence can sometime be stuck in very bad local solution. 
    
\item \textbf{Binary search Algorithm \ref{alg:RPE} :} Uses the binary search Algorithm \ref{alg:RPE}, and  spectral Algorithm \ref{alg:Ax:EH} for computing \( F(\lambda) \) in each iteration.

\item \textbf{Random Rank-One Kernel Sampling :} A semi-brute force approach that uses Lemma \ref{main:rs:RPE:dual} to sample random pairs \( (b, k) \in \mathcal{B}, \mathcal{K} \), empirically maximizing the penalty term. Since choosing \( (b, k) \) corresponds to selecting a rank-one perturbation of the nominal kernel \( P = \hat{P} - bk^\top \), the method is named accordingly.  
\item \textbf{Random Kernel Sampling :} A brute-force approach that samples random kernels directly from the uncertainty set \( \Uc_2 \), computing the empirical minimum as an estimate of the robust return.  

\end{itemize}

\begin{figure}
    \centering
    \includegraphics[width=7cm, height=4.2cm]{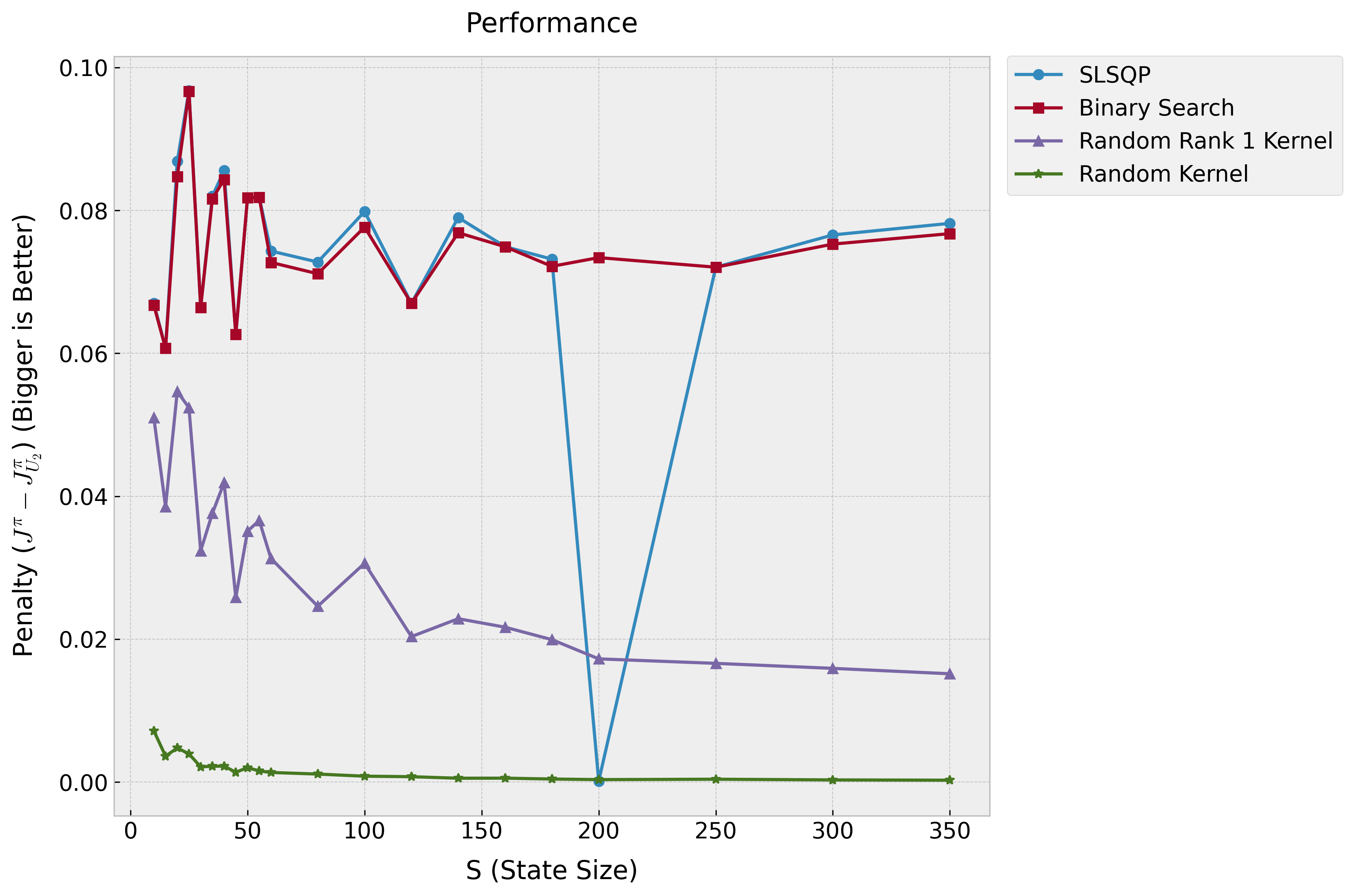}
    \caption{Performance of Robust Policy Evaluation methods with equal amount of time, with fixed action space $A=8$}
    \label{main:fig:rve:value}
\end{figure}
Figure \ref{main:fig:rve:value} presents the penalty value (\(J^\pi - J^\pi_{\Uc_2}\)) computed by different methods across various state space sizes while keeping the action space $(A=8)$ fixed, with each method given the same computational time. Our binary search Algorithm \ref{alg:RPE} combined with spectral Algorithm \ref{alg:Ax:EH} performs significantly better to  brute-force (random kernel sampling) and semi-brute (random sampling of rank-one perturbations of the nominal kernel) approaches. Notably, the scipy SLSQP variant performs slightly better on average, but our Algorithm \ref{alg:RPE} is more reliable. This is expected, as the spectral method used to compute $F(\lambda)$ in Algorithm \ref{alg:RPE}, is global, while scipy SLSQP is a local optimizer and thus more prone to getting stuck in suboptimal solutions (as evident in the figure).

\begin{figure}
    \centering
    \includegraphics[width=7cm, height=4.2cm]{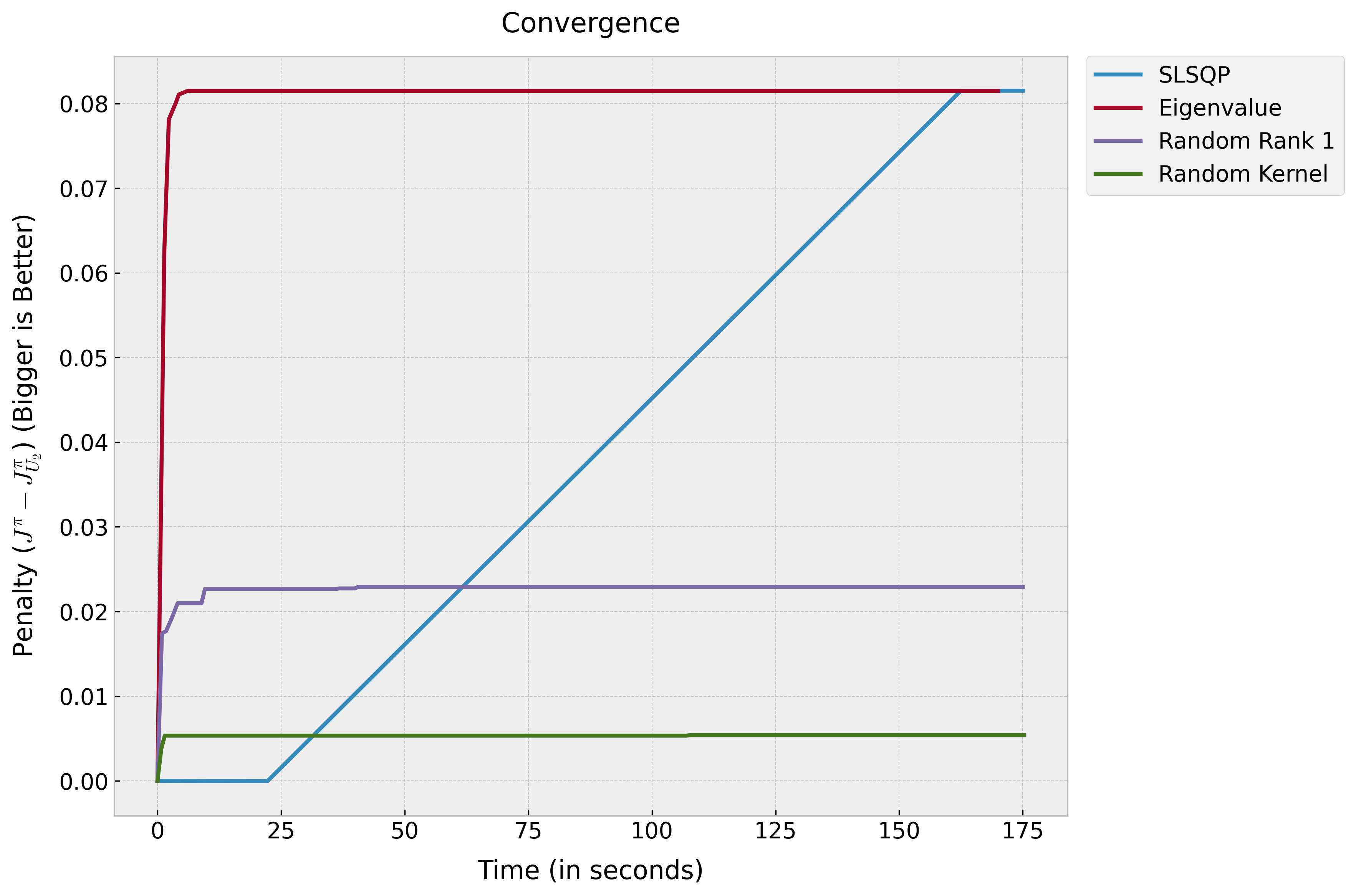}
    \caption{Convergence of Robust Policy Evaluation Methods, with fixed $S=128,A=8$.}
    \label{main:fig:rve:time}
\end{figure}
Figure \ref{main:fig:rve:time} illustrates the convergence of different approaches over time (in seconds) for a fixed state space size (\(S=8\)). Algorithm \ref{alg:RPE} converges rapidly, while the SLSQP variation gradually approaches the same performance. In contrast, the brute-force method shows slow, logarithmic improvement. This behavior arises because brute-force methods require an exponential number of samples (in the dimensionality of the problem) to adequately explore all directions. In comparison, our binary search Algorithm combined with the spectral Algorithm \ref{alg:Ax:EH} achieves significantly better efficiency, with a complexity of \(O(S^3 A^3 \log \epsilon^{-1})\).

More details of these experiments along with others can be found in the appendix, and codes are available at \url{https://anonymous.4open.science/r/non-rectangular-rmdp-77B8}. 

Our experiments confirm the efficiency of our binary search Algorithm \ref{alg:RPE} for robust policy evaluation, significantly outperforming brute-force approaches in both accuracy and convergence speed.

\section{Discussion}We studied robust Markov decision processes (RMDPs) with non-rectangular \(L_p\)-bounded uncertainty sets, balancing expressiveness and tractability. We showed that these uncertainty sets can be decomposed into infinitely many \(sa\)-rectangular sets, reducing robust policy evaluation to a min-max fractional optimization problem (dual form). This novel dual formulation provides key insights into the adversary and leads to the development of the first robust policy evaluation algorithms. Experiments demonstrate the effectiveness of our approach, significantly outperforming brute-force methods. These findings further pave the way for scalable and efficient robust reinforcement learning algorithms.

\paragraph{Future Work.} Our results naturally extend to uncertainty sets that can be expressed as a finite union of \(L_p\) balls. Furthermore, any uncertainty set can be approximated using a finite number of \(L_p\) balls, with smaller balls providing a better approximation. However, the number of balls required for an accurate approximation may grow prohibitively large. While this work is limited to \(L_p\) norms, it may be possible to generalize the approach to other types of uncertainty sets. A key challenge in such an extension would be identifying the structure of the worst-case kernel and developing the corresponding matrix inversion techniques.

Another promising direction is to design policy improvement methodologies compatible with deep neural networks. 

\bibliography{references}

\begin{thebibliography}{10}

\bibitem{BiasVarianceShie}
Shie Mannor, Duncan Simester, Peng Sun, and John~N. Tsitsiklis.
\newblock Bias and variance in value function estimation.
\newblock In {\em Proceedings of the Twenty-First International Conference on Machine Learning}, ICML '04, page~72, New York, NY, USA, 2004. Association for Computing Machinery.

\bibitem{HKuhn2013}
Grani~Adiwena Hanasusanto and Daniel Kuhn.
\newblock Robust data-driven dynamic programming.
\newblock In C.J. Burges, L.~Bottou, M.~Welling, Z.~Ghahramani, and K.Q. Weinberger, editors, {\em Advances in Neural Information Processing Systems}, volume~26. Curran Associates, Inc., 2013.

\bibitem{tamar14}
Aviv Tamar, Shie Mannor, and Huan Xu.
\newblock Scaling up robust mdps using function approximation.
\newblock In {\em Proceedings of the 31th International Conference on Machine Learning, {ICML} 2014, Beijing, China, 21-26 June 2014}, volume~32 of {\em {JMLR} Workshop and Conference Proceedings}, pages 181--189. JMLR.org, 2014.

\bibitem{Nilim2005RobustCO}
Arnab Nilim and Laurent~El Ghaoui.
\newblock Robust control of markov decision processes with uncertain transition matrices.
\newblock {\em Oper. Res.}, 53:780--798, 2005.

\bibitem{Iyenger2005}
Garud~N. Iyengar.
\newblock Robust dynamic programming.
\newblock {\em Mathematics of Operations Research}, 30(2):257–280, May 2005.

\bibitem{robustnessAndGeneralization}
Huan Xu and Shie Mannor.
\newblock Robustness and generalization, 2010.

\bibitem{genralization1}
Chenyang Zhao, Olivier Sigaud, Freek Stulp, and Timothy~M. Hospedales.
\newblock Investigating generalisation in continuous deep reinforcement learning, 2019.

\bibitem{generalization2}
Charles Packer, Katelyn Gao, Jernej Kos, Philipp Krähenbühl, Vladlen Koltun, and Dawn Song.
\newblock Assessing generalization in deep reinforcement learning, 2018.

\bibitem{RVI}
Wolfram Wiesemann, Daniel Kuhn, and Breç Rustem.
\newblock Robust markov decision processes.
\newblock {\em Mathematics of Operations Research}, 38(1):153--183, 2013.

\bibitem{k-rectangularRMDP}
Shie Mannor, Ofir Mebel, and Huan Xu.
\newblock Robust mdps with k-rectangular uncertainty.
\newblock {\em Math. Oper. Res.}, 41(4):1484–1509, nov 2016.

\bibitem{r-rectRMDP}
Vineet Goyal and Julien Grand-Clément.
\newblock Robust markov decision process: Beyond rectangularity, 2018.

\bibitem{Kaufman2013RobustMP}
David~L. Kaufman and Andrew~J. Schaefer.
\newblock Robust modified policy iteration.
\newblock {\em INFORMS J. Comput.}, 25:396--410, 2013.

\bibitem{Bagnell01solvinguncertain}
J.~Andrew Bagnell, Andrew~Y. Ng, and Jeff~G. Schneider.
\newblock Solving uncertain markov decision processes.
\newblock Technical report, Carnegie Mellon University, 2001.

\bibitem{ppi}
Chin~Pang Ho, Marek Petrik, and Wolfram Wiesemann.
\newblock Partial policy iteration for l1-robust markov decision processes, 2020.

\bibitem{Rcontamination}
Yue Wang and Shaofeng Zou.
\newblock Online robust reinforcement learning with model uncertainty, 2021.

\bibitem{PG_RContamination}
Yue Wang and Shaofeng Zou.
\newblock Policy gradient method for robust reinforcement learning, 2022.

\bibitem{RPG_conv}
Qiuhao Wang, Chin~Pang Ho, and Marek Petrik.
\newblock Policy gradient in robust mdps with global convergence guarantee, 2023.

\bibitem{derman2021twice}
Esther Derman, Matthieu Geist, and Shie Mannor.
\newblock Twice regularized mdps and the equivalence between robustness and regularization, 2021.

\bibitem{LpRMDP}
Navdeep Kumar, Kaixin Wang, Kfir~Yehuda Levy, and Shie Mannor.
\newblock Efficient value iteration for s-rectangular robust markov decision processes.
\newblock In {\em Forty-first International Conference on Machine Learning}, 2024.

\bibitem{LpPgRMDP}
Navdeep Kumar, Esther Derman, Matthieu Geist, Kfir~Yehuda Levy, and Shie Mannor.
\newblock Policy gradient for rectangular robust markov decision processes.
\newblock In {\em Thirty-seventh Conference on Neural Information Processing Systems}, 2023.

\bibitem{Ewok_dileep}
Ruida Zhou, Tao Liu, Min Cheng, Dileep Kalathil, Panganamala Kumar, and Chao Tian.
\newblock Natural actor-critic for robust reinforcement learning with function approximation.
\newblock In {\em Thirty-seventh Conference on Neural Information Processing Systems}, 2023.

\bibitem{Ewok}
Kaixin Wang, Uri Gadot, Navdeep Kumar, Kfir Levy, and Shie Mannor.
\newblock Robust reinforcement learning via adversarial kernel approximation, 2023.

\bibitem{LpRewardRobust}
Uri Gadot, Esther Derman, Navdeep Kumar, Maxence~Mohamed Elfatihi, Kfir Levy, and Shie Mannor.
\newblock Solving non-rectangular reward-robust mdps via frequency regularization, 2023.

\bibitem{n-sphere}
David~J. Smith and Mavina~K. Vamanamurthy.
\newblock How small is a unit ball?
\newblock {\em Mathematics Magazine}, 62(2):101--107, 1989.

\bibitem{Puterman1994MarkovDP}
Martin~L. Puterman.
\newblock Markov decision processes: Discrete stochastic dynamic programming.
\newblock In {\em Wiley Series in Probability and Statistics}, 1994.

\bibitem{TractablerRMDP}
Julien Grand-Clément, Nian Si, and Shengbo Wang.
\newblock Tractable robust markov decision processes, 2024.

\bibitem{Sutton1998}
Richard~S. Sutton and Andrew~G. Barto.
\newblock {\em Reinforcement Learning: An Introduction}.
\newblock The MIT Press, second edition, 2018.

\bibitem{WR2L}
Mohammed~Amin Abdullah, Hang Ren, Haitham~Bou Ammar, Vladimir Milenkovic, Rui Luo, Mingtian Zhang, and Jun Wang.
\newblock Wasserstein robust reinforcement learning, 2019.

\bibitem{Phi_RMDP}
Chin~Pang Ho, Marek Petrik, and Wolfram Wiesemann.
\newblock Robust \${\textbackslash}phi\$-divergence {MDP}s.
\newblock In Alice~H. Oh, Alekh Agarwal, Danielle Belgrave, and Kyunghyun Cho, editors, {\em Advances in Neural Information Processing Systems}, 2022.

\bibitem{altman-constrainedMDP}
E.~Altman.
\newblock {\em Constrained Markov Decision Processes}.
\newblock Chapman and Hall, 1999.

\bibitem{valuePolytope}
Robert Dadashi, Adrien~Ali Taïga, Nicolas~Le Roux, Dale Schuurmans, and Marc~G. Bellemare.
\newblock The value function polytope in reinforcement learning, 2019.

\bibitem{wang}
Kaixin Wang, Navdeep Kumar, Kuangqi Zhou, Bryan Hooi, Jiashi Feng, and Shie Mannor.
\newblock The geometry of robust value functions.
\newblock In Kamalika Chaudhuri, Stefanie Jegelka, Le~Song, Csaba Szepesvari, Gang Niu, and Sivan Sabato, editors, {\em Proceedings of the 39th International Conference on Machine Learning}, volume 162 of {\em Proceedings of Machine Learning Research}, pages 22727--22751. PMLR, 17--23 Jul 2022.

\bibitem{Lightning_Deos_Not_Strike_Twice_RMDP}
Shie Mannor, Ofir Mebel, and Huan Xu.
\newblock Lightning does not strike twice: Robust mdps with coupled uncertainty.
\newblock {\em CoRR}, abs/1206.4643, 2012.

\bibitem{ShermanMorrisonFormula}
M.~S. Bartlett.
\newblock {An Inverse Matrix Adjustment Arising in Discriminant Analysis}.
\newblock {\em The Annals of Mathematical Statistics}, 22(1):107 -- 111, 1951.

\bibitem{2020SciPy-NMeth}
Pauli Virtanen, Ralf Gommers, Travis~E. Oliphant, Matt Haberland, Tyler Reddy, David Cournapeau, Evgeni Burovski, Pearu Peterson, Warren Weckesser, Jonathan Bright, St{\'e}fan~J. {van der Walt}, Matthew Brett, Joshua Wilson, K.~Jarrod Millman, Nikolay Mayorov, Andrew R.~J. Nelson, Eric Jones, Robert Kern, Eric Larson, C~J Carey, {\.I}lhan Polat, Yu~Feng, Eric~W. Moore, Jake {VanderPlas}, Denis Laxalde, Josef Perktold, Robert Cimrman, Ian Henriksen, E.~A. Quintero, Charles~R. Harris, Anne~M. Archibald, Ant{\^o}nio~H. Ribeiro, Fabian Pedregosa, Paul {van Mulbregt}, and {SciPy 1.0 Contributors}.
\newblock {{SciPy} 1.0: Fundamental Algorithms for Scientific Computing in Python}.
\newblock {\em Nature Methods}, 17:261--272, 2020.

\bibitem{PolicyGradient}
Richard~S Sutton, David McAllester, Satinder Singh, and Yishay Mansour.
\newblock Policy gradient methods for reinforcement learning with function approximation.
\newblock In S.~Solla, T.~Leen, and K.~M\"{u}ller, editors, {\em Advances in Neural Information Processing Systems}, volume~12. MIT Press, 2000.

\bibitem{Intractability}
M.~R. Garey and D.~S. Johnson.
\newblock {\em Computers and Intractability: A Guide to the Theory of NP-Completeness (Series of Books in the Mathematical Sciences)}.
\newblock W. H. Freeman, first edition edition, 1979.

\bibitem{Boyd_Convex_Optimization}
Stephen Boyd and Lieven Vandenberghe.
\newblock {\em Convex Optimization}.
\newblock {Cambridge University Press}, March 2004.

\bibitem{PG_conv}
Navdeep Kumar, Kaixin Wang, Kfir Levy, and Shie Mannor.
\newblock Policy gradient for reinforcement learning with general utilities, 2023.

\end{thebibliography}
\bibliographystyle{unsrt}

\newpage
\appendix
\onecolumn
\section{Summary of Notations and Definitions}
For a set $\mathcal{S}$, $\lvert\mathcal{S}\rvert$ denotes its cardinality. $\langle u, v\rangle := \sum_{s\in\mathcal{S}}u(s)v(s)$ denotes the dot product between functions $u,v:\mathcal{S}\to\mathbb{R}$. $\lVert v\rVert_p^q :=(\sum_{s}\lvert v(s)\rvert^p)^{\frac{q}{p}}$ denotes the $q$-th power of $L_p$ norm of function $v$, and we use  $\lVert v\rVert_p := \lVert v\rVert^1_p$ and $\lVert v\rVert := \lVert v\rVert_2$ as shorthand. For a set $\mathcal{C}$, $\Delta_{\mathcal{C}}:=\{a:\mathcal{C} \to \mathbb{R}|a_c\geq 0, \forall c, \sum_{c\in\mathcal{C}}a_c=1\}$ is the probability simplex over $\mathcal{C}$.
 $\mathrm{var}(\cdot)$ is variance function, defined as $\mathrm{var}(v)  = \sqrt{\sum_{s\in\mathcal{S}}(v(s) - \bar{v})^2}$ 
where $\bar{v} = \frac{\sum_{s\in\mathcal{S}}v(s)}{\lvert\mathcal{S}\rvert}$ is the mean of function $v:\mathcal{S}\to\mathbb{R}^d$.
$\mathbf{0},\mathbf{1} $ denotes all zero vector and all ones vector/function respectively of appropriate dimension/domain. $\mathbf{1}(a=b):=1$ if $a=b$, 0 otherwise, is the indicator function. For vectors $u,v$, $\mathbf{1}(u\geq v)$ is component wise indicator vector, i.e. $\mathbf{1}(u\geq v)(x) = \mathbf{1}(u(x)\geq v(x))$. $A\times B =\{(a,b)\mid a\in A, b\in B\}$ is the Cartesian product between set $A$  and $B$.

\begin{table}[ht]
  \caption{Useful Notations}
  \label{tb:relatedWork}
  \centering
  \begin{tabular}{lll}
   \toprule
    Notation&  Definition &Remark\\
   \midrule
   $p,q$&$\frac{1}{p}+\frac{1}{q}=1$& Holder's conjugates\\\\
   
   $\sigma_p$&& Standard deviation w.r.t. $L_p$ norm\\\\
   \midrule 
   $v^\pi, v^\pi_{P,R}$& $(I-\gamma P^\pi)^{-1}R^\pi$& Value function\\\\ 
   $D^\pi, D^\pi_{P,R}$& $(I-\gamma P^\pi)^{-1}$& Occupancy matrix\\\\ 
   $d^\pi, d^\pi_{P,\mu}$& $\mu^T(I-\gamma P^\pi)^{-1}$& Occupancy measure\\\\
   \midrule
$\Uc,\Uc^{\texttt{sa}}_p,\Uc^{\texttt{s}}_p,\Uc_p$&&  Uncertainty sets\\\\
 \end{tabular}
\end{table}

\section{ More on Setting, Assumptions, Results, Discussion}
\paragraph{Extension to KL Entropy Uncertainty Sets.}
For the KL uncertainty case, the worst kernel is given by $P^\pi_{\Uc^{sa}_{KL}} = (I-\gamma \hat{P}^\pi A^\pi)^{-1}$ where $A^\pi$ is a diagonal matrix \cite{Ewok}. If we can invert this matrix, then its possible to build upon it. We leave this for future work.


\subsection{Revealing the Adversary}

The worst kernel  for the policy $\pi$ and the uncertainty set $\Uc^{\texttt{sa}}_p/\Uc^{\texttt{s}}_p$, is given as 
\begin{align}
    P^\pi_{\Uc^{\texttt{sa}}_p}(\cdot|s,a) &= \hat{P}(\cdot|s,a) -\beta_{sa} k, \qquad\quad\text{and}\\
     P^\pi_{\Uc^{\texttt{s}}_p}(\cdot|s,a) &= \hat{P}(\cdot|s,a) -\beta_s \Bigm(\frac{\pi(a|s)}{\norm{\pi_s}_q}\Bigm)^{q-1} k, 
\end{align}
where $k$ is a solution of $\max_{k\in\mathcal{K}}\frac{k^Tv^\pi_R}{1 +\gamma  k^Tv^\pi_\beta}$. The relation sheds the light on the working of the adversary in \eqref{eq:kappaDual}. Basically, $k$ is the direction in which the adversary discourages the perturbation of the kernel. And the optimal direction $k$ that the adversary chooses is the one that maximizes the potential gain in the reward ($\innorm{k,v^\pi_R}$) and minimizes the average uncertainty on the trajectories $(\innorm{k,v^\pi_\beta})$.

Compared to the existing characterization of the adversary  in \cite{LpPgRMDP}, see Proposition \ref{bg:worstPm} , where the direction $k = u^\pi_\Uc$ is in robust terms, this is the first time, we have a complete overview of the working of the adversary in nominal terms.

\begin{theorem}[Non-rectangular Worst Kernel] The worst kernel  for the policy $\pi$ and the uncertainty set $\Uc_p$ is 
\begin{align*}
     P^\pi_{\Uc_p}= \hat{P} - bk^\top, 
\end{align*}
where $(k,b)$ is a solution to $\max_{k\in\mathcal{K},b\in\mathcal{B}}\bigm[J^\pi_b\frac{\innorm{k,v^\pi_R}}{1 +\gamma  \innorm{k,v^\pi_b}}\bigm]$. 

\end{theorem}
Moreover, we can see that the adversary  wants to optimize three things: A) $ J^\pi_\beta$, $ k^\top v^\pi_R$  and $k^\top v^\pi_\beta$ with two variables $\beta $ and $k$.  
\begin{enumerate}
    \item Maximizing the average uncertainty in the trajectories $ J^\pi_\beta$. This makes sense, as the more the agent visits states with high uncertainty, the higher is the ability of the adversary to undermine it. 
    \item Choosing the perturbation direction $k$ that discourages the agent to transition into good value states (w.r.t. nominal value function $v^\pi_R$). This observation was also seen in a previous study \cite{LpPgRMDP}, for sa and s rectangular uncertainty sets but it was w.r.t. robust value function $v^\pi_\Uc$. Whereas, the result has the exact finding except it is w.r.t. nominal value function, which is known contrast to the robust value function.
    \item  Choosing the uncertainty radius vector $\beta$ and the perturbation direction $k$, such that the $k^Tv^\pi_\beta$ is minimized. This can be done, by putting negative entries of $k$ at maximal entries of $v^\pi_\beta$, and positive entries of $k^T$ at states which has minimum uncertainty value function. In other words, we want to perturb the kernel such that visitation of high uncertainty states in the long term is maximized.
\end{enumerate}

\section{Background }
\subsection{Robust MDPs}

A robust Markov Decision Process (MDP) is a tuple $(\mathcal{S},\mathcal{A},R,\Uc,\gamma,\mu)$ which generalizes the standard MDP,  by containing a set of environments $\mathcal{\Uc}$ \cite{Iyenger2005,Nilim2005RobustCO}. The reward robust MDPs is well-studied in the precious work of rectangular 
\cite{derman2021twice,LpRMDP} and non-rectangular \cite{LpRewardRobust} uncertainty sets. Hence, in this work , we consider only uncertainty in kernel which is much more challenging. 

The robust return of a policy $\pi$, is its performance over the uncertainty set $\mathcal{U}$, defined as $ J^\pi_{\mathcal{U}} =\min_{P\in\mathcal{U}} J^\pi_{P} $.
The objective is to find an optimal robust policy $\pi^*_\mathcal{U}$ that achieves the optimal robust performance $ J^*_{\mathcal{U}}$, defined as 
\begin{align}\label{obj:RobMDPs} J^*_{\mathcal{U}} = \max_{\pi} J^\pi_{\mathcal{U}}. 
\end{align}

Unfortunately, a general solution to the robust objective  \eqref{obj:RobMDPs}, is proven to be strongly NP-hard for both non-convex sets and convex ones \cite{RVI}. The robust value function $v^\pi_\mathcal{U}$ and optimal robust value function $v^*_\mathcal{U}$ \cite{Iyenger2005,Nilim2005RobustCO}, for any uncertainty set $\mathcal{U}$ can be defined state wise, for all $\pi$, as 
\[ v^\pi_\mathcal{U}(s) = \min_{
P\in\mathcal{U}}v^\pi_{P}(s),\qquad v^*_\mathcal{U}(s) = \max_{
\pi\in\mathcal{\Pi}}v^\pi_{P}(s).\]

\subsection{Rectangular Robust MDPs}

Unfortunately, a general solution to the robust objective  \eqref{obj:RobMDPs}, is proven to be strongly NP-hard for both non-convex sets and convex ones \cite{RVI}. Hence, it is common practice to take the \texttt{sa}-rectangular uncertainty sets $\mathcal{U}^{\texttt{sa}}$, where ambiguity in each state and action are independent \cite{Iyenger2005,Nilim2005RobustCO,Rcontamination,PG_RContamination}. Formally defined as
$\mathcal{U}^{\texttt{sa}}=\times_{(s,a) \in \mathcal{S}\times \mathcal{A}} \mathcal{P}_{s,a}$ is decomposed over state-action-wise where $\mathcal{P}_{s,a}$  are components sets.  

However, many classes of uncertainty sets arise in practice, where ambiguities in a given state are correlated. This type of uncertainty sets are captured by \texttt{s}-rectangular uncertainty sets $\mathcal{U}^{\texttt{s}}=\mathcal{P} = \times_{s \in \mathcal{S}} \mathcal{P}_{s}$, which can be decomposed state-wise as \cite{RVI}. Note that $\texttt{sa}$-rectangular uncertainty sets are a special case of it.

Fortunately, the decoupling structure in \texttt{s}-rectangular uncertainty sets allows the existence of a kernel and a reward function that minimizes the value function over the uncertainty set for each state for any given policy. Similarly, it allows the existence of an optimal robust policy that maximizes the robust value in each state \cite{RVI}. Mathematically, the robust value function is to be rewritten as  
 \begin{align*}
 v^*_{\mathcal{U}^{\texttt{s}}} = \max_{\pi}v^\pi_\mathcal{U^{\texttt{s}}},\qquad
    v^\pi_{\mathcal{U^\texttt{s}}} = \min_{{P\in\mathcal{U}^{\texttt{s}}}}v^\pi_{P}.
\end{align*}
Hence, the robust return  can be rewritten as 
$
 J^\pi_{\mathcal{U}}  =  \langle \mu,v^\pi_{\mathcal{U}}\rangle , \quad \text{and}\quad  J^*_{\mathcal{U}} = \langle \mu,v^*_{\mathcal{U}}\rangle.$
Most importantly, this rectangularity implies the existence of contractive robust Bellman operators, which are pivotal same as non-robust MDPs \cite{RVI}. Specifically, the robust value function $v^\pi_{\mathcal{U}}$, and the optimal robust value function $v^*_{\mathcal{U}}$  is the fixed point of the robust Bellman operator $\mathcal{T}^\pi_{\mathcal{U}}$ and the optimal robust Bellman operator $\mathcal{T}^*_{\mathcal{U}}$ respectively \cite{RVI,Iyenger2005}, defined as 
\begin{align*}
    &\mathcal{T}^\pi_{\mathcal{U}} v := \min_{ P\in\mathcal{U}}T^\pi_{ P }v, \quad \text{and}\quad \mathcal{T}^*_{\mathcal{U}}v := \max_{\pi}\mathcal{T}^\pi_{\mathcal{U}}v,
\end{align*}
where $T^\pi_{ P }v :=R^\pi + \gamma P^\pi v$ is non-robust Bellman operator \cite{Puterman1994MarkovDP}.

This contractive property of Bellman operators plays a central role in the s-rectangular robust MDPs. Unfortunately, these contractive Bellman operators do not exist for non-rectangular uncertainty sets. This makes the non-rectangular robust MDPs very unwieldy. 




\subsection{Rectangular \texorpdfstring{$L_p$}{Lp} robust MDPs}
\label{main:sec:rectLp}

\begin{table}
  \caption{$p$-variance}
  \label{tb:kappa}
  \centering
  \begin{tabular}{l|ll}
    \toprule    
    $p$ & $\sigma_p(v)$ &Remark\\
 \midrule   $\infty$&$\frac{\max_{s}v(s) - \min_{s}v(s)}{2}$  &  Semi-norm \\\\ 
    $2$ &  $\sqrt{\sum_{s}\bigm(v(s) -\frac{\sum_{s}v(s)}{S}\bigm)^2}$ &Variance\\\\
    $1$
      &$ \sum_{i=1}^{\lfloor (S+1)/2\rfloor}v(s_i)$  -\\&$\sum_{i =\lceil (S+1)/2\rceil}^{S}v(s_i)$   & Top half - lower half\\
    \bottomrule
  \end{tabular}
  \begin{tabular}{l}
      where $v$ is sorted, i.e. $v(s_i)\geq v(s_{i+1}) \quad \forall i.$
  \end{tabular} 
\end{table}
Some useful definitions: $q$ is reserved for Holder's conjugate of $p$ that satisfies $\frac{1}{p}+ \frac{1}{q} =1$. The generalized $p$-variance function $\sigma_p$ and $p$-mean function $\omega_p$ are defined as 
 \[\sigma_p(v) =\min_{w\in\mathbb{R}}\lVert v-w\mathbf{1}\rVert_p, \quad \quad\omega_p(v) \in  \argmin_{w\in\mathbb{R}}\lVert v-w\mathbf{1}\rVert_p.\]
 Their close form for $p=1,2,\infty$, is summarized in table \ref{tb:kappa} . The results below summarizes policy evaluation for $L_p$-robust MDPs \cite{LpRMDP}.

Let $\hat{P}$ be any nominal transition kernel. In accordance with \cite{LpRMDP}, we define $\texttt{sa}$-rectangular $L_p$ constrained uncertainty set $\mathcal{U}^{\texttt{sa}}_p$ and $\texttt{s}$-rectangular $L_p$ constrained uncertainty set $\mathcal{U}^{\texttt{s}}_p$ as  
\begin{align*}
& \mathcal{U}^{\texttt{sa}}_p  = \{ P\Bigm| \underbrace{\sum_{s'}P_{sa}(s')=1}_{\text{ simplex condition }}, \lVert P_{sa}-(\hat{P})_{sa}\rVert_p\leq \beta_{sa}\}\\
  &  \mathcal{U}^{\texttt{s}}_p = \{P \Bigm| \sum_{s'}P_{sa}(s')=1,\lVert P_s-(\hat{P})_{s}\rVert_p\leq \beta_{s}
     \}.
\end{align*}

Note that  $\Uc^{sa}_p$, $\Uc^s_p$ are  sets around nominal kernel $\hat{P}$ component wise bounded by radius vectors $\beta$. To ensure, all the kernels in $\Uc$ are valid, 
we assume the radius vector $\beta$ is small enough.

 \begin{proposition}\label{rs:Lprvi}\cite{LpRMDP} The robust return is 
 \begin{align*}
J^\pi_{\Uc^{sa}_p} = & J^\pi -\gamma\sigma_q(v^\pi_{\Uc^{sa}_p})\sum_{s,a}d^\pi(s)\pi(a|s)\beta_{sa},\\
J^\pi_{\Uc^{s}_p} = & J^\pi -\gamma\sigma_q(v^\pi_{\Uc^{s}_p})\sum_{s}d^\pi(s)\lVert\pi_s\rVert_q \beta_{s}
\end{align*}
where $\lVert \pi_s\rVert _q$ is $q$-norm of the vector $\pi(\cdot|s)\in\Delta_{\mathcal{A}}$.
\end{proposition}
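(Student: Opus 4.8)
The plan is to reduce the evaluation of $J^\pi_{\Uc^{\texttt{sa}}_p}$ and $J^\pi_{\Uc^{\texttt{s}}_p}$ to a fixed-point equation for the robust value function, which is legitimate because $\Uc^{\texttt{sa}}_p$ and $\Uc^{\texttt{s}}_p$ are rectangular and therefore admit a contractive robust Bellman operator $\mathcal{T}^\pi_{\Uc}$ whose unique fixed point is $v^\pi_{\Uc}$ (as recalled above). First I would compute the one-step robust update. In the \texttt{sa}-rectangular case, $\mathcal{T}^\pi_{\Uc^{\texttt{sa}}_p} v(s) = \sum_a \pi(a|s)\bigl[R(s,a) + \gamma\min_{P_{sa}} \langle P_{sa}, v\rangle\bigr]$, where the minimum is over $\{P_{sa}: \|P_{sa} - \hat{P}_{sa}\|_p \le \beta_{sa},\ \mathbf{1}^\top P_{sa} = 1\}$; since $\beta$ is assumed small enough, nonnegativity is automatic and adds no further constraint. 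Substituting $P_{sa} = \hat{P}_{sa} + \delta$ reduces the inner problem to $\langle\hat{P}_{sa}, v\rangle + \min\{\langle\delta, v\rangle : \|\delta\|_p \le \beta_{sa},\ \mathbf{1}^\top\delta = 0\}$.

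The inner minimization is where $\sigma_q$ enters. The constraint $\mathbf{1}^\top\delta = 0$ implies $\langle\delta, v\rangle = \langle\delta, v - \omega\mathbf{1}\rangle$ for every $\omega \in \R$, so Hölder's inequality gives $\langle\delta, v\rangle \ge -\|\delta\|_p\,\|v - \omega\mathbf{1}\|_q \ge -\beta_{sa}\|v-\omega\mathbf{1}\|_q$, and taking the tightest such bound over $\omega$ yields $\langle\delta, v\rangle \ge -\beta_{sa}\sigma_q(v)$. This is tight: the program is convex with Slater's condition satisfied ($\delta = 0$ is strictly feasible when $\beta_{sa} > 0$), and its Lagrangian dual with a single multiplier $\omega$ for the equality constraint is exactly $\max_\omega \bigl(-\beta_{sa}\|v-\omega\mathbf{1}\|_q\bigr) = -\beta_{sa}\sigma_q(v)$; equivalently, one exhibits the minimizer as the $L_p$-dual vector of $-(v - \omega^\star\mathbf{1})$ scaled to $L_p$-norm $\beta_{sa}$ and checks it is feasible (its sum-to-zero property being precisely the first-order optimality condition of $\min_\omega\|v-\omega\mathbf{1}\|_q$). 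Hence $\mathcal{T}^\pi_{\Uc^{\texttt{sa}}_p} v = R^\pi + \gamma\hat{P}^\pi v - \gamma\sigma_q(v)\, b^\pi$ with $b^\pi(s) := \sum_a \pi(a|s)\beta_{sa}$.

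For the \texttt{s}-rectangular case I would run the same computation on the joint perturbation $\delta = (\delta_{sa})_{a}$ subject to $\|\delta\|_p \le \beta_s$ and $\mathbf{1}^\top\delta_{sa} = 0$ for every $a$. Since each block is sum-to-zero, $\sum_a\pi(a|s)\langle\delta_{sa}, v\rangle = \langle\delta, u\rangle$ with $u(a,s') = \pi(a|s)(v-\omega\mathbf{1})(s')$ for any single $\omega$, and the crucial point is that the $q$-norm factorizes: $\|u\|_q = \|\pi_s\|_q\,\|v-\omega\mathbf{1}\|_q$. Hölder plus optimization over $\omega$ then gives the per-state penalty $-\gamma\sigma_q(v)\beta_s\|\pi_s\|_q$, with tightness established as before, so $\mathcal{T}^\pi_{\Uc^{\texttt{s}}_p} v = R^\pi + \gamma\hat{P}^\pi v - \gamma\sigma_q(v)\,\tilde{b}^\pi$ with $\tilde{b}^\pi(s) := \beta_s\|\pi_s\|_q$.

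Finally I would close the loop. Since $v^\pi_{\Uc}$ is the fixed point of $\mathcal{T}^\pi_{\Uc}$, it satisfies $(I - \gamma\hat{P}^\pi)v^\pi_{\Uc} = R^\pi - \gamma\sigma_q(v^\pi_{\Uc})\,c^\pi$ with $c^\pi \in \{b^\pi, \tilde{b}^\pi\}$; inverting gives $v^\pi_{\Uc} = v^\pi - \gamma\sigma_q(v^\pi_{\Uc})\,D^\pi c^\pi$, and the inner product with $\mu$ together with $d^\pi = \mu^\top D^\pi$ yields $J^\pi_{\Uc} = J^\pi - \gamma\sigma_q(v^\pi_{\Uc})\langle d^\pi, c^\pi\rangle$; expanding $\langle d^\pi, b^\pi\rangle = \sum_{s,a} d^\pi(s)\pi(a|s)\beta_{sa}$ and $\langle d^\pi, \tilde{b}^\pi\rangle = \sum_s d^\pi(s)\|\pi_s\|_q\beta_s$ gives the two claimed identities. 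The main obstacle is the tightness step in the inner minimization: one must ensure the Hölder-optimal perturbation respects the sum-to-zero (simplex) constraint — which is exactly what the reparametrization $v \mapsto v - \omega\mathbf{1}$ and the optimization over $\omega$ encode — and make this rigorous uniformly over $q \in [1,\infty]$, including the non-strictly-convex endpoints $q = 1, \infty$ where the minimizer may be non-unique, as well as the degenerate case $\pi(a|s) = 0$. Everything else — well-posedness of the fixed point, the inversion $(I-\gamma\hat{P}^\pi)^{-1}$, and the bookkeeping with $d^\pi$ — is routine given the rectangularity facts already recalled.
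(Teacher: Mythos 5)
Your proof is correct. Note that the paper does not actually prove this proposition --- it is imported verbatim from \cite{LpRMDP} --- so the relevant comparison is with that source and with the paper's own adjacent derivations. Your route (rectangularity $\Rightarrow$ contractive robust Bellman operator, reduce the one-step adversarial problem to $\min\{\langle \delta, v\rangle : \|\delta\|_p\le\beta,\ \mathbf{1}^\top\delta=0\}=-\beta\,\sigma_q(v)$ via the shift-invariance/H\"older duality, then invert $(I-\gamma\hat{P}^\pi)$ at the fixed point) is exactly the derivation in \cite{LpRMDP}; the duality step you prove by Lagrangian/Slater arguments is precisely their Lemma J.1, which this paper restates as Proposition \ref{rs:kappa}, and your block factorization $\|u\|_q=\|\pi_s\|_q\|v-\omega\mathbf{1}\|_q$ is how the extra $\|\pi_s\|_q$ factor arises there as well. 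By contrast, where the present paper re-derives closed forms for $J^\pi_{\Uc^{\texttt{sa}}_p}$ and $J^\pi_{\Uc^{\texttt{s}}_p}$ (appendix lemmas on the dual formulation), it restricts to rank-one perturbations $\hat{P}-bk^\top$ and applies the Sherman--Morrison formula (Proposition \ref{app:bg:ShermanMorrisonFormula}), obtaining the penalty as a fractional program in $(b,k)$; that expression is reconciled with the $\sigma_q(v^\pi_{\Uc})$ form you derive via the identity \eqref{eq:kappaDual}. Your approach buys a self-contained proof that does not presuppose the rank-one structure of the worst kernel, while the paper's route is the one that generalizes to the non-rectangular set $\Uc_p$, where no contractive Bellman operator exists. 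The caveats you flag (tightness at $q\in\{1,\infty\}$, actions with $\pi(a|s)=0$) affect only uniqueness of the minimizer, not the optimal value, so they do not create a gap in the stated identities.
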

\subsection{Nature of Adversary}
 First, we define the normalized and balanced robust value function for uncertainty set $\Uc = \Uc^{\texttt{sa}}_p, \Uc^{\texttt{s}}_p$ as
\begin{align*}\label{def:balVal}
    u^\pi_{\Uc}(s)  := \frac{\text{sign}(v^\pi_{\Uc}(s)-\omega_q(v^\pi_{\Uc})) \lvert v^\pi_{\Uc}(s)-\omega_q(v^\pi_{\Uc})\rvert^{q-1}}{\sigma_q(v^\pi_\Uc)^{q-1}}
\end{align*}
which has zero mean and unit norm, that is,
\[ \langle u^\pi_{\Uc}, \mathbf{1}\rangle =0, \quad \text{and}\quad\lVert u^\pi_{\Uc}\rVert_p = 1.\]
Further, it can be re-written as a gradient of $q$-variance and its correlation with robust value function is $q$-variance.
\begin{property}\label{rs:noisValCorr}\cite{LpRMDP} For uncertainty set $\Uc = \Uc^{\texttt{sa}}_p,\Uc^{\texttt{s}}_p$, we have
\[ u^\pi_{\Uc} =\nabla_v\sigma_q(v)\Bigm|_{v=v^\pi_\Uc}, \quad \text{and}\quad \langle u^\pi_{\Uc}, v^\pi_{\Uc}\rangle = \sigma_{q}(v^\pi_\Uc).\]
\end{property}

\begin{proposition}\cite{LpPgRMDP}\label{bg:worstPm} For any policy $\pi$, taking $k = u^\pi_{\Uc}$,  the worst model is related to the nominal one through:
\begin{align*}
     P^\pi_{\Uc} &=  \hat{P} -bk^\top,
    \end{align*}
where $b(s,a)=\beta_{sa}, \beta_{s}\Bigm[\frac{\pi(a|s)}{\norm{\pi_s}_{q}}\Bigm]^{q-1}$ for $\Uc^{sa}_p$ and $\Uc^s_p$ respectively.
\end{proposition}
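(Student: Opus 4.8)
The plan is to combine the robust Bellman fixed-point characterization with two applications of Hölder's inequality. First I would recall that for an \texttt{s}-rectangular (hence also \texttt{sa}-rectangular) $L_p$ uncertainty set the robust value function $v^\pi_{\Uc}$ is the unique fixed point of the contractive robust Bellman operator $\mathcal{T}^\pi_{\Uc}$, and that—because each component set is compact and $v\mapsto\innorm{P^\pi_s,v}$ is continuous—there is a kernel $P^\pi_{\Uc}\in\Uc$ that attains the state-wise minimum $\min_{P\in\Uc}\innorm{P^\pi_s,v^\pi_{\Uc}}$ for every $s$ and satisfies $v^\pi_{P^\pi_{\Uc}}=v^\pi_{\Uc}$. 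Thus $P^\pi_{\Uc}$ is a worst-case kernel, and it only remains to identify its shape by solving this per-state minimization.

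Next I would reduce to the perturbation $\delta_{sa}:=P_{sa}-\hat P_{sa}$. The row-simplex condition forces $\innorm{\delta_{sa},\mathbf{1}}=0$, so for any $\omega$ we may write $\innorm{\delta_{sa},v^\pi_{\Uc}}=\innorm{\delta_{sa},v^\pi_{\Uc}-\omega\mathbf{1}}\geq-\norm{\delta_{sa}}_p\,\norm{v^\pi_{\Uc}-\omega\mathbf{1}}_q$, and optimizing over $\omega$ gives the bound $-\norm{\delta_{sa}}_p\,\sigma_q(v^\pi_{\Uc})$. Equality in Hölder forces $\delta_{sa}$ to be the negatively signed normalized $q$-power of $v^\pi_{\Uc}-\omega_q(v^\pi_{\Uc})\mathbf{1}$, i.e. $\delta_{sa}=-\norm{\delta_{sa}}_p\,u^\pi_{\Uc}$; the key point is that this direction is automatically feasible for the zero-sum constraint, since the first-order condition defining $\omega_q(v)$ is exactly $\innorm{u^\pi_{\Uc},\mathbf{1}}=0$ (Property~\ref{rs:noisValCorr}, which also gives $\norm{u^\pi_{\Uc}}_p=1$ and $\innorm{u^\pi_{\Uc},v^\pi_{\Uc}}=\sigma_q(v^\pi_{\Uc})$). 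So the only remaining degree of freedom is the magnitude $b_{sa}:=\norm{\delta_{sa}}_p$.

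Finally I would choose $b_{sa}$. In the \texttt{sa}-rectangular case each action carries its own budget $\beta_{sa}$ and the coefficient of $b_{sa}$ in the objective is nonpositive, so it is optimal to saturate: $b_{sa}=\beta_{sa}$. In the \texttt{s}-rectangular case the objective over state $s$ becomes $-\sigma_q(v^\pi_{\Uc})\innorm{\pi_s,b_s}$, to be minimized over $b_s\succeq0$ with $\norm{b_s}_p\leq\beta_s$; a second Hölder argument, now in action space, yields $b_s\propto\pi_s^{q-1}$, and the identity $(q-1)p=q$ fixes the constant so that $b_{sa}=\beta_s(\pi(a\mid s)/\norm{\pi_s}_q)^{q-1}$ with $\norm{b_s}_p=\beta_s$. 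Assembling the entries gives $P^\pi_{\Uc}(s'\mid s,a)=\hat P(s'\mid s,a)-b(s,a)\,u^\pi_{\Uc}(s')$, i.e. $P^\pi_{\Uc}=\hat P-bk^\top$ with $k=u^\pi_{\Uc}$; the smallness of $\beta$ guarantees the entries stay nonnegative, so this is a genuine kernel.

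The main obstacle will be the \texttt{s}-rectangular step, where the perturbation direction inside the state space and the split of the shared $L_p$ budget across actions must be optimized jointly; the way I would handle this is to fix the per-action magnitudes first—reducing each inner problem to the clean Hölder equality case that pins down the common direction $u^\pi_{\Uc}$—and only then optimize the magnitude vector $b_s$ by a second Hölder step. A minor but genuine point to watch is circularity: the ``guess the rank-one form and verify'' strategy is legitimate only because existence of a minimizing kernel is secured in advance by the contraction property of $\mathcal{T}^\pi_{\Uc}$.
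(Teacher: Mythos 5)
The paper does not prove this proposition---it is imported verbatim from \cite{LpPgRMDP}---so there is no in-paper argument to compare against; judged on its own, your reconstruction is correct and follows exactly the standard derivation from that line of work: existence of a simultaneous state-wise minimizer via the contractive robust Bellman operator, a H\"older step in state space (with the zero-sum feasibility of the extremal direction coming from the first-order condition defining $\omega_q$), and a second H\"older step in action space to allocate the shared budget in the \texttt{s}-rectangular case. The only caveats worth noting are cosmetic: for $p\in\{1,\infty\}$ the H\"older extremal direction is not unique, so the conclusion should be read as ``a worst kernel of this rank-one form exists'' rather than ``the worst kernel has this form,'' and the self-referential dependence of $k=u^\pi_{\Uc}$ on $v^\pi_{\Uc}$ is unproblematic precisely because, as you observe, the fixed point is pinned down before the rank-one shape is identified.
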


The above result states that  for \texttt{sa}-rectangular case, the worst-case reward is independent of the employed policy. However, the worst kernel is policy dependent, and a rank one perturbation of nominal kernel which is a very surprising finding. The adversarial kernel discourages the system to move to high rewarding states, and directs towards low rewarding states.

Compared to the \texttt{sa}-rectangular case, in the \texttt{s}-rectangular case the worst reward and worst kernel have an extra dependence on the policy term $\frac{\pi(a|s)^{q-1}}{\lVert \pi_s\rVert_{q}^{q-1}}$. This is because the worst values cannot be chosen independently for each action in the \texttt{s}-rectangular case, but are instead dependent on the policy.  Similarly to the \texttt{sa}-case, the adversarial kernel is also a rank-one perturbation of the nominal kernel.

\subsection{Robust Policy Gradient}

The update rule
\begin{align}
\pi_{k+1} = Proj_{\pi\in\Pi}\Bigm[\pi_k - \eta_k \nabla_{\pi} J^{\pi_k}_{P_k}\Bigm],
\end{align}
where $J^{\pi_k}_{P_k}- J^{\pi_k}_\Uc \leq \epsilon\gamma^k$, finds global solution in $O(\epsilon^{-4})$ iterations \cite{RPG_conv}.

\begin{proposition}\cite{RPG_conv} For initial tolerance $\epsilon\leq \sqrt{T}$, and learning rate $\eta_k =\frac{\delta}{\sqrt{T}}$ for any $\delta >0$,
\[J^*_\Uc-J^{\pi_k}_\Uc \leq \epsilon,\]
for $k \geq C S^2A^3(S+A)^2\epsilon^{-4}$
where some constant $C$.
\end{proposition}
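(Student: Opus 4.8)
The plan is to establish global convergence through the standard two-pillar template for nonconvex projected gradient ascent — a descent (smoothness) estimate and a gradient-domination estimate — while carefully absorbing the bias from the inexact robust evaluation oracle, for which $J^{\pi_k}_{P_k} - J^{\pi_k}_\Uc \leq \epsilon\gamma^k$. The crucial structural point is that the $\epsilon^{-4}$ rate will emerge from a \emph{weak} form of gradient domination: suboptimality will be bounded by the first power of the gradient-mapping norm, and a $T^{-1/2}$ bound on the \emph{squared} gradient norm then yields only a $T^{-1/4}$ bound on suboptimality, hence $T = O(\epsilon^{-4})$. The fixed-horizon constant step $\eta_k = \delta/\sqrt{T}$ and the side condition $\epsilon \leq \sqrt{T}$ are exactly the regime in which this nonconvex analysis is cleanest.

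First I would establish smoothness. Since $\Uc$ is compact, the worst-case kernel $P^\pi_\Uc$ is attained, and Danskin's theorem gives $\nabla_\pi J^\pi_\Uc = \nabla_\pi J^\pi_P\big|_{P = P^\pi_\Uc}$ wherever the worst kernel is unique. In the $L_p$ setting of this paper the worst kernel is a rank-one perturbation $\hat P - bk^\top$, which makes the Lipschitz constant $L$ of this gradient tractable to control: bounding how the occupancy matrix $D^\pi$, the value function $v^\pi_P$, and the rank-one correction vary with $\pi$ — the latter via the Sherman--Morrison identity — yields $L = \mathrm{poly}(S,A,(1-\gamma)^{-1})$. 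This furnishes the descent inequality for the projected update, and summing over $k$ with $\eta_k = \delta/\sqrt{T}$ gives $\frac{1}{T}\sum_k \|G_k\|^2 = O(1/\sqrt{T})$, where $G_k$ denotes the gradient mapping. The inexact-oracle bias enters additively through $\sum_k \epsilon\gamma^k = O(\epsilon)$, which is summable and therefore absorbed into lower-order terms.

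Next I would prove the weak gradient-domination inequality $J^*_\Uc - J^\pi_\Uc \leq C_{\mathrm{dom}} \, \|G(\pi)\|$. The route is a robust performance-difference lemma: evaluating both $\pi$ and the optimal robust policy against the single worst-case kernel $P^\pi_\Uc$, the suboptimality decomposes into advantage terms, which are then controlled by the projected gradient through the first-order optimality of the best feasible ascent direction over $\Pi$. A distribution-mismatch coefficient — the ratio between the optimal robust occupancy and the nominal one — appears at this stage; after bounding it by $\mathrm{poly}(S,A,(1-\gamma)^{-1})$, it contributes to the constant $C_{\mathrm{dom}}$ and ultimately to the factor $S^2 A^3 (S+A)^2$.

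Finally I would combine the two estimates. From $\frac{1}{T}\sum_k \|G_k\|^2 = O(T^{-1/2})$ the best iterate obeys $\min_k \|G_k\| = O(T^{-1/4})$, and gradient domination converts this into $J^*_\Uc - J^{\pi_k}_\Uc = O(T^{-1/4})$; demanding the right-hand side to be at most $\epsilon$ forces $T = O(\epsilon^{-4})$, and tracking the smoothness and mismatch constants yields the stated dependence $C\,S^2 A^3 (S+A)^2 \epsilon^{-4}$. The main obstacle is the gradient-domination step in the robust setting: the min over kernels makes $J^\pi_\Uc$ only directionally differentiable in general, so the Danskin/envelope identity and the performance-difference decomposition must be justified at points where the worst kernel is non-unique, and the bias from the inexact kernel $P_k$ must be shown not to corrupt the domination inequality beyond the summable $\epsilon\gamma^k$ order.
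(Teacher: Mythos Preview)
This proposition is a quoted result from \cite{RPG_conv}; the present paper does not supply its own proof, so there is nothing in the paper to compare your argument against line-by-line. That said, the template you outline --- Lipschitz smoothness of the robust return, a weak (first-power) gradient-domination inequality obtained from a robust performance-difference lemma with a distribution-mismatch coefficient, and then combining the two through the $T^{-1/2}$ bound on the averaged squared gradient mapping to get $T^{-1/4}$ suboptimality --- is exactly the mechanism by which the $O(\epsilon^{-4})$ rate in \cite{RPG_conv} is derived, and your identification of where the oracle bias $\epsilon\gamma^k$ enters (summably, hence harmlessly) is also correct.

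One small conflation to flag: you justify smoothness by invoking the rank-one structure $P^\pi_\Uc = \hat P - bk^\top$ specific to the $L_p$ uncertainty sets of \emph{this} paper, but the proposition you are proving is imported from \cite{RPG_conv} and holds for general compact uncertainty sets. The smoothness constant there comes from uniform bounds on $D^\pi_P$ and its $\pi$-derivatives over $P\in\Uc$, not from Sherman--Morrison; your argument would still go through, but the rank-one detour is unnecessary and slightly obscures that the result is not $L_p$-specific.
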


 The gradient $\nabla_\pi  J^{\pi_k}\rob$ of the robust return may not exist due to non-differentiability. However, sub-differential can be defined using the standard policy gradient theorem \cite{PolicyGradient}, as
\begin{align*}\label{eq:subGrad}
    \partial_\pi J^\pi_\Uc = \nabla_\pi J^\pi_{ P } \Bigm|_{ P  = P^\pi_{\Uc} } = \sum_{s,a}d^\pi_{\Uc}(s)Q^\pi_{\Uc}(s,a)\nabla\pi(a|s).
\end{align*}
where $Q^\pi_{\Uc}:=Q^\pi_{P^\pi_{\Uc}}$ is robust Q-value,  $ d^\pi_{\Uc}:= d^\pi_{P^\pi_{\Uc}}$ is robust occupation measure, and  $P^\pi_\Uc $ are worst values associated with policy $\pi$, defined as
\[ P^\pi_\Uc : \in \arginf_{ P \in \Uc} J^\pi_{ P }.\]

To compute the above sub-gradient, we require the access to the worst (adversarial) parameters $P^\pi_{\Uc}$ that we show can be computed efficiently using nominal parameters $ \hat{P} $ in close form, for s-rectangular robust uncertainty sets. Then these worst parameters, can be used to compute  robust occupation measure $d^\pi_{\Uc}$ and robust Q-value $Q^\pi_{\Uc}$, which in turn can be used to compute the above robust gradient.

However, it may be possible to compute the gradient directly without computing the worst kernel, as :


\begin{proposition}\label{rs:sap:RPG}\cite{LpPgRMDP} The policy gradient is given by:
\begin{align*}
     \frac{\partial J^\pi_{\Uc}}{\partial\pi(a|s)}  &=\Bigm[d^\pi(s) - c^\pi(s)\Bigm]Q^\pi_{\Uc}(s,a),
\end{align*}
where $c^\pi$ is a correction term. Taking $k=v^\pi_{\Uc}$, it is
\[c^\pi(s) =  \gamma \frac{\innorm{d^\pi,b}}{1 +\gamma \innorm{d^\pi_{k},b}}d^\pi_{k}(s),\] 
  radius $b(s) = \sum_{a} \pi(a|s)\beta_{sa},\beta_{s}\lVert\pi_s\rVert_q$ for $\Uc^{sa}_p$ and $\Uc^s_p$ respectively. 
\end{proposition}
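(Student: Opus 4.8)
The plan is to combine the envelope (Danskin) theorem with the rank-one structure of the worst kernel and the Sherman–Morrison identity, reducing everything to nominal quantities.

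First I would start from the subdifferential characterization recorded in the excerpt, $\partial_\pi J^\pi_\Uc = \nabla_\pi J^\pi_P\big|_{P=P^\pi_\Uc}$. Since the worst kernel $P^\pi_\Uc$ attains the minimum in $J^\pi_\Uc = \min_{P\in\Uc}J^\pi_P$, the envelope theorem guarantees that its implicit dependence on $\pi$ contributes nothing to first order, so only the explicit $\pi$-dependence of the non-robust return survives. Applying the ordinary policy gradient theorem at the fixed kernel $P^\pi_\Uc$ and using $\nabla_{\pi(a|s)}\pi(a'|s') = \mathbf{1}(s'=s,a'=a)$ collapses the double sum to
\[
\frac{\partial J^\pi_\Uc}{\partial\pi(a|s)} = d^\pi_\Uc(s)\,Q^\pi_\Uc(s,a),
\]
where $d^\pi_\Uc := d^\pi_{P^\pi_\Uc}$ and $Q^\pi_\Uc := Q^\pi_{P^\pi_\Uc}$. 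This already exhibits the factorized form, so the remaining work is to express the robust occupancy measure $d^\pi_\Uc(s)$ in nominal terms and read off the correction $c^\pi$.

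Next I would invoke Proposition \ref{bg:worstPm}, which gives the rank-one worst kernel $P^\pi_\Uc = \hat P - bk^\top$ with $k$ and $b(s,a)$ as stated there. Forming the policy-weighted kernel yields $(P^\pi_\Uc)^\pi = \hat P^\pi - b^\pi k^\top$ with $b^\pi(s) := \sum_a \pi(a|s)\,b(s,a)$, so that the robust occupancy matrix is the rank-one update $D^\pi_\Uc = \big((D^\pi)^{-1} + \gamma b^\pi k^\top\big)^{-1}$ of the nominal inverse occupancy matrix $(D^\pi)^{-1} = I - \gamma\hat P^\pi$. Applying Sherman–Morrison (Proposition \ref{app:bg:ShermanMorrisonFormula}) gives
\[
D^\pi_\Uc = D^\pi - \frac{\gamma\,D^\pi b^\pi k^\top D^\pi}{1 + \gamma\,k^\top D^\pi b^\pi}.
\]
Left-multiplying by $\mu^\top$ and writing $d^\pi = \mu^\top D^\pi$ and $d^\pi_k := k^\top D^\pi$ produces
\[
d^\pi_\Uc(s) = d^\pi(s) - \gamma\frac{\innorm{d^\pi,b^\pi}}{1 + \gamma\innorm{d^\pi_k,b^\pi}}\,d^\pi_k(s),
\]
which is exactly $d^\pi(s) - c^\pi(s)$ with the claimed correction. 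To finish, I would identify $b^\pi$ with the radius $b(s)$ of the statement in each case: for $\Uc^{\texttt{sa}}_p$, the choice $b(s,a)=\beta_{sa}$ gives $b^\pi(s) = \sum_a \pi(a|s)\beta_{sa}$ directly, while for $\Uc^{\texttt{s}}_p$, $b(s,a)=\beta_s[\pi(a|s)/\norm{\pi_s}_q]^{q-1}$ and the short identity $\sum_a \pi(a|s)\,\pi(a|s)^{q-1} = \sum_a \pi(a|s)^q = \norm{\pi_s}_q^q$ yields $b^\pi(s) = \beta_s\,\norm{\pi_s}_q^q/\norm{\pi_s}_q^{q-1} = \beta_s\,\norm{\pi_s}_q$.

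The main obstacle is the rigorous justification of the first step: the robust return is only piecewise smooth (both through the inner $\min$ over $\Uc$ and through the $\sigma_q$ appearing in the worst value function), so the envelope-theorem interchange must be defended where the minimizer is non-unique or $\sigma_q$ is non-differentiable. I would address this exactly as in the cited subgradient characterization, restricting to the generic set where the worst kernel is unique and taking the valid subgradient otherwise; once that step is secured, the Sherman–Morrison inversion and the Hölder-type identity for $b^\pi$ are routine. A useful consistency check is that contracting the resulting gradient against $\pi$ and $\mu$ reproduces the closed-form robust return of Proposition \ref{rs:Lprvi}.
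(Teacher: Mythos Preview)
The paper does not supply its own proof of this proposition: it is quoted verbatim as a background result from \cite{LpPgRMDP}, with only an ``Intuition'' paragraph following it. So there is no in-paper argument to compare against.

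That said, your approach is correct and is precisely the route one would expect, and it is fully consistent with the machinery the paper itself deploys elsewhere (the rank-one worst kernel of Proposition~\ref{bg:worstPm} together with Sherman--Morrison, Proposition~\ref{app:bg:ShermanMorrisonFormula}). The reduction $\partial_\pi J^\pi_\Uc = d^\pi_\Uc(s)\,Q^\pi_\Uc(s,a)$ via Danskin, then the rank-one inversion of $(I-\gamma \hat P^\pi + \gamma b^\pi k^\top)^{-1}$ to rewrite $d^\pi_\Uc$ as $d^\pi - c^\pi$, and finally the H\"older identity $\sum_a \pi(a|s)^q = \norm{\pi_s}_q^q$ for the $\texttt{s}$-rectangular radius, are all sound. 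One small remark: the statement writes ``taking $k = v^\pi_\Uc$'' whereas the worst-kernel proposition actually sets $k = u^\pi_\Uc$ (the normalized, mean-zero transform of $v^\pi_\Uc$); your proof silently uses the correct $k$ from Proposition~\ref{bg:worstPm}, which is the right thing to do. Your caveat about non-uniqueness of the minimizer and the resulting subdifferential interpretation is exactly the hedge the paper itself makes in the paragraph preceding the proposition.
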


\paragraph{Intuition.} Here, we have the robust Q-value instead of the non-robust one. Note that the correction term $c^\pi$, resulting from switching the occupation measure of the worst kernel against the nominal, is distribution of zero, that is 
\[\innorm{c^\pi,\mathbf{1}}=0,\]
as $\innorm{d^\pi_{k},\mathbf{1}} = k^\top(I-P^\pi_0)^{-1}\mathbf{1} =  \frac{k^\top\mathbf{1}}{1-\gamma}  =0$. Observe that the $c^\pi$ is positive for those states which are on average visited more from good value states compared to bad value states, w.r.t. nominal values. Given the result, we understand that the adversary
spends more time on states which are more visited from bad value states, compared to the nominal agent.

The results holds for rectangular robust MDPs. Unfortunately, nothing is known about the nature of the adversary for non-rectangular uncertainty sets.

\section{Helper Results}

\begin{proposition}[Sherman–Morrison Formula \cite{ShermanMorrisonFormula}.]\label{app:bg:ShermanMorrisonFormula} If \( A \in\R^ {n \times n} \) invertible matrix, and \( u, v \in \R^{n}\), then the matrix \( A + uv^T \) is invertible if and only if \( 1 + v^T A^{-1} u \neq 0 \):
\[
(A + uv^T)^{-1} = A^{-1} - \frac{A^{-1} u v^T A^{-1}}{1 + v^T A^{-1} u}.
\]
\end{proposition}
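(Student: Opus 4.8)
The plan is to reduce the claim to a rank-one perturbation of the identity matrix and then verify an explicitly guessed inverse by direct multiplication. First I would write $A + uv^T = A\,(I + A^{-1}uv^T)$; since $A$ is invertible, $A + uv^T$ is invertible exactly when $I + wv^T$ is, where $w := A^{-1}u$. This isolates the essential object, a rank-one update of $I$, for which both the invertibility criterion and the inverse formula are easy to pin down.

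For the invertibility criterion I would note that $wv^T$ annihilates the hyperplane $\{x : v^Tx = 0\}$ and has $w$ as an eigenvector with eigenvalue $v^Tw$ (the degenerate cases $u = 0$ or $v = 0$ being immediate, since then $A + uv^T = A$ and $1 + v^TA^{-1}u = 1$). Hence the eigenvalues of $I + wv^T$ are $1$, with multiplicity at least $n-1$, together with $1 + v^Tw$, so $\det(I + wv^T) = 1 + v^Tw = 1 + v^TA^{-1}u$, which vanishes precisely when $I + wv^T$ --- equivalently $A + uv^T$ --- fails to be invertible. A determinant-free alternative for the ``only if'' direction: if $1 + v^TA^{-1}u = 0$ and $u \neq 0$, then $x := A^{-1}u \neq 0$ and $(A+uv^T)x = u\,(1 + v^TA^{-1}u) = 0$, exhibiting a nontrivial kernel. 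For the formula itself, assuming $g := 1 + v^TA^{-1}u \neq 0$, I would multiply $A + uv^T$ by the claimed inverse $A^{-1} - g^{-1}A^{-1}uv^TA^{-1}$ and simplify, using that $v^TA^{-1}u = g-1$ is a scalar that factors out: the non-identity contributions collapse to $uv^TA^{-1}\cdot g^{-1}\big(g - 1 - (g-1)\big) = 0$, so the product equals $I$. Since $A+uv^T$ is square, this right inverse is automatically a two-sided inverse.

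The step requiring the most care is the verification computation: one must keep track of which subexpressions are scalars (only $v^TA^{-1}u$) versus vectors or matrices, so that the terms are grouped correctly and the cancellation is legitimate. Beyond this bookkeeping there is no real obstacle, as this is a classical identity whose every step is elementary linear algebra.
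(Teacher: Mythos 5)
Your proof is correct: the reduction to $I + wv^T$ with $w = A^{-1}u$, the kernel-vector argument for the ``only if'' direction ($(A+uv^T)A^{-1}u = (1+v^TA^{-1}u)\,u$), and the direct verification that $A^{-1} - g^{-1}A^{-1}uv^TA^{-1}$ is a right (hence two-sided) inverse when $g = 1+v^TA^{-1}u \neq 0$ together establish the full statement. The paper itself offers no proof of this proposition --- it is stated as a background helper result with a citation to the classical reference --- so there is nothing to compare against; your argument is a standard, self-contained derivation, and the only slight looseness (the eigenvalue count for $I+wv^T$ in the degenerate case $v^Tw=0$) is harmless because the two directions you actually need are carried by the determinant-free kernel argument and the explicit verification.
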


\begin{proposition}\label{rs:kappa}
    \begin{align*}
   \sigma_q(v) :=\min_{w\in\mathbb{R}}\lVert v-w\mathbf{1}\rVert_q, = \min_{\norm{k}_p\leq 1,1^T k=0} k^Tv 
\end{align*}

\end{proposition}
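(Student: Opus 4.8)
The plan is to recognize the claimed identity as an instance of convex (Lagrangian/minimax) duality. The left-hand side $\sigma_q(v)=\min_{w\in\R}\norm{v-w\mathbf{1}}_q$ is the $\ell_q$-distance from $v$ to the line $\R\mathbf{1}$, while the right-hand side is a support functional of $v$ over the $\ell_p$-ball intersected with the annihilator $(\R\mathbf{1})^{\perp}=\{k:\mathbf{1}^\top k=0\}$. One caveat on orientation: the feasible set $\{\norm{k}_p\le1,\ \mathbf{1}^\top k=0\}$ is symmetric under $k\mapsto -k$, so the $\min$ and the $\max$ of $\innorm{k,v}$ over it differ only by a sign; I would therefore prove the nonnegative form $\sigma_q(v)=\max_{\norm{k}_p\le1,\ \mathbf{1}^\top k=0}\innorm{k,v}$, from which the stated equality follows up to that sign.

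First I would dispatch the easy inequality $\sigma_q(v)\ge\innorm{k,v}$ for every feasible $k$. For any $w\in\R$ and any $k$ with $\norm{k}_p\le1$, $\mathbf{1}^\top k=0$, Hölder gives $\innorm{k,v}=\innorm{k,v-w\mathbf{1}}+w\,\mathbf{1}^\top k=\innorm{k,v-w\mathbf{1}}\le\norm{k}_p\norm{v-w\mathbf{1}}_q\le\norm{v-w\mathbf{1}}_q$; taking the infimum over $w$ yields the bound, and then the supremum over feasible $k$.

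For the reverse direction I would use Sion's minimax theorem. Using the dual-norm identity $\norm{x}_q=\max_{\norm{k}_p\le1}\innorm{k,x}$, write $\sigma_q(v)=\min_{w\in\R}\max_{\norm{k}_p\le1}\innorm{k,v-w\mathbf{1}}$. The payoff $(w,k)\mapsto\innorm{k,v-w\mathbf{1}}$ is affine, hence convex--concave and continuous in each variable; $\R$ is convex and the $\ell_p$-ball is convex and compact in finite dimension, so the order of $\min$ and $\max$ may be exchanged. For fixed $k$ the inner problem $\min_{w\in\R}\bigl[\innorm{k,v}-w\,\mathbf{1}^\top k\bigr]$ equals $\innorm{k,v}$ when $\mathbf{1}^\top k=0$ and $-\infty$ otherwise --- this is precisely the step that manufactures the linear constraint --- and the outer maximum is attained by compactness. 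Hence $\sigma_q(v)=\max_{\norm{k}_p\le1,\ \mathbf{1}^\top k=0}\innorm{k,v}$. I would note in passing that $\argmin_w\norm{v-w\mathbf{1}}_q$ is nonempty since $w\mapsto\norm{v-w\mathbf{1}}_q$ is continuous and coercive, so $\omega_q$ used elsewhere is well defined.

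I expect the only genuine subtlety to be the careful invocation of the minimax theorem: verifying the compactness/convexity hypotheses and recognizing that it is the unboundedness of $w\in\R$ that produces the constraint $\mathbf{1}^\top k=0$ and fixes the sign. If one prefers to avoid minimax, an alternative is to exhibit the optimizer explicitly: set $w^*\in\argmin_w\norm{v-w\mathbf{1}}_q$ and $u=v-w^*\mathbf{1}$; if $u=\mathbf{0}$ the claim is trivial, and for $q\in(1,\infty)$ take $k=\operatorname{sign}(u)\circ|u|^{q-1}/\norm{u}_q^{q-1}$. Using $(q-1)p=q$ one checks $\norm{k}_p=1$ and $\innorm{k,u}=\norm{u}_q=\sigma_q(v)$, while first-order stationarity of $w^*$ --- the derivative of $w\mapsto\norm{v-w\mathbf{1}}_q$ at $w^*$ vanishes --- reads $-\mathbf{1}^\top k=0$, so $k$ is feasible and $\innorm{k,v}=\innorm{k,u}=\sigma_q(v)$. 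This $k$ coincides with the normalized balanced value $u^\pi_{\Uc}$ of Property~\ref{rs:noisValCorr} when $v=v^\pi_{\Uc}$, which is worth remarking; but the cases $q\in\{1,\infty\}$ then require subdifferentials, so for a uniform, case-free treatment I would write up the minimax argument.
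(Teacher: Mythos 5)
Your proof is correct, but it is genuinely different from what the paper does: the paper's entire proof of this proposition is a one-line citation (``follows directly from Lemma J.1 of the $L_p$ RMDP paper''), whereas you give a self-contained argument — the easy inequality via $\mathbf{1}^\top k=0$ plus H\"older, and the reverse inequality either by Sion's minimax theorem (dual-norm identity, exchange of $\min_w$ and $\max_{\norm{k}_p\le 1}$, with the unboundedness of $w$ generating the constraint $\mathbf{1}^\top k=0$) or by exhibiting the explicit optimizer $k=\operatorname{sign}(u)\circ\abs{u}^{q-1}/\norm{u}_q^{q-1}$ at $u=v-w^*\mathbf{1}$, which indeed recovers the normalized balanced value function $u^\pi_{\Uc}$ used elsewhere. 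What your route buys is transparency and independence from the external reference, a case-free treatment of all $q\in[1,\infty]$ via the minimax version, and an explicit identification of the maximizer; what the paper's route buys is brevity, since the cited lemma is exactly this statement. You are also right to flag that the right-hand side of the proposition as printed should be a $\max$ (or carry a minus sign): the feasible set is symmetric under $k\mapsto -k$, so the literal $\min$ equals $-\sigma_q(v)$, and the $\max$ form you prove is the one consistent with the paper's other uses (e.g.\ the relation $\sigma_q(v^\pi_\Uc)=\max_{\norm{k}_p\le 1,\,\mathbf{1}^\top k=0}\frac{\innorm{k,v^\pi_R}}{1+\gamma\innorm{k,v^\pi_\beta}}$ and Property~\ref{rs:noisValCorr}). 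No gaps beyond that typographical point.
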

\begin{proof}
Follows directly from Lemma J.1 of \cite{LpRMDP}.
\end{proof}

\begin{proposition} Let $\Uc^{\texttt{sa}}_2,\Uc^{\texttt{s}}_2$ be smallest \texttt{sa}-rectangular set and \texttt{s}-rectangular set containing  $\Uc_2$ then
    \[ \frac{\textbf{vol}(\Uc_2)}{\textbf{vol}(\Uc^{\texttt{sa}}_2)} = O(c_{sa}^{-SA}),\quad \text{and}\quad \frac{\textbf{vol}(\Uc_2)}{\textbf{vol}(\Uc^{\texttt{s}}_2)} = O(c_s^{-S}),\]
    where $\textbf{vol}(X)$ is volume of the set $X$ and $c_s,c_{sa} > 1$ are some constants.
\end{proposition}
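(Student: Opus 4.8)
The plan is to compute the volume of the non-rectangular $L_2$-ball $\Uc_2$ and the volumes of the two smallest rectangular sets containing it, then take ratios. The ambient space is $\R^{S\times A}$ (dimension $n := SA$), restricted to the affine subspace where each row of $P - \hat P$ sums to zero, so in fact the relevant dimension for each state-action (or state) block is $S-1$. I will be slightly informal about this ambient-space issue — the key phenomenon is the sphere-versus-cube volume gap — but I will flag it.

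First I would identify $\Uc^{\texttt{sa}}_2$: the smallest $\texttt{sa}$-rectangular set containing $\Uc_2 = \{P : \|P-\hat P\|_2 \le \beta\}$ is the Cartesian product over $(s,a)$ of the balls $\{P_{sa} : \|P_{sa}-\hat P_{sa}\|_2 \le \beta\}$, since projecting the global ball onto each coordinate block gives exactly a ball of the same radius $\beta$ (and a product of these blocks is the smallest rectangular superset). Thus $\Uc^{\texttt{sa}}_2$ is a product of $SA$ Euclidean balls of radius $\beta$ in dimension $S-1$, so $\textbf{vol}(\Uc^{\texttt{sa}}_2) = \big(\kappa_{S-1}\,\beta^{S-1}\big)^{SA}$ where $\kappa_m$ is the volume of the unit $m$-ball. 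Meanwhile $\Uc_2$ is a single Euclidean ball of radius $\beta$ in dimension $(S-1)SA$, so $\textbf{vol}(\Uc_2) = \kappa_{(S-1)SA}\,\beta^{(S-1)SA}$. The $\beta$ powers match, and the ratio is $\kappa_{(S-1)SA} / \kappa_{S-1}^{SA}$.

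Next I would invoke the known asymptotics of the unit-ball volume: $\kappa_m = \pi^{m/2}/\Gamma(m/2+1)$, which decays super-exponentially, and more usefully $\kappa_{mN} \le (\kappa_m)^{N} \cdot c^{-mN + o(mN)}$ for a constant $c>1$ — concretely, writing $\kappa_m \approx (2\pi e/m)^{m/2}(\pi m)^{-1/2}$ via Stirling, one gets $\kappa_{(S-1)SA}/\kappa_{S-1}^{SA}$ bounded by $(\text{const})^{-(S-1)SA + \cdots}$, which absorbs into $O(c_{sa}^{-SA})$ for a suitable $c_{sa}>1$ (this is exactly the $n$-sphere-inside-$n$-cube ratio $O(2^{-n})$ cited in the paper, here with $n=(S-1)SA$, relative to a product of lower-dimensional spheres rather than cubes — the point is the same). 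For the $\texttt{s}$-rectangular case, the smallest $\texttt{s}$-rectangular superset is the product over $s$ of the balls $\{P_s : \|P_s - \hat P_s\|_2 \le \beta\}$ in dimension $(S-1)A$, giving $\textbf{vol}(\Uc^{\texttt{s}}_2) = \big(\kappa_{(S-1)A}\,\beta^{(S-1)A}\big)^{S}$ and ratio $\kappa_{(S-1)SA}/\kappa_{(S-1)A}^{S} = O(c_s^{-S})$ by the same Stirling estimate.

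**The main obstacle** is not any deep difficulty but rather pinning down the right notion of "smallest rectangular superset" and the ambient dimension: one must justify that the minimal rectangular enclosing set is the product of the coordinate-block projections (true because a rectangular set contains $\Uc_2$ iff each block projection contains the block projection of $\Uc_2$, and the projection of a ball is a ball), and one must be consistent about whether the simplex constraint is imposed (reducing each block to dimension $S-1$) — the stated $O(\cdot)$ bounds hold in either convention since the $\beta$-power bookkeeping is the same and only the base dimension changes. The rest is the elementary Gamma-function estimate. I would present the $\texttt{sa}$ case in full and remark that the $\texttt{s}$ case is identical mutatis mutandis.
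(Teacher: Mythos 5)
Your proof is correct and follows essentially the same route as the paper: compare the volume of the high-dimensional $L_2$ ball $\Uc_2$ against the volume of the smallest enclosing rectangular product, and invoke the super-exponential decay of the unit-ball volume $\kappa_n = \pi^{n/2}/\Gamma(n/2+1)$ to bound the ratio. If anything, your version is more careful than the paper's: the paper compares the $SA$-dimensional sphere to an enclosing \emph{cube} of side $2r$ (implicitly treating each $(s,a)$ block as one coordinate), whereas you correctly identify $\Uc^{\texttt{sa}}_2$ as a product of $SA$ balls of dimension $S-1$ (and $\Uc^{\texttt{s}}_2$ as a product of $S$ balls of dimension $(S-1)A$), justify minimality via block projections, and check that the resulting Gamma-function ratio still gives the claimed $O(c_{sa}^{-SA})$ and $O(c_s^{-S})$ bounds.
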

\begin{proof}
    Volume of n-dimension sphere of radius $r$ is $c_{n} r^n$ where $c_n\leq  \frac{8\pi^2}{15}$ \cite{n-sphere}. And to cover n-dimension sphere of radius $r$, we need cube of radius $2r$ whose volume is $(2r)^n$. Hence the first result $\frac{\textbf{vol}(\Uc_2)}{\textbf{vol}(\Uc^{\texttt{sa}}_2)} = O(2^{-SA})$ immediately follows.

    Now, volume of set of $X = \times_{s\in \St}X_s$ where $X_s$ is an $A$-dimension sphere of radius $r$ then the volume of $X$ is $(c_A r)^S$. And the volume of an $SA$ dimensional sphere is $c_{SA}r^{SA}$, where $\lim_{n\to\infty}c_n\to 0$ \cite{n-sphere}. Hence the ratio of their volume is $O((c_A)^S)$, implying the other result.
\end{proof}

\begin{proposition}\label{app:rs:Set:sa2nr} Non-rectangular uncertainty $\Uc_p$ can be written as an infinite union of \texttt{sa}-rectangular sets $\Uc^{sa}_p$, as
\[ \Uc_{p} = \bigcup_{b\in\mathcal{B}} \Uc^{\texttt{sa}}_p(b),\]
where $\mathcal{B} =\{b\in\R_+^{\St \times \A}\mid  \norm{b}_p\leq \beta\} $. Note that  all of them share the nominal kernel $\hat{P}$.
\end{proposition}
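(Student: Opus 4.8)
The plan is to prove the set equality $\Uc_p = \bigcup_{b \in \mathcal{B}} \Uc^{\texttt{sa}}_p(b)$ by double inclusion, working directly from the definitions of the norms involved. Recall that $\Uc_p = \{P \mid \|P - \hat P\|_p \le \beta,\ \sum_{s'} P(s'|s,a) = 1\}$ where $\|P - \hat P\|_p$ is the $L_p$ norm of $P - \hat P$ viewed as a vector indexed by the full triple $(s,a,s')$, while $\Uc^{\texttt{sa}}_p(b) = \{P \mid \|P_{sa} - \hat P_{sa}\|_p \le b_{sa}\ \forall (s,a),\ \sum_{s'}P(s'|s,a)=1\}$ imposes a per-$(s,a)$ bound. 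The key elementary observation is the identity $\|P - \hat P\|_p^p = \sum_{(s,a)} \|P_{sa} - \hat P_{sa}\|_p^p$, i.e. the global $L_p$ norm decomposes into the sum over $(s,a)$ of the $p$-th powers of the row-wise $L_p$ norms.

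First I would prove the inclusion $\bigcup_{b \in \mathcal{B}} \Uc^{\texttt{sa}}_p(b) \subseteq \Uc_p$. Take any $b \in \mathcal{B}$, so $\|b\|_p \le \beta$, meaning $\sum_{(s,a)} b_{sa}^p \le \beta^p$. For $P \in \Uc^{\texttt{sa}}_p(b)$ we have $\|P_{sa} - \hat P_{sa}\|_p \le b_{sa}$ for every $(s,a)$; raising to the $p$-th power and summing gives $\|P - \hat P\|_p^p = \sum_{(s,a)}\|P_{sa} - \hat P_{sa}\|_p^p \le \sum_{(s,a)} b_{sa}^p \le \beta^p$, hence $P \in \Uc_p$ (the simplex constraints are identical in both sets). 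For the reverse inclusion $\Uc_p \subseteq \bigcup_{b\in\mathcal{B}}\Uc^{\texttt{sa}}_p(b)$, given $P \in \Uc_p$ I would simply \emph{define} the witness radius vector $b$ by $b_{sa} := \|P_{sa} - \hat P_{sa}\|_p$. Then trivially $P \in \Uc^{\texttt{sa}}_p(b)$, and $\|b\|_p^p = \sum_{(s,a)}\|P_{sa} - \hat P_{sa}\|_p^p = \|P - \hat P\|_p^p \le \beta^p$, so $b \in \mathcal{B}$, giving $P \in \bigcup_{b\in\mathcal{B}}\Uc^{\texttt{sa}}_p(b)$. Since every member of the union clearly shares the same nominal kernel $\hat P$, the final remark is immediate. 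Treating $p = \infty$ separately is trivial, as the decomposition identity becomes $\|P - \hat P\|_\infty = \max_{(s,a)} \|P_{sa} - \hat P_{sa}\|_\infty$ and the same argument goes through with max in place of sum.

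I do not anticipate a real obstacle here — the statement is essentially the observation that an $L_p$ ball in a product space is the union of its inscribed axis-aligned "boxes" whose side-length vectors themselves lie in an $L_p$ ball, which is exactly the geometric picture in Figure \ref{fig:main:rs:Set:sa2nr}. The only point requiring a line of care is making sure the simplex (unity-sum-rows) constraints are handled consistently: they appear verbatim in both $\Uc_p$ and each $\Uc^{\texttt{sa}}_p(b)$, so they neither help nor hurt, but one should state explicitly that membership in either side forces $P$ to be a valid transition kernel and that the $\beta$ is assumed small enough for all such $P$ to be well-defined, matching the standing assumption in the paper. If anything, the mild subtlety is purely expository: emphasizing that the union is over a continuum of radius vectors $b$ and that the decomposition is an exact equality (not just a covering), which is what makes the subsequent reduction in \eqref{main:eq:rr:BU} valid.
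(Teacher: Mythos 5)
Your proposal is correct and follows essentially the same argument as the paper: the paper's proof rewrites $\norm{P-\hat{P}}_p^p$ as $\sum_{s,a}\norm{P_{sa}-\hat{P}_{sa}}_p^p$, introduces the witness $b_{sa}=\norm{P_{sa}-\hat{P}_{sa}}_p$, relaxes the equality to an inequality, and reads off the union — which is exactly your two inclusions written as a single chain of set identities. The only cosmetic difference is that you phrase it as double inclusion and note the $p=\infty$ case separately, neither of which changes the substance.
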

\begin{proof}
    By definition, we have
    \begin{align}
        \Uc_p &= \{P\mid \norm{P-\hat{P}}_p\leq \beta,\sum_{s'}P(s'|s,a)=1 \}\\
        &= \{P\mid \sum_{s,a}\norm{P_{sa}-\hat{P}_{sa}}^p_p\leq \beta^p,\sum_{s'}P(s'|s,a)=1 \}\\
        &= \{P\mid \sum_{s,a}b_{sa}^p\leq \beta^p, \norm{P_{sa}-\hat{P}_{sa}}^p_p = b_{sa}^p,\sum_{s'}P(s'|s,a)=1 \}\\
        &= \{P\mid \sum_{s,a}b_{sa}^p\leq \beta^p, \norm{P_{sa}-\hat{P}_{sa}}^p_p \leq b_{sa}^p,\sum_{s'}P(s'|s,a)=1 \}\\
        &= \bigcup_{\sum_{s,a}b_{sa}^p\leq \beta^p,}\{P\mid  \norm{P_{sa}-\hat{P}_{sa}}^p_p \leq b_{sa}^p,\sum_{s'}P(s'|s,a)=1 \}\\
        &= \bigcup_{b\in\mathcal{B}}\Uc^{\texttt{sa}}_p(b).
    \end{align}
\end{proof}

 \begin{proposition}\label{app:rs:projPignHole} For any vector $\norm{x}=1$, we have 
     \[\max\{\norm{Proj_{\R^n_+}(x)},\norm{Proj_{\R^n_+}(-x)}\}\geq \frac{1
     }{\sqrt{2}},\]
    where $\R^n_+$ is positive quadrant.
 \end{proposition}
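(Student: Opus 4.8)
The statement is a clean pigeonhole-type fact about decomposing a unit vector into its positive and negative parts. Write $x = x^+ - x^-$ where $x^+ = \max(x,0)$ and $x^- = -\min(x,0)$ are the componentwise positive and negative parts, so that $x^+, x^- \succeq 0$ and, crucially, they have disjoint supports. The key observation is that $\mathrm{Proj}_{\R^n_+}(x) = x^+$ and $\mathrm{Proj}_{\R^n_+}(-x) = x^-$: projecting onto the positive orthant in the Euclidean norm simply zeroes out the negative coordinates, since the projection acts coordinatewise and $\mathrm{Proj}_{[0,\infty)}(t) = \max(t,0)$.

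\textbf{Main step.} Because $x^+$ and $x^-$ have disjoint supports, the Pythagorean identity gives
\[
\norm{x^+}^2 + \norm{x^-}^2 = \norm{x^+ - x^-}^2 = \norm{x}^2 = 1.
\]
Hence at least one of $\norm{x^+}^2, \norm{x^-}^2$ is $\geq \tfrac12$, so the larger of $\norm{x^+}, \norm{x^-}$ is $\geq \tfrac{1}{\sqrt 2}$. Substituting back $\norm{\mathrm{Proj}_{\R^n_+}(x)} = \norm{x^+}$ and $\norm{\mathrm{Proj}_{\R^n_+}(-x)} = \norm{x^-}$ yields the claim.

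\textbf{Where the care is needed.} There is essentially no hard obstacle here; the only point that deserves a sentence of justification is the identification $\mathrm{Proj}_{\R^n_+}(x) = x^+$. This follows from the fact that the Euclidean projection onto a box (here the orthant $\prod_i [0,\infty)$) decouples across coordinates, and for each coordinate the nearest point in $[0,\infty)$ to $t \in \R$ is $\max(t,0)$. Once that is in place, the rest is the one-line Pythagorean computation above. I would present it in exactly that order: (i) define $x^\pm$ and note disjoint supports, (ii) identify the projections with $x^\pm$, (iii) apply Pythagoras and conclude.
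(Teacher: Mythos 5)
Your proof is correct and follows exactly the same route as the paper: identify $\mathrm{Proj}_{\R^n_+}(x)=x^+$ and $\mathrm{Proj}_{\R^n_+}(-x)=x^-$, use the disjoint-support Pythagorean identity $\norm{x^+}^2+\norm{x^-}^2=\norm{x}^2=1$, and conclude by pigeonhole. Your extra sentence justifying the coordinatewise projection identity is a welcome addition the paper leaves implicit.
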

 \begin{proof}
     For any vector $\norm{x}=1$, we have
     \begin{align*}
         \norm{x_+}^2 + \norm{x_-}^2 = \norm{x}^2 =1.
     \end{align*}
And $Proj_{\R^n_+}(x) = x_+$ and $Proj_{\R^n_+}(-x) = x_-$, the rest follows.
 \end{proof}

\begin{proposition}
    For $\norm{k}_p$ and $k^T\mathbf{1} = 0$, we have
    \[ 1+\gamma k^T(I-\gamma P^\pi)^{-1}b^\pi \geq 0,\]
for all $\pi$, $\norm{b}_p\leq \beta, b\succeq 0$.
\end{proposition}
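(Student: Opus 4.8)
The plan is to leverage the smallness assumption on $\beta$: it guarantees that the rank-one perturbed kernel $\hat P - b k^\top$ is itself a bona fide transition kernel, and that is exactly the leverage needed to control the denominator $1+\gamma k^\top D^\pi b^\pi$ (here $D^\pi=(I-\gamma\hat P^\pi)^{-1}$ is the nominal occupancy matrix, so $D^\pi b^\pi = v^\pi_b$). The proof proceeds in three steps: (i) the perturbed kernel lies in $\mathcal{U}_p$, hence is valid; (ii) Sherman--Morrison then forbids the denominator from vanishing; (iii) a one-parameter homotopy upgrades ``nonzero'' to ``positive''.

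First I would verify the membership $\hat P - bk^\top\in\mathcal{U}_p$ for $k\in\mathcal{K}$ (so $\|k\|_p\le 1$, $\mathbf 1^\top k=0$) and $b\in\mathcal{B}$ (so $b\succeq 0$, $\|b\|_p\le\beta$). Indeed $\|bk^\top\|_p=\|b\|_p\,\|k\|_p\le\beta$ by the product structure of the entrywise $L_p$ norm of a rank-one tensor, and each row of $bk^\top$ sums to $b_{sa}\,\mathbf 1^\top k=0$, so $\sum_{s'}(\hat P-bk^\top)(s'|s,a)=1$. By the standing assumption that every kernel in $\mathcal{U}_p$ is valid for small enough $\beta$, $\hat P - bk^\top$ is row-stochastic; hence so is the induced matrix $(\hat P - bk^\top)^\pi=\hat P^\pi-b^\pi k^\top$ for every policy $\pi$, where $b^\pi_s=\sum_a\pi(a|s)b_{sa}\ge 0$. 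In particular $\gamma\|(\hat P - bk^\top)^\pi\|_\infty=\gamma<1$, so $I-\gamma(\hat P-bk^\top)^\pi$ is invertible. Now I would apply Proposition~\ref{app:bg:ShermanMorrisonFormula} with $A=I-\gamma\hat P^\pi$ (invertible, $A^{-1}=D^\pi$), $u=\gamma b^\pi$, $v=k$: since $A+uv^\top=I-\gamma\hat P^\pi+\gamma b^\pi k^\top=I-\gamma(\hat P-bk^\top)^\pi$ is invertible, the Sherman--Morrison criterion forces $1+v^\top A^{-1}u=1+\gamma k^\top D^\pi b^\pi\ne 0$.

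To pin down the sign I would rerun the same argument along the segment $t\mapsto tb$, $t\in[0,1]$. Each $tb$ still lies in $\mathcal{B}$ (nonnegativity is preserved and $\|tb\|_p\le\beta$), so the step above applies verbatim and yields $g(t):=1+\gamma t\, k^\top D^\pi b^\pi\ne 0$ for all $t\in[0,1]$. The function $g$ is affine, hence continuous, with $g(0)=1>0$; by the intermediate value theorem $g$ cannot change sign on $[0,1]$, so $g(1)=1+\gamma k^\top D^\pi b^\pi>0$, which is the assertion (in fact with strict inequality). I expect the one genuinely delicate point to be this last passage from nonvanishing to positivity: the naive Hölder estimate $|\gamma k^\top D^\pi b^\pi|\le\gamma\,\sigma_q(v^\pi_b)$ does not obviously beat $1$ without quantifying how small ``small enough $\beta$'' is, so the homotopy argument is the clean way around it. (Equivalently, one could invoke the matrix determinant lemma to write $1+\gamma k^\top D^\pi b^\pi=\det\!\big(I-\gamma(\hat P-bk^\top)^\pi\big)/\det(I-\gamma\hat P^\pi)$ and note both determinants are positive since the eigenvalues of a stochastic matrix lie in the closed unit disc, so each factor $1-\gamma\lambda$ has positive real part; this is slicker but uses an identity not recorded in the excerpt.)
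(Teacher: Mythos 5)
Your proposal is correct and follows essentially the same route as the paper: the paper also argues via the Sherman--Morrison formula that validity of the perturbed kernel $\hat P - bk^\top$ (equivalently, finiteness of $J^\pi_{\hat P - bk^\top}$) forces the denominator $1+\gamma k^\top D^\pi b^\pi$ to be nonvanishing. Where you go beyond the paper is the affine homotopy $t\mapsto tb$ upgrading ``nonzero'' to ``positive''; the paper jumps directly from finiteness to ``strictly greater than zero,'' so your continuity step (together with the explicit check that $\hat P - bk^\top\in\mathcal{U}_p$) is a careful patch of a gap the paper leaves implicit rather than a genuinely different method.
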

\begin{proof}
This is true from the Sherman–Morrison formula as $J^\pi_{\hat{P}-bk^T}$ is finite, hence the denominator must be strictly greater than zero.
\end{proof}

\subsection{Binary Search Approach}\label{sec:BinarySearch}

\begin{proposition} \label{app:rs:fracProg:opt:cond} For $\lambda^* =\max_{x\in C} \frac{g(x)}{h(x)}$ , $F(\lambda) := \max_{x\in C}\Bigm( g(x) - \lambda h(x)\Bigm)$ , 
we have $F(\lambda^*)=0$ and   $f(\lambda)\geq 0 \iff \lambda^* \geq \lambda$. 
\end{proposition}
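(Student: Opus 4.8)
The plan is to recognize this as the standard Dinkelbach parametrization of a fractional program and to prove the two claims by elementary manipulations, the only structural input being that the denominator $h$ is strictly positive on $C$ (in our application this is the positivity of $1+\gamma\innorm{k,v^\pi_b}$ established earlier) and that the feasible set $C$ is such that the maxima defining $\lambda^*$ and $F(\lambda)$ are attained. Throughout I assume $h(x)>0$ for all $x\in C$ and $C\neq\emptyset$; under these assumptions $F$ is a finite pointwise maximum of the affine, strictly decreasing maps $\lambda\mapsto g(x)-\lambda h(x)$, hence convex and non-increasing in $\lambda$, which is the qualitative picture underpinning the bisection.

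First I would prove the equivalence $F(\lambda)\ge 0\iff \lambda\le\lambda^*$. For the forward direction, if $F(\lambda)\ge 0$ then there is $x\in C$ with $g(x)-\lambda h(x)\ge 0$; dividing by $h(x)>0$ gives $g(x)/h(x)\ge\lambda$, and since $\lambda^*\ge g(x)/h(x)$ by definition of the maximum, we get $\lambda^*\ge\lambda$. For the converse, let $x^*\in\argmax_{x\in C} g(x)/h(x)$, so $g(x^*)/h(x^*)=\lambda^*\ge\lambda$; multiplying by $h(x^*)>0$ yields $g(x^*)-\lambda h(x^*)\ge 0$, whence $F(\lambda)\ge g(x^*)-\lambda h(x^*)\ge 0$.

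Then $F(\lambda^*)=0$ follows by sandwiching: the inequality $F(\lambda^*)\ge 0$ is the $\lambda=\lambda^*$ instance of the equivalence just shown (equivalently, $g(x^*)-\lambda^* h(x^*)=0$ directly exhibits a nonnegative value); and $F(\lambda^*)\le 0$ because for every $x\in C$ we have $g(x)/h(x)\le\lambda^*$, hence $g(x)-\lambda^* h(x)\le 0$, and taking the maximum over $x$ preserves this bound. Combining the two gives $F(\lambda^*)=0$.

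I do not expect a genuine obstacle here; the content is bookkeeping. The one point that must not be glossed over is the strict positivity of $h$ on $C$: it is precisely what lets us pass between $g(x)-\lambda h(x)\ge 0$ and $g(x)/h(x)\ge\lambda$ without flipping the inequality, and it is exactly the Sherman–Morrison denominator condition $1+\gamma\innorm{k,v^\pi_b}>0$ that makes the fractional objective in Lemma \ref{main:rs:RPE:dual} well-posed. A secondary point is attainment of the maxima, which holds because the relevant feasible sets ($\mathcal{K}$, $\mathcal{B}$, and their product) are compact and the objectives continuous; if one prefers to argue with suprema instead, the same reasoning goes through with the maximizer $x^*$ replaced by $\epsilon$-near-optimal points and a limiting argument.
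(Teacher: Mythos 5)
Your proof is correct and follows essentially the same route as the paper's: both argue by dividing/multiplying by the strictly positive denominator $h(x)$ to pass between $g(x)-\lambda h(x)\ge 0$ and $g(x)/h(x)\ge\lambda$, and obtain $F(\lambda^*)=0$ by combining the two resulting inequalities. Your write-up is in fact slightly more careful than the paper's, since you prove the backward implication directly via the maximizer $x^*$ and make the attainment and positivity hypotheses explicit, whereas the paper leaves these implicit.
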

\begin{proof}
\begin{itemize}
    \item If $F(\lambda)\geq 0$ then 
\begin{align*}
&\exists x \quad s.t.\quad  g(x) - \lambda h(x)\geq 0 \\
   \implies &\exists x \quad s.t.\quad  \frac{g(x)}{h(x)} \geq \lambda,\qquad \text{(as $h(x)>0 $ for all $x$)}  \\
   \implies & \max_{x\in C} \frac{g(x)}{h(x)} \geq \lambda.  
\end{align*} 
    \item If $F(\lambda)\leq 0$ then 
\begin{align*}
& g(x) - \lambda h(x)\leq 0,\qquad \forall x \in C \\
   \implies &  \frac{g(x)}{h(x)} \leq \lambda,\qquad \forall x \in C,\qquad  \text{(as $h(x)>0 $ )}  \\
   \implies & \max_{x\in C} \frac{g(x)}{h(x)} \leq \lambda
\end{align*} 
\item If $F(\lambda)= 0$ then $\lambda=\max_{x\in C} \frac{g(x)}{h(x)}$ implied from the above two items.
\end{itemize}

\end{proof}

\subsection{NP-Hardness of non-rectangualr RMDP}\label{app:sec:IP2RMDP}

\begin{tcolorbox}[colframe=black, colback=gray!10, title={\centering Reduction of Integer Program to Robust MDP} ]
\textbf{0/1 Integer Program (IP):} For $ g,c\in \Z^n,\zeta\in\Z,F\in\Z^{m\times n},$
\[  \exists x \in \{0,1\}^n \quad s.t. \quad Fx \leq g\quad \text{and}\quad c^\top x\leq \zeta?\] 
 is a NP-Hard  problem \cite{Intractability}, \cite{RVI} which reduces into following robust MDP.

\paragraph{Robust MDP:}
\begin{enumerate}
    \item State Space $\St =\{b_j,b^0_j,b^1_j\mid j=1,\cdots,n \}\cup \{c_0,\tau\}$, where $\tau$ is a terminal state.
    \item Singleton Action Space: $\A$= \{a\}.
    \item Uncertainty set: $\Uc =\{P_\xi\mid \xi\in [0,1]^n, F\xi\leq g\}$ 
    \item Discount factor $\gamma\in [0,1)$; Uniform initial state distribution $\mu$.
    \item Big reward $M \geq \frac{\gamma A n\sum_{i}c_i}{2\epsilon^2}$ where $\epsilon << 1$ helps in rounding. 
     \item Transitions and rewards are illustrated in Figure \ref{fig:IP2RMP} 
\end{enumerate}
\end{tcolorbox}

\begin{figure}[ht]
         \centering
\includegraphics[width=0.5\linewidth]{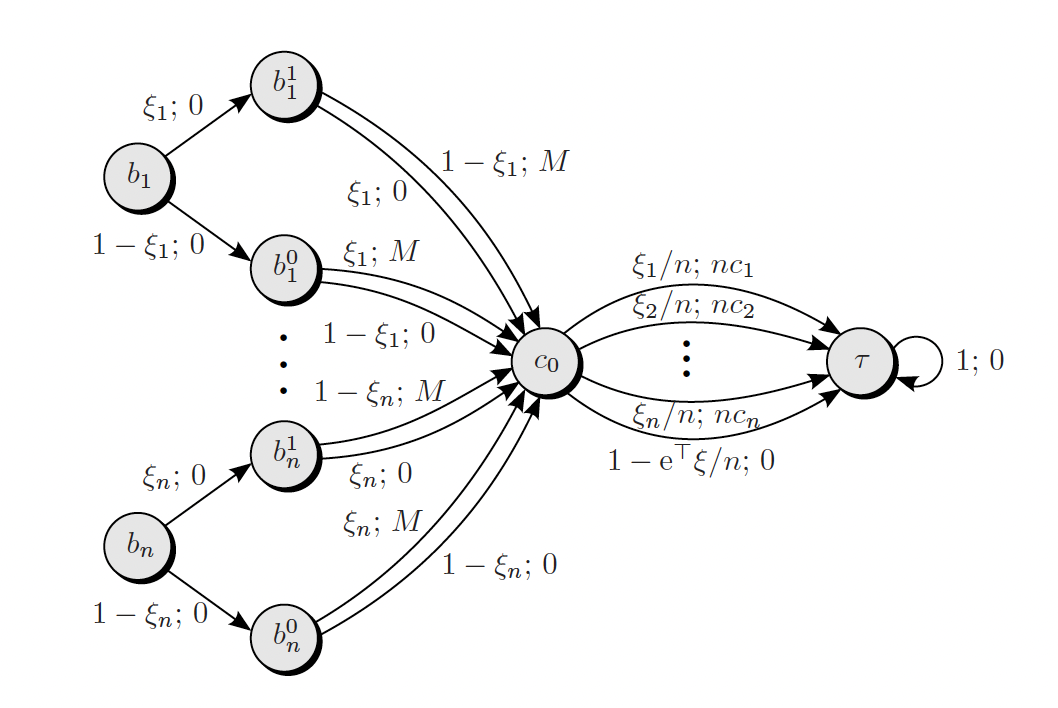}
         \caption{MDP $P_\xi$, and $R$(Figure 5 of \cite{RVI}).}
         \label{fig:IP2RMP}
     \end{figure}
Robust policy evaluation is proven to be NP-hard for general uncertainty sets defined as intersections of finite hyperplanes \cite{RVI}. Specifically, robust MDPs with uncertainty set $\Uc_{hard} := \{P_\xi| F\xi \leq g, \xi \in [0,1]^n\}$ where $P_\xi$ is a specially designed kernel with ladder structure with only action (effectively no decision) and a terminal state \cite{RVI}.

Note that $F\xi \leq g$ imposes $m$-linear constraints on $\Uc_{hard}$ while we allow only one global constraint on $\Uc_p$. Observe that $\Uc_1= \{P_\xi\mid \mathbf{1}^\top\xi\leq g, \xi \in [0,1]^n\}$ is nearest uncertainty to $\Uc_{hard}$ as both have polyhedral structure. This restrict the class of the IP programms to have a number of constraint $m=1$ and the row of $F $ to be all ones. In other words, only IP programmes that can be reduced to $\Uc_1$ are of the following form: For $,c\in \Z^n,\zeta\in\Z$, 
\[   \exists x \in \{0,1\}^n \quad s.t. \quad \mathbf{1}^Tx \leq g, \quad \text{and}\quad  c^Tx \leq \zeta?\]
\textbf{Solution:} 
\begin{itemize}
    \item Case 1) If $g <0$ then \textbf{no}.
    \item Case 2) If $g=0,\zeta\geq0$ then \textbf{yes} and  $g=0,\zeta <0$ then \textbf{yes}.
    \item If $g>0$ then compute the sum of $g$ smallest coordinate of $c$, and this sum is less/equal than $\zeta$ then answer is \textbf{yes}, otherwise \textbf{no}.
\end{itemize}

Further, for IP to be reducable to robust MDPs, the diameter of the uncertainty ($\max_{P,P'\in\Uc_{hard}}\norm{P-P'}_1=2S $) has to be large for the practical settings. Loosly speaking, robust MDPs with a $\Uc_p$ uncertainty have one global constraint and a small radius $\beta$, which corresponds to a Knapsack Problem with a small budget (IP with one constraint and a small $g$) which are much easier to solve \cite{Boyd_Convex_Optimization,Intractability}.

We can thus conclude that the hardness result of \cite{RVI} doesn't apply to our uncertainty case. 

\section{Dual Formulation}

\paragraph{s-rectangular uncertainty sets.} Now, we turn our attention to the uncertainties coupled across different actions in each state.

\begin{lemma}
     For the s-rectangular uncertainty set $\Uc^s_p$, the robust return can be written as
    \[ J^\pi_{\Uc^\texttt{s}_p} =   J^\pi- \gamma\min_{\norm{b_s}_p\leq \beta_s,\norm{k}_p \leq 1, \innorm{1,k}=0}\frac{ \innorm{d^\pi, b^\pi}
    \innorm{k,v^\pi}}{1 + \gamma k^{\top}D^\pi b^\pi},  \]
    where $b\in\R^{\St\times\A}$,  $b_s = b_{s\cdot}$, and $b^\pi(s) = \sum_{a}\pi(a|s)b_{sa}$.
\end{lemma}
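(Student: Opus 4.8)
The plan is to mirror, one state at a time, the argument behind Lemma~\ref{main:rs:RPE:dual}: the only change from the non-rectangular case is that the single global budget $\norm{b}_p\le\beta$ is replaced by one budget per state, $\norm{b_s}_p\le\beta_s$. First I would decompose $\Uc^{\texttt{s}}_p$ into a union of \texttt{sa}-rectangular $L_p$ sets. For a fixed $s$ the block norm splits, $\norm{P_s-\hat P_s}_p^p=\sum_a\norm{P_{sa}-\hat P_{sa}}_p^p$, so $\norm{P_s-\hat P_s}_p\le\beta_s$ holds iff there is a vector $(b_{sa})_a\succeq 0$ with $\sum_a b_{sa}^p\le\beta_s^p$ and $\norm{P_{sa}-\hat P_{sa}}_p\le b_{sa}$ for all $a$. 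Doing this for every $s$ gives the per-state analogue of Proposition~\ref{app:rs:Set:sa2nr},
\[\Uc^{\texttt{s}}_p=\bigcup_{\,\norm{b_s}_p\le\beta_s\;\forall s}\Uc^{\texttt{sa}}_p(b),\]
and therefore $J^\pi_{\Uc^{\texttt{s}}_p}=\min_{\norm{b_s}_p\le\beta_s\,\forall s}\;\min_{P\in\Uc^{\texttt{sa}}_p(b)}J^\pi_P=\min_{\norm{b_s}_p\le\beta_s\,\forall s}J^\pi_{\Uc^{\texttt{sa}}_p(b)}$.

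Next I would reduce the inner problem to rank-one perturbations. By Proposition~\ref{bg:worstPm} applied with the \texttt{sa}-rectangular set $\Uc^{\texttt{sa}}_p(b)$, its worst kernel has the form $\hat P-bk^\top$ with $k\in\K=\{k\mid\norm{k}_p\le1,\ \mathbf{1}^\top k=0\}$; conversely every such kernel lies in $\Uc^{\texttt{sa}}_p(b)$, since $\sum_{s'}\big(\hat P(s'|s,a)-b_{sa}k(s')\big)=1$ (because $\mathbf{1}^\top k=0$), $\norm{b_{sa}k}_p=b_{sa}\norm{k}_p\le b_{sa}$, and $\beta$ (hence $b$) is small enough for all entries to stay nonnegative. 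Hence $J^\pi_{\Uc^{\texttt{sa}}_p(b)}=\min_{k\in\K}J^\pi_{\hat P-bk^\top}=\min_{k\in\K}\mu^\top D^\pi_{\hat P-bk^\top}R^\pi$.

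Then I would evaluate $D^\pi_{\hat P-bk^\top}$ by the Sherman--Morrison formula, exactly as in the non-rectangular proof. Averaging the rank-one perturbation over $\pi$ gives $(\hat P-bk^\top)^\pi=\hat P^\pi-b^\pi k^\top$ with $b^\pi(s)=\sum_a\pi(a|s)b_{sa}$, so $I-\gamma(\hat P-bk^\top)^\pi=(D^\pi)^{-1}+\gamma b^\pi k^\top$. Applying Proposition~\ref{app:bg:ShermanMorrisonFormula} with $A=(D^\pi)^{-1}$, $u=\gamma b^\pi$, $v=k$ gives $D^\pi_{\hat P-bk^\top}=D^\pi-\gamma\,D^\pi b^\pi k^\top D^\pi/(1+\gamma\,k^\top D^\pi b^\pi)$, and multiplying on the left by $\mu^\top$ and on the right by $R^\pi$, with $d^\pi=\mu^\top D^\pi$ and $v^\pi=D^\pi R^\pi$,
\[J^\pi_{\hat P-bk^\top}=J^\pi-\gamma\,\frac{\innorm{d^\pi,b^\pi}\,\innorm{k,v^\pi}}{1+\gamma\,k^\top D^\pi b^\pi}.\]
The denominator is strictly positive (as recorded by the corresponding proposition in the appendix), so the formula is well defined. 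Substituting into the chain of minimizations above, and using that $J^\pi$ does not depend on $(b,k)$, yields $J^\pi_{\Uc^{\texttt{s}}_p}=\min_{\norm{b_s}_p\le\beta_s,\,k\in\K}\big[J^\pi-\gamma\,\innorm{d^\pi,b^\pi}\innorm{k,v^\pi}/(1+\gamma\,k^\top D^\pi b^\pi)\big]$, which is the stated identity.

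I expect the main obstacle to be the step that lets the adversary, on each \texttt{sa}-rectangular slice $\Uc^{\texttt{sa}}_p(b)$, restrict attention to a single perturbation direction $k$ shared across all pairs $(s,a)$: this is the content of the \texttt{sa}-rectangular $L_p$ theory (Proposition~\ref{bg:worstPm}) and tacitly requires $\beta$ small enough that every rank-one perturbed kernel is still a valid transition kernel. Once that reduction is in place, the remainder is the same Sherman--Morrison computation already used for the non-rectangular Lemma~\ref{main:rs:RPE:dual}, and the only genuinely new element is carrying the budgets state by state through the union.
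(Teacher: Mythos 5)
Your proposal is correct and follows essentially the same route as the paper: decompose $\Uc^{\texttt{s}}_p$ into a union of \texttt{sa}-rectangular sets with per-state budgets $\sum_a b_{sa}^p\le\beta_s^p$, invoke the rank-one structure of the \texttt{sa}-rectangular worst kernel, apply Sherman--Morrison, and then combine the minimizations (your extra check that every kernel $\hat P-bk^\top$ with $k\in\K$ lies back in $\Uc^{\texttt{sa}}_p(b)$ is slightly more careful than the paper's argument). Note only that what you actually derive is $J^\pi-\gamma\max_{b,k}(\cdot)$, which matches the paper's own proof; the ``$\min$'' in the lemma's display is evidently a typo in the statement rather than an issue with your argument.
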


\begin{proof}
The proof follows similarly to the \texttt{sa}-rectangular case, and can be found in the appendix. The key additional step is to decompose the \texttt{s}-rectangular uncertainty set $\Uc^{s}_p$ into a union of \texttt{sa}-rectangular uncertainty sets $\Uc^{sa}_p$.  
\end{proof}

The above result formulates the robust return in terms of nominal values only for the first time. This implies the robust objective can be rewritten in the dual form as :
\[J^*_{\Uc^{\texttt{s}}_p} = \max_{D\in \mathcal{D} }\min_{k\in\mathcal{K},b\in\mathcal{B}}\Bigm[\mu^TD R^\pi -\gamma \mu^TDb^\pi \frac{k^TD R^\pi}{1 +\gamma  k^TD b^\pi}\Bigm]\]
where $\mathcal{D} = \{(I-\gamma P^\pi_0)^{-1}\mid \pi \in \Pi\}$,  $\mathcal{K} = \{k\in \R^{\St} \mid \norm{k}_p=1,1^Tk = 0 \}$, and $\mathcal{B}=\{b\in R^{\St\times\A}\mid \norm{b_s}_p\leq \beta_s\}$.

Comparing the penalty term from the previous results in \cite{LpRMDP,LpPgRMDP}, the dual formulation can be written as 
\[J^*_{\Uc^{\texttt{s}}_p} = \max_{D\in \mathcal{D} }\min_{k\in\mathcal{K}}\Bigm[\mu^TD R^\pi -\gamma \mu^TD\beta^\pi \frac{k^TD R^\pi}{1 +\gamma  k^TD \beta^\pi}\Bigm]\]
where $\beta^\pi_s = \norm{\pi_s}_q\beta_s$.

Surprisingly, the optimization here looks as if it is optimized for the same value of $\beta^\pi_s =\max_{\sum_{a}\beta^p_{sa}\leq \beta^p_s}\sum_{a}\pi(a|s)\beta_{sa}=\beta_{s}\norm{\pi_s}_q\ $  for all values of feasible $k$. This suggest that  the adversary payoff is maximized by maximizing the expected uncertainty in the trajectories. 

\begin{lemma} For the sa-rectangular uncertainty set $\Uc = \Uc^{sa}_p(\beta)$ with radius vector $\beta\in \R^{\St\times\A}$, the robust return can be written as the following optimization problem,
    \[ J^\pi_\Uc =   J^\pi -\gamma \max_{\norm{k}_p=1,1^Tk = 0}\frac{\mu^TD^\pi\beta^\pi k^TD^\pi R^\pi}{1 +\gamma  k^TD^\pi \beta^\pi},\]
where $\beta^\pi_s = \sum_{a}\pi(a|s)\beta_{sa}$.
\end{lemma}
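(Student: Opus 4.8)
The plan is to establish the closed-form expression for $J^\pi_{\Uc^{sa}_p(\beta)}$ by combining two ingredients already available in the excerpt: the structural characterization of the \texttt{sa}-rectangular worst kernel as a rank-one perturbation (Proposition~\ref{bg:worstPm} / Proposition~\ref{main:bg:worstPm}), and the Sherman--Morrison identity (Proposition~\ref{app:bg:ShermanMorrisonFormula}). First I would recall that for the \texttt{sa}-rectangular $L_p$ set the adversary's kernel has the form $P^\pi_{\Uc} = \hat P - b k^\top$ with $b(s,a) = \beta_{sa}$ (so that $b^\pi(s) = \sum_a \pi(a|s)\beta_{sa}$) and $k \in \K = \{k : \norm{k}_p \le 1, \mathbf 1^\top k = 0\}$; moreover, since the minimization over the \texttt{sa}-rectangular set is attained at such a rank-one perturbation, we may write
\[
J^\pi_{\Uc} = \min_{k \in \K}\, \mu^\top D^\pi_{\hat P - bk^\top} R^\pi .
\]

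Next I would apply Sherman--Morrison to $D^\pi_{\hat P - bk^\top} = (I - \gamma(\hat P - bk^\top)^\pi)^{-1} = (I - \gamma\hat P^\pi + \gamma b^\pi k^\top)^{-1}$, noting that $(bk^\top)^\pi$ acts on the value function as $b^\pi (k^\top v)$ because the perturbation direction $k$ is shared across actions and only $b$ carries the policy weighting. This yields
\[
D^\pi_{\hat P - bk^\top} R^\pi = D^\pi R^\pi - \gamma \frac{D^\pi b^\pi\, k^\top D^\pi R^\pi}{1 + \gamma k^\top D^\pi b^\pi},
\]
and contracting with $\mu^\top$ on the left gives $J^\pi_{\Uc} = J^\pi - \gamma \frac{(\mu^\top D^\pi b^\pi)(k^\top D^\pi R^\pi)}{1 + \gamma k^\top D^\pi b^\pi}$. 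Since minimizing $J^\pi_{\hat P - bk^\top}$ over $k$ is the same as maximizing the subtracted penalty term, I would conclude
\[
J^\pi_{\Uc} = J^\pi - \gamma \max_{\norm{k}_p = 1,\ \mathbf 1^\top k = 0} \frac{\mu^\top D^\pi \beta^\pi\, k^\top D^\pi R^\pi}{1 + \gamma k^\top D^\pi \beta^\pi},
\]
where I would also remark that restricting the maximization to $\norm{k}_p = 1$ (rather than $\le 1$) is justified because the penalty is homogeneous of degree one in the "numerator direction" while the sign can always be chosen to make it nonnegative, so the optimum lies on the boundary sphere (this is essentially Proposition~\ref{rs:kappa} / the GSTD duality).

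The main obstacle — really the only subtle point — is justifying the reduction from the full \texttt{sa}-rectangular minimization to a minimization over rank-one perturbations, i.e.\ that $\argmin_{P \in \Uc^{sa}_p(\beta)} J^\pi_P$ contains a kernel of the form $\hat P - bk^\top$ with $b$ the fixed radius vector. I would handle this by citing Proposition~\ref{bg:worstPm} directly (it is stated for exactly $\Uc^{sa}_p$), or alternatively by invoking Proposition~\ref{rs:Lprvi}, which already gives $J^\pi_{\Uc^{sa}_p} = J^\pi - \gamma \sigma_q(v^\pi_{\Uc^{sa}_p}) \sum_{s,a} d^\pi(s)\pi(a|s)\beta_{sa}$; one then checks that the $k$ achieving $\sigma_q(v^\pi_{\Uc}) = \max_{k} k^\top v^\pi_\Uc / \text{(stuff)}$ via Proposition~\ref{rs:kappa} reproduces the displayed fractional form, with the denominator $1 + \gamma k^\top D^\pi\beta^\pi$ staying strictly positive by the finiteness argument in the helper proposition on $1 + \gamma k^\top(I-\gamma P^\pi)^{-1}b^\pi \ge 0$. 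A minor bookkeeping step is keeping straight that $\beta^\pi$ denotes the policy-averaged radius and that $v^\pi = v^\pi_R = D^\pi R^\pi$, so that $k^\top D^\pi R^\pi = \innorm{k, v^\pi}$ and $\mu^\top D^\pi \beta^\pi = \innorm{d^\pi, \beta^\pi}$, matching the notation of Lemma~\ref{main:rs:RPE:dual}.
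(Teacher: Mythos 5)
Your proposal is correct and follows essentially the same route as the paper's own proof: invoke the rank-one worst-kernel characterization from \cite{LpPgRMDP} to restrict the minimization over $\Uc^{sa}_p(\beta)$ to perturbations $\hat P - \beta k^\top$ with $k\in\K$, then apply the Sherman--Morrison formula and pass from minimizing the return to maximizing the penalty term. The extra remarks you add (positivity of the denominator, optimum on the sphere $\norm{k}_p=1$, and the alternative check via the GSTD closed form) are consistent with the paper's supporting propositions and do not change the argument.
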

\begin{proof}
    
   From \cite{LpPgRMDP}, we know that the worst kernel $P^\pi_{\Uc^{sa}_p(\beta)}$ for the uncertainty set $\Uc^{sa}_p(\beta)$ is a rank one-perturbation of $P$. In other words,
   \[P^\pi_{\Uc^{sa}_p(\beta)} = P + \beta k^T\] for some $k\in\R^{\St}$ satisfying $\norm{k}_p=1$ and $1^Tk = 0$. This implies that it is enough to look for rank-one perturbations of the nominal kernel $\hat{P}$ in order to find the robust return. That is, 
   \begin{align*}
    J^\pi_{\Uc^{sa}_p(\beta)} &= \min_{P\in \Uc^{sa}_p(\beta)} J^\pi_P\\&= \min_{P = \hat{P} + \beta k^T, \norm{k}_p=1,1^Tk = 0} J^\pi_P,\qquad\text{(looking only at rank one perturbations)}\\
    &= \min_{P = \hat{P} + \beta k^T, \norm{k}_p=1,1^Tk = 0}\mu^TD^\pi_P R^\pi
   \\&= \min_{P = \hat{P} + \beta k^T, \norm{k}_p=1,1^Tk = 0}\mu^T(I-\gamma P^\pi)^{-1} R^\pi\\&= \min_{\norm{k}_p=1,1^Tk = 0}\mu^T\Bigm(I-\gamma (P^\pi+\beta^\pi k^T)\Bigm)^{-1} R^\pi\\&=  J^\pi -\gamma \max_{\norm{k}_p=1,1^Tk = 0}\frac{\mu^TD^\pi\beta^\pi k^TD^\pi R^\pi}{1 +\gamma  k^TD^\pi \beta^\pi}.
\end{align*}
\end{proof}

\begin{lemma}
     For $\Uc = \Uc^s_p$, the robust return can be written as the following optimization problem,
    \[ J^\pi_\Uc =   J^\pi- \gamma\min_{\norm{\beta}_p\leq \epsilon,\norm{k}_p \leq 1, \innorm{1,k}=0}\frac{ \innorm{d^\pi, \beta^\pi}
    \innorm{k,v^\pi}}{1 + \gamma k^{\top}D^\pi \beta^\pi},  \]
    where $D^{\pi} = (I-\gamma P^{\pi})^{-1}$, $d^\pi = \mu^{T}D^\pi$ and $v^\pi = D^\pi R^\pi $.
\end{lemma}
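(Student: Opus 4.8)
The plan is to reduce the \texttt{s}-rectangular problem to the \texttt{sa}-rectangular lemma proved just above, via the same two-stage strategy used there (and in the non-rectangular case): first peel off the per-state radius allocation, then invoke the rank-one / Sherman--Morrison reduction already carried out for \(\Uc^{sa}_p(\beta)\). \textbf{Step 1 (decomposition).} The first move is the \texttt{s}-rectangular analogue of Proposition~\ref{app:rs:Set:sa2nr}. Since the matrix \(L_p\) norm is entrywise, \(\norm{P_s-\hat P_s}_p^p=\sum_{a}\norm{P_{sa}-\hat P_{sa}}_p^p\), a kernel \(P\) lies in \(\Uc^s_p\) iff for every state \(s\) there is a nonnegative allocation \(\beta_{s\cdot}\) with \(\norm{\beta_s}_p\le\beta_s\) and \(\norm{P_{sa}-\hat P_{sa}}_p\le\beta_{sa}\) for all \(a\); relaxing ``\(=\)'' to ``\(\le\)'' under the budget constraint leaves the union unchanged, by the monotonicity argument of Proposition~\ref{app:rs:Set:sa2nr}. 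Hence \(\Uc^s_p=\bigcup_{\beta\in\mathcal{B}}\Uc^{sa}_p(\beta)\), where \(\mathcal{B}\) is the set of radius allocations \(\beta\succeq0\) obeying the per-state \(L_p\)-budgets, and consequently
\[
J^\pi_{\Uc^s_p} \;=\; \min_{\beta\in\mathcal{B}}\ \min_{P\in\Uc^{sa}_p(\beta)} J^\pi_P \;=\; \min_{\beta\in\mathcal{B}}\ J^\pi_{\Uc^{sa}_p(\beta)}.
\]

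\textbf{Step 2 (insert the \texttt{sa} formula and collapse).} Into this last expression I would substitute the closed form from the preceding \texttt{sa}-rectangular lemma, \(J^\pi_{\Uc^{sa}_p(\beta)}=J^\pi-\gamma\max_{\norm{k}_p\le1,\ \mathbf 1^\top k=0}\frac{\innorm{d^\pi,\beta^\pi}\innorm{k,v^\pi}}{1+\gamma k^\top D^\pi\beta^\pi}\) with \(\beta^\pi_s=\sum_a\pi(a|s)\beta_{sa}\) (the unit-sphere constraint \(\norm{k}_p=1\) there may be relaxed to \(\norm{k}_p\le1\), as the maximum is nonnegative); that formula itself rests on \cite{LpPgRMDP} (the worst \texttt{sa}-rectangular kernel is a rank-one perturbation \(\hat P-\beta k^\top\)), the Sherman--Morrison identity (Proposition~\ref{app:bg:ShermanMorrisonFormula}), and the positivity \(1+\gamma k^\top D^\pi\beta^\pi>0\) established earlier. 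Because \(J^\pi\) is constant and the coefficient \(-\gamma\) is negative, the outer \(\min_{\beta}\) commutes through and merges with the inner extremization over \(k\), giving the single joint optimization
\[
J^\pi_{\Uc^s_p} \;=\; J^\pi - \gamma\,\max_{\beta\in\mathcal{B},\ \norm{k}_p\le1,\ \mathbf 1^\top k=0}\ \frac{\innorm{d^\pi,\beta^\pi}\innorm{k,v^\pi}}{1+\gamma k^\top D^\pi\beta^\pi},
\]
which is the claimed identity. (The statement writes the joint extremum as a \(\min\); this is only the sign convention for the perturbation direction \(k\) — with \(P=\hat P-\beta k^\top\) it is a maximum, and since \(\beta=0\in\mathcal{B}\) the extremal value is \(\ge0\), consistent with \(J^\pi_{\Uc^s_p}\le J^\pi\).)

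\textbf{Main obstacle.} The only ingredient genuinely new relative to the \texttt{sa}-rectangular and non-rectangular lemmas is Step~1: proving that the \texttt{s}-rectangular \(L_p\) ball factorizes \emph{exactly} as a union of \texttt{sa}-rectangular \(L_p\) balls whose radius vectors are themselves constrained state-by-state in \(L_p\). The delicate points are that this uses the specific entrywise form of the matrix \(L_p\) norm (so that the per-state constraint separates additively over actions), and that the resulting fractional objective is continuous with its extremum attained — that is, the feasible \((\beta,k)\)-set is compact and the denominator stays uniformly bounded away from zero — which, as everywhere in the paper, follows from the radius being small enough. Given Step~1, Step~2 is a mechanical repetition of the \texttt{sa}-rectangular argument.
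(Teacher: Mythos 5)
Your proposal is correct and follows essentially the same route as the paper: decompose the per-state $L_p$ budget of $\Uc^s_p$ into per-$(s,a)$ radius allocations (i.e.\ write $\Uc^s_p$ as a union of \texttt{sa}-rectangular sets $\Uc^{sa}_p(\beta)$ with $\sum_a \beta_{sa}^p \le \beta_s^p$), substitute the \texttt{sa}-rectangular closed form obtained from the rank-one/Sherman--Morrison argument, and merge the outer minimization over $\beta$ with the inner extremization over $k$. Your remarks on the sign convention (the extremum is a max under $P=\hat P-\beta k^\top$, despite the ``min'' in the statement) and on the per-state rather than global constraint on $\beta$ are consistent with the paper's own proof, which indeed ends with a joint max over $\{\sum_a\beta^p_{sa}\le\beta^p_s,\ \norm{k}_p=1,\ \mathbf{1}^\top k=0\}$.
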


\begin{proof}

\begin{align*}
    & J^\pi_{\Uc^s_p(\beta)} = \min_{\norm{P_s-(P)_s}^p_p = \beta_s^p, 1^TP_{sa}=1} J^\pi_P \\&=\min_{\sum_{a}\beta^p_{sa}\leq \beta^p_s} \min_{\norm{P_{sa}-(P)_{sa}}_p = \beta_{sa}, 1^TP_{sa}=1} J^\pi_P \\&=\min_{\sum_{a}\beta^p_{sa}\leq \beta^p_s}  J^\pi_{\Uc^{sa}_p(\beta)} \\&=\min_{\sum_{a}\beta^p_{sa}\leq \beta^p_s}\Bigm[ J^\pi -\gamma \max_{\norm{k}_p=1,1^Tk = 0}\frac{\mu^TD^\pi\beta^\pi k^TD^\pi R^\pi}{1 +\gamma  k^TD^\pi \beta^\pi}\Bigm]\\\\&= J^\pi -\gamma \max_{\sum_{a}\beta^p_{sa}\leq \beta^p_s,\norm{k}_p=1,1^Tk = 0}\frac{\mu^TD^\pi\beta^\pi k^TD^\pi R^\pi}{1 +\gamma  k^TD^\pi \beta^\pi}.
\end{align*}
\end{proof}

\begin{lemma}
     For $\Uc = \Uc_p$, the robust return can be written as the following optimization problem
    \[ J^\pi_\Uc =   J^\pi- \gamma\min_{\norm{\beta}_p\leq \epsilon,\norm{k}_p \leq 1, \innorm{1,k}=0}\frac{ \innorm{d^\pi, \beta^\pi}
    \innorm{k,v^\pi_R}}{1 + \gamma \innorm{k,v^\pi_\beta}},  \]
    where $D^\pi = (I-\gamma P^\pi)^{-1}$, $d^\pi = \mu^TD^\pi$ and $v^\pi = D^\pi R^\pi$.
\end{lemma}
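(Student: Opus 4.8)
The plan is to mirror the \texttt{s}-rectangular argument given just above, but replace its action-wise decomposition of the uncertainty set by the global decomposition of Proposition~\ref{app:rs:Set:sa2nr}. So the first step is to write $\Uc_p=\bigcup_{b\in\mathcal{B}}\Uc^{\texttt{sa}}_p(b)$ with $\mathcal{B}=\{b\in\R_+^{\St\times\A}\mid\norm{b}_p\le\beta\}$, all members sharing the nominal kernel $\hat{P}$. Since a minimum over a union of sets is the minimum over the index of the minima over each piece,
\[
J^\pi_{\Uc_p}=\min_{P\in\Uc_p}J^\pi_P=\min_{b\in\mathcal{B}}\ \min_{P\in\Uc^{\texttt{sa}}_p(b)}J^\pi_P=\min_{b\in\mathcal{B}}J^\pi_{\Uc^{\texttt{sa}}_p(b)}.
\]

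The second step is to evaluate the inner \texttt{sa}-rectangular quantity in closed form, reusing the \texttt{sa}-rectangular lemma established above with the radius vector $b$ in place of $\beta$. By Proposition~\ref{main:bg:worstPm}, the worst kernel inside $\Uc^{\texttt{sa}}_p(b)$ is the rank-one perturbation $\hat{P}-b k^\top$ with $k\in\mathcal{K}=\{k\mid\norm{k}_p\le 1,\ \mathbf{1}^\top k=0\}$, so the inner minimum reduces to minimizing $\mu^\top D^\pi_{\hat{P}-bk^\top}R^\pi$ over $k\in\mathcal{K}$ alone. Writing $P^\pi=\hat{P}^\pi-b^\pi k^\top$ with $b^\pi(s)=\sum_a\pi(a|s)b_{sa}$ and applying the Sherman--Morrison identity (Proposition~\ref{app:bg:ShermanMorrisonFormula}) to $I-\gamma P^\pi=(I-\gamma\hat{P}^\pi)+(\gamma b^\pi)k^\top$ yields
\[
J^\pi_{\hat{P}-bk^\top}=J^\pi-\gamma\,\frac{\innorm{d^\pi,b^\pi}\,\innorm{k,v^\pi_R}}{1+\gamma\innorm{k,v^\pi_b}},
\]
with $v^\pi_R=D^\pi R^\pi$ and $v^\pi_b=D^\pi b^\pi$; here the denominator is strictly positive because the robust return is finite (the positivity proposition in the appendix). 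Optimizing over $k$ then gives $J^\pi_{\Uc^{\texttt{sa}}_p(b)}=J^\pi-\gamma\max_{k\in\mathcal{K}}\innorm{d^\pi,b^\pi}\innorm{k,v^\pi_R}/(1+\gamma\innorm{k,v^\pi_b})$.

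The third step is to substitute this back into the first display and pull the constant $J^\pi$ outside the $\min_{b\in\mathcal{B}}$; the outer minimization over $b$ and the inner optimization over $k$ then collapse into a single joint optimization over $(b,k)\in\mathcal{B}\times\mathcal{K}$, which is exactly the fractional program in the statement (equivalently, Lemma~\ref{main:rs:RPE:dual}; there the optimization variable is the one written $\beta$ in the present statement). I do not expect a serious obstacle: the substance is already carried by Proposition~\ref{app:rs:Set:sa2nr} and by the \texttt{sa}-rectangular results, which apply verbatim to an arbitrary valid radius vector $b\in\mathcal{B}$ and not merely to $\beta$; and the $\min$-of-$\min$ over the union is legitimate since each $\Uc^{\texttt{sa}}_p(b)$ is compact and $P\mapsto J^\pi_P$ is continuous for $\beta$ small. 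The only item needing genuine care is the bookkeeping in the Sherman--Morrison step — keeping the policy-weighted vectors $b^\pi$ and $v^\pi_b$ distinct from $b$ itself — together with checking positivity of $1+\gamma\innorm{k,v^\pi_b}$.
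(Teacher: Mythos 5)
Your proposal is correct and follows essentially the same route as the paper's own proof: decompose $\Uc_p$ into the nested minimization over radius vectors $b$ and the corresponding \texttt{sa}-rectangular sets $\Uc^{\texttt{sa}}_p(b)$, invoke the rank-one worst-kernel structure together with Sherman--Morrison to obtain the closed form of $J^\pi_{\Uc^{\texttt{sa}}_p(b)}$, and collapse the outer and inner optimizations into one joint fractional program. Like the paper's derivation, you end with the penalty as a maximum over $(b,k)$, which is the intended reading of the lemma (the ``min'' in the statement is an inconsistency in the paper, not a flaw in your argument).
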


\begin{proof}
Now,
\begin{align*}
    & J^\pi_{\Uc_p(\epsilon)} = \min_{\norm{P-P}^p_p = \epsilon^p, 1^TP_{sa}=1} J^\pi_P \\&=\min_{\norm{\beta}^p_p\leq \epsilon^p} \min_{\norm{P_{sa}-(P)_{sa}}_p = \beta_{sa}, 1^TP_{sa}=1} J^\pi_P \\&=\min_{\norm{\beta}^p_p\leq \epsilon^p}  J^\pi_{\Uc^{sa}_p(\beta)} \\&=\min_{\norm{\beta}_p\leq \epsilon}\Bigm[ J^\pi -\gamma \max_{\norm{k}_p=1,1^Tk = 0}\frac{\mu^TD^\pi\beta^\pi k^TD^\pi R^\pi}{1 +\gamma  k^TD^\pi \beta^\pi}\Bigm]\\\\&= J^\pi -\gamma \max_{\norm{\beta}_p\leq \epsilon,\norm{k}_p=1,1^Tk = 0}\frac{\mu^TD^\pi\beta^\pi k^TD^\pi R^\pi}{1 +\gamma  k^TD^\pi \beta^\pi}.
\end{align*}
\end{proof}

The above result formulates the robust return in terms of nominal values only, for the first time.
Comparing with the existing result, we get a very interesting relation: 
\begin{align}\label{eq:kappaDual}
   \sigma_q(v^\pi_\Uc)=  \max_{\norm{k}_p=1,1^Tk = 0}\frac{k^Tv^\pi_R}{1 +\gamma  k^Tv^\pi_\beta},
\end{align}
where $v^\pi_x = (I-\gamma P^\pi)^{-1}x^\pi$.

The LHS is a robust quantity (variance of the robust return) which is express in the terms of purely nominal quantities. This is the simplest of all such relations. We believe that the above relation can help in theoretical derivations and experiment design but not exactly sure how yet.

\paragraph{Intuition on the adversary.} We know that the $\sigma(v^\pi_\Uc)$ is the penalty for robustness, that is 
\[ J^\pi_\Uc =  J^\pi -\gamma \innorm{d^\pi,\beta^\pi}\sigma_q(v^\pi_\Uc).\]

Knowing $\sigma(v^\pi_\Uc)$ how it arises, sheds the light on the working of the adversary in \eqref{eq:kappaDual}. Further more, recall that if $P = P -\beta k^T$ then 
\[ J^\pi_P =  J^\pi - \innorm{d^\pi,\beta^\pi}\frac{k^Tv^\pi_R}{1 +\gamma  k^Tv^\pi_\beta}.\]
Basically, $k$ is the direction the adversary discourages the perturbation of the kernel. And the optimal direction $k$ that the adversary chooses is the one that optimizes \eqref{eq:kappaDual}.

\paragraph{s-rectangular uncertainty sets.} Now, we move our attention to the coupled uncertainty case.

\begin{lemma}
     For $\Uc = \Uc^s_p$, the robust return can be written as the following optimization problem
    \[ J^\pi_\Uc =   J^\pi- \gamma\min_{\norm{\beta}_p\leq \epsilon,\norm{k}_p \leq 1, \innorm{1,k}=0}\frac{ \innorm{d^\pi, \beta^\pi}
    \innorm{k,v^\pi}}{1 + \gamma k^{\top}D^\pi \beta^\pi},  \]
    where $D^\pi = (I-\gamma P^\pi)^{-1}$, $d^\pi = \mu^TD^\pi$ and $v^\pi = D^\pi R^\pi$.
\end{lemma}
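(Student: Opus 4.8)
The plan is to reduce the \texttt{s}-rectangular minimization to a one-parameter family of \texttt{sa}-rectangular minimizations, each available in closed form from the results already proved, and then to collapse the resulting nested optimization into a single one. Write $\B:=\{b\in\R^{\St\times\A}\mid \norm{b_s}_p\le\beta_s\ \forall s\}$. The first step is the set identity $\Uc^{\texttt{s}}_p=\bigcup_{b\in\B}\Uc^{\texttt{sa}}_p(b)$, which is the per-state-budget analogue of Proposition~\ref{app:rs:Set:sa2nr} and rests on the additivity $\norm{P_s-\hat P_s}_p^p=\sum_a\norm{P_{sa}-\hat P_{sa}}_p^p$: given $P\in\Uc^{\texttt{s}}_p$, set $b_{sa}:=\norm{P_{sa}-\hat P_{sa}}_p$, so that $\norm{b_s}_p^p=\norm{P_s-\hat P_s}_p^p\le\beta_s^p$ and hence $P\in\Uc^{\texttt{sa}}_p(b)$; the reverse inclusion is immediate. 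Passing to minima over the union gives $J^\pi_{\Uc^{\texttt{s}}_p}=\min_{P\in\Uc^{\texttt{s}}_p}J^\pi_P=\min_{b\in\B}\min_{P\in\Uc^{\texttt{sa}}_p(b)}J^\pi_P=\min_{b\in\B}J^\pi_{\Uc^{\texttt{sa}}_p(b)}$.

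For each fixed $b\in\B$ I would then substitute the \texttt{sa}-rectangular closed form derived above, $J^\pi_{\Uc^{\texttt{sa}}_p(b)}=J^\pi-\gamma\max_{k\in\K}\frac{\innorm{d^\pi,b^\pi}\,\innorm{k,v^\pi}}{1+\gamma\,k^\top D^\pi b^\pi}$, whose proof uses that the \texttt{sa}-worst kernel is a rank-one perturbation $\hat P-bk^\top$ (Proposition~\ref{bg:worstPm}) — reducing the minimization over $P\in\Uc^{\texttt{sa}}_p(b)$ to a search over $k\in\K$ — together with the Sherman–Morrison formula (Proposition~\ref{app:bg:ShermanMorrisonFormula}) applied to $(I-\gamma\hat P^\pi+\gamma b^\pi k^\top)^{-1}$, the denominator $1+\gamma k^\top D^\pi b^\pi$ being nonzero because $J^\pi_P$ is finite on the uncertainty set. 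Combining with the first step and using $\min_{b}\big[J^\pi-\gamma\max_{k}g(b,k)\big]=J^\pi-\gamma\max_{b\in\B,\,k\in\K}g(b,k)$, the two layers merge into a single joint optimization over $\B\times\K$; rewriting $\mu^\top D^\pi b^\pi=\innorm{d^\pi,b^\pi}$, $k^\top D^\pi R^\pi=\innorm{k,v^\pi}$ and $b^\pi(s)=\sum_a\pi(a|s)b_{sa}$ then yields exactly the claimed linear-fractional program.

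I expect the main obstacle to be the first step: verifying that $\Uc^{\texttt{s}}_p=\bigcup_{b\in\B}\Uc^{\texttt{sa}}_p(b)$ is an exact set equality rather than only an inclusion, and that it is indifferent to whether $\Uc^{\texttt{sa}}_p(b)$ is defined with $\norm{P_{sa}-\hat P_{sa}}_p\le b_{sa}$ or with equality, since the two agree once the union over all $b\in\B$ is taken. A secondary bookkeeping point is that the rank-one reduction and the strict positivity of $1+\gamma k^\top D^\pi b^\pi$ must hold uniformly for $b\in\B$, so the inner closed form is legitimately available for every $b$ before the outer minimization is carried out; everything else — the Sherman–Morrison computation, the $\min$–$\max$ sign manipulation, and the inner-product rewrites — is routine and already executed for the rectangular $L_p$ sets in \cite{LpRMDP, LpPgRMDP}.
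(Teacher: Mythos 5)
Your proposal is correct and follows essentially the same route as the paper: decompose $\Uc^{\texttt{s}}_p$ into a union of \texttt{sa}-rectangular sets with per-state radius budgets, plug in the \texttt{sa}-rectangular closed form obtained from the rank-one worst kernel and the Sherman--Morrison formula, and then merge the nested $\min$--$\max$ into a single joint optimization over $(b,k)$. The bookkeeping points you flag (equality versus inequality radii inside the union, and positivity of the denominator uniformly in $b$) are handled the same way in the paper's argument, so no gap remains.
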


\begin{proof}
The proof follows similarly to the \texttt{sa}-rectangular case, and can be found in the appendix. The key additional step is to decompose the \texttt{s}-rectangular uncertainty set $\Uc^{s}_p$ into  as a union of \texttt{sa}-rectangular uncertainty sets $\Uc^{sa}_p$.  
\end{proof}

Comparing the penalty term from the previous results in \cite{LpRMDP,LpPgRMDP}, we get
\begin{align*}\label{eq:kappa:srect}
&\sum_{s}d^\pi(s)\norm{\pi_s}_q\sigma_q(v^\pi_\Uc)=\\&\max_{\sum_{a}\beta^p_{sa}\leq \beta^p_s,\norm{k}_p=1,1^Tk = 0}\frac{(d^\pi\beta^\pi)( k^Tv^\pi)}{1 +\gamma  k^TD^\pi \beta^\pi}.
\end{align*}
Again, the above relation looks very interesting as it relates the robust term on LHS with non-robust terms on RHS.

Surprisingly, the optimization here looks as if it is optimized for the same value of $\beta^\pi_s =\max_{\sum_{a}\beta^p_{sa}\leq \beta^p_s}\sum_{a}\pi(a|s)\beta_{sa}=\beta_{s}\norm{\pi_s}_q\ $  for all values of feasible $k$. This suggests that  the adversary's payoff is maximized by maximizing the expected uncertainty in the trajectories. 

\paragraph{Looking at GSTD from two angles  } From the binary search method at section \ref{sec:BinarySearch},  we know that $\sigma_q(v^\pi_\Uc)$ is the solution to the following: 
\begin{align}
    &\max_{\norm{k}_p\leq 1, 1^Tk =0}\Bigm[ k^Tv^\pi_R - x (1+\gamma k^Tv^\pi_\beta)\Bigm] =0\\
    &\max_{\norm{k}_p\leq 1, 1^Tk =0} k^T\bigm(v^\pi_R - \gamma x v^\pi_\beta\bigm) =x\\ &\max_{\norm{k}_p\leq 1, 1^Tk =0} k^TD^\pi\bigm(R^\pi - \gamma x \beta^\pi\bigm) =x
\end{align}

\[ v^\pi_R = \gamma\frac{k^Tv^\pi_R}{1+\gamma k^Tv^\pi_\beta}v^\pi_\beta = \gamma \sigma(v^\pi_\Uc)v^\pi_\beta.\]
Its maybe possible to make this process online, simultaneously updating $x\to v^\pi_\Uc$, $v$ and $k$.

\section{Robust Policy Evaluation}

\begin{lemma} \label{app:rs:RPE:binSearch}
The robust return can be expressed as \[J^\pi_{\Uc_p} = J^\pi - \lambda^*,\] where the penalty \(\lambda^*\) is a fixed point of \(F(\lambda)\). Furthermore, $\lambda^*$ can be found via binary search as $ 
F(\lambda) > \lambda $ if and only if $ \lambda > \lambda^*,$
where \(F(\lambda) = \max_{b \in \mathcal{B}} \|E^\pi b\|_q\), \(E^\pi = \gamma \Big(I - \frac{\mathbf{1}\mathbf{1}^\top}{S}\Big)\Big[D^\pi R^\pi \mu^\top D^\pi - \lambda D^\pi\Big]H^\pi\), and \(H^\pi R := R^\pi\).
\end{lemma}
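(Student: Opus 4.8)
The plan is to read Lemma~\ref{app:rs:RPE:binSearch} as the Dinkelbach reformulation of the linear-fractional program already produced in Lemma~\ref{main:rs:RPE:dual}, and then to extract the fixed-point and bracketing claims from the generic fractional-programming characterization of Proposition~\ref{app:rs:fracProg:opt:cond}. Concretely, Lemma~\ref{main:rs:RPE:dual} gives
\[
J^\pi_{\Uc_p} = J^\pi - \gamma\max_{b\in\B,\,k\in\K}\frac{\innorm{k, v^\pi_R}\,\innorm{d^\pi, b^\pi}}{1 + \gamma\innorm{k, v^\pi_b}},
\]
so the penalty $\lambda^* := J^\pi - J^\pi_{\Uc_p}$ is exactly the optimal value of a linear-fractional program over $(b,k)\in\B\times\K$. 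For fixed $b$ the numerator and denominator are affine in $k$, and, crucially, the denominator $1+\gamma\innorm{k, v^\pi_b}$ is strictly positive on $\B\times\K$ (the Sherman--Morrison positivity fact in the Helper Results, valid in the small-radius regime). This positivity is what makes the Dinkelbach machinery applicable.

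Next I would collapse the inner maximization over $k$. Writing the associated Dinkelbach function
\begin{align*}
g(\lambda) &:= \max_{b\in\B,\,k\in\K}\Big[\gamma\innorm{k, v^\pi_R}\innorm{d^\pi, b^\pi} - \lambda\big(1 + \gamma\innorm{k, v^\pi_b}\big)\Big]\\
&= -\lambda + \gamma\max_{b\in\B}\,\max_{k\in\K}\,\innorm{k,\, \innorm{d^\pi, b^\pi}v^\pi_R - \lambda v^\pi_b},
\end{align*}
I would invoke Proposition~\ref{rs:kappa} to evaluate $\max_{k\in\K}\innorm{k, w} = \sigma_q(w)$, and then identify $\sigma_q(w) = \|(I-\tfrac{\mathbf{1}\mathbf{1}^\top}{S})w\|_q$. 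Substituting $w = \innorm{d^\pi, b^\pi}v^\pi_R - \lambda v^\pi_b$ and rewriting the nominal quantities as $v^\pi_R = D^\pi R^\pi$, $\innorm{d^\pi, b^\pi} = \mu^\top D^\pi b^\pi$, $v^\pi_b = D^\pi b^\pi$, and $b^\pi = H^\pi b$, the bracketed vector is precisely $E^\pi_\lambda b$, so the inner optimum equals $\|E^\pi_\lambda b\|_q$. This yields the clean identity
\[
g(\lambda) = F(\lambda) - \lambda, \qquad F(\lambda) = \max_{b\in\B}\|E^\pi_\lambda b\|_q.
\]

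With this identity both claims are immediate. Applying Proposition~\ref{app:rs:fracProg:opt:cond} with $C=\B\times\K$ to the ratio $\tfrac{\gamma\innorm{k,v^\pi_R}\innorm{d^\pi,b^\pi}}{1+\gamma\innorm{k,v^\pi_b}}$ (whose optimal value is $\lambda^*$ and whose Dinkelbach function is $g$) gives $g(\lambda^*) = 0$, i.e., $F(\lambda^*) = \lambda^*$, which is the fixed-point statement. The same proposition's sign characterization of $g$ then transfers to the bracketing rule $F(\lambda) > \lambda \iff \lambda > \lambda^*$ of the lemma; the direction of this equivalence is dictated entirely by the sign of the denominators $1 + \gamma\innorm{k, v^\pi_b}$, whose strict positivity makes $\lambda\mapsto g(\lambda)$ strictly monotone with a single sign change at $\lambda^*$. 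Feeding this sign test into bisection on $[0,\tfrac{1}{1-\gamma}]$ is exactly Algorithm~\ref{alg:RPE}.

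I expect the main obstacle to lie in the two identities of the middle step rather than in the fractional-programming bookkeeping. First, turning $\max_{k\in\K}\innorm{k, w}$ into the centered norm $\|(I-\tfrac{\mathbf{1}\mathbf{1}^\top}{S})w\|_q$ requires both the dual-norm duality of Proposition~\ref{rs:kappa} and the identification of the optimal centering shift with the arithmetic mean; the latter is transparent for $q = 2$ but needs care for general $q$, where the mean need not be the $q$-optimal shift. Second, the strict positivity of $1 + \gamma\innorm{k, v^\pi_b}$ uniformly over $\B\times\K$ must be certified from the small-$\beta$ well-posedness assumption, since it is precisely what guarantees that $g$ has a single sign change --- and hence that the sign test in Algorithm~\ref{alg:RPE} is exact and converges linearly.
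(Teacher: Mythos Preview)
Your proposal is correct and follows essentially the same approach as the paper: apply the Dinkelbach transform (Proposition~\ref{app:rs:fracProg:opt:cond}) to the fractional program of Lemma~\ref{main:rs:RPE:dual}, collapse the inner maximization over $k$ to obtain $F(\lambda)-\lambda$, and read off the fixed-point and sign-test statements. The only notable difference is in how the $k$-maximization is collapsed: the paper does not pass through $\sigma_q$ at all but instead invokes Proposition~\ref{app:rs:projection2matrix} (the set identity $\K=\W$) to replace $\max_{k\in\K}k^\top w$ directly by $\max_{\|k\|_p\le 1}k^\top(I-\tfrac{\mathbf{1}\mathbf{1}^\top}{S})w=\|(I-\tfrac{\mathbf{1}\mathbf{1}^\top}{S})w\|_q$ via H\"older, which sidesteps precisely the $q$-mean identification you flagged as delicate for $q\neq 2$.
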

\begin{proof}
We want to evaluate the following
\[\lambda^* := \max_{b\in\mathcal{B},k\in \mathcal{K} }\gamma  \frac{k^TD^\pi R^\pi \mu^TD^\pi b^\pi}{1 +\gamma  k^TD^\pi b^\pi}.\]
This is of the form $\max_{x}\frac{f(x)}{g(x)}$ . Then according to Proposition \ref{app:rs:fracProg:opt:cond}, we have $f(\lambda^*) =0$ and $f(\lambda)>0$ if and only if $\lambda^* > \lambda$, where

\begin{align*}
    & f(\lambda) := \max_{b\in\mathcal{B},k\in \mathcal{K} } \left[ \gamma k^T A^\pi b^\pi -\lambda(1 +\gamma  k^T D^\pi b^\pi) \right] \\
    &=  \max_{b\in\mathcal{B},k\in \mathcal{K} } k^\top C^\pi b -\lambda,\\
    &=  \max_{b\in\mathcal{B},\norm{k}_p\leq 1 } k^\top \left(I - \frac{\mathbf{1}\mathbf{1}^T}{S} \right) C^\pi b -\lambda, \quad\text{(from Proposition \ref{app:rs:projection2matrix})}\\
    &=  \max_{b\in\mathcal{B}} \norm{\left(I - \frac{\mathbf{1}\mathbf{1}^T}{S} \right) C^\pi b}_{q} -\lambda, \quad\text{(Holder's inequality)}
\end{align*}

where $A^\pi  =D^\pi R^\pi \mu^TD^\pi, 
C^\pi := \gamma \Bigm(A^\pi -\lambda D^\pi\Bigm) H^\pi $.

\end{proof}



\section{Robust Policy Improvement}
In the previous section, , we found that the worst kernel is a rank-one perturbation of the nominal kernel. Exploiting this, we developed a method to evaluate the robust policy efficiently. This methods also computes the perturbation ($\beta k^T$), and consequently the worst kernel. 

We can use it directly to compute the gradient w.r.t. the policy for this computed worst kernel. Then, we can apply policy improvement by gradient ascent as in \cite{RPG_conv}:
\begin{align}
    \pi_{n+1} = proj\Bigm[\pi_n +\eta_k\nabla_\pi  J^{\pi}_{P_n}\mid_{\pi=\pi_n}\Bigm],
\end{align}
where $P_n$ is the estimate of the worst kernel for $\pi_k$. This has global convergence guarantees with iteration complexity of $O(\epsilon^{-4})$ \cite{RPG_conv}.

Alternatively, we can derive the policy gradient for the approximated perturbation, as done in the result below.

\begin{lemma}[Approximate Policy Gradient Theorem] Given the transition kernel $P = \hat{P}-\beta k^\top$,
the return is given as 
\begin{align*}
     J^\pi_P &:= J^\pi_0 - \gamma\frac{  J^\pi_\beta\innorm{ k,v_R^\pi}}{1 + \gamma \innorm{k,v^\pi_\beta}}, 
\end{align*}
and the gradient is given as 
\begin{align*}&
\nabla _\pi  J^\pi_P = d^\pi\circ Q^\pi_{R} - \gamma\frac{  k^{\top}v^\pi_R }{1 + \gamma k^{\top}v^\pi_\beta}d^\pi\circ Q^\pi_{\beta}\\&- \gamma\frac{  J^\pi_\beta( k^\top D^\pi)}{1 + \gamma k^{\top}v^\pi_\beta}\circ Q^\pi_{R}+\gamma^2\frac{  J^\pi_\beta (k^{\top}v^\pi)( k^\top D^\pi)}{(1 + \gamma k^{\top}v^\pi_\beta)^2}\circ Q^\pi_{\beta}. 
\end{align*}

\end{lemma}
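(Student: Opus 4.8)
The plan is to differentiate the closed-form expression for $J^\pi_P$ term by term with respect to $\pi(a|s)$, treating $k$ and $\beta$ as fixed (since this is the gradient at a \emph{fixed} worst kernel $P = \hat{P} - \beta k^\top$, not the envelope gradient). The starting point is the identity $J^\pi_P = J^\pi_0 - \gamma \frac{J^\pi_\beta \innorm{k,v^\pi_R}}{1+\gamma\innorm{k,v^\pi_\beta}}$, which itself follows from the Sherman--Morrison expansion already established in the excerpt (Lemma~\ref{main:rs:RPE:dual} specialized to a single $(b,k)$ pair, with $b=\beta$). Here every quantity $J^\pi_0 = \mu^\top D^\pi R^\pi$, $J^\pi_\beta = \mu^\top D^\pi \beta^\pi$, $v^\pi_R = D^\pi R^\pi$, $v^\pi_\beta = D^\pi \beta^\pi$, and $D^\pi = (I-\gamma P^\pi_0)^{-1}$ is a nominal object depending on $\pi$ through $P^\pi_0$, $R^\pi$, and $\beta^\pi$.

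First I would recall the standard (non-robust) policy gradient building block: for a fixed reward-like vector $r$, $\nabla_\pi \big(\mu^\top D^\pi r^\pi\big) = d^\pi \circ Q^\pi_r$, where $Q^\pi_r(s,a) = r(s,a) + \gamma \sum_{s'}\hat P(s'|s,a) v^\pi_r(s')$ and $(u\circ v)(s) = u(s)v(s)$ --- this is just the policy gradient theorem applied with reward $r$. Applying this with $r = R$ gives $\nabla_\pi J^\pi_0 = d^\pi\circ Q^\pi_R$ (the first term), and with $r = \beta$ gives $\nabla_\pi J^\pi_\beta = d^\pi \circ Q^\pi_\beta$. I also need $\nabla_\pi \innorm{k, v^\pi_R}$ and $\nabla_\pi\innorm{k,v^\pi_\beta}$: since $\innorm{k,v^\pi_r} = k^\top D^\pi r^\pi$, this is again a policy-gradient-type quantity but with a \emph{constant} left-multiplier $k^\top$ in place of $\mu^\top$; the same derivation (differentiating $D^\pi$ via $\nabla D^\pi = \gamma D^\pi (\nabla P^\pi_0) D^\pi$ and handling $\nabla R^\pi$, $\nabla\beta^\pi$) yields $\nabla_\pi \innorm{k,v^\pi_r} = (k^\top D^\pi)\circ Q^\pi_r$, i.e. the occupancy-weighting vector $d^\pi$ is replaced by $k^\top D^\pi$. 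Then I assemble everything with the quotient/product rule: writing $N = J^\pi_\beta \innorm{k,v^\pi_R}$ and $M = 1+\gamma\innorm{k,v^\pi_\beta}$, we have $\nabla_\pi J^\pi_P = \nabla_\pi J^\pi_0 - \gamma\frac{(\nabla N)M - N(\nabla M)}{M^2}$, and expanding $\nabla N = (\nabla J^\pi_\beta)\innorm{k,v^\pi_R} + J^\pi_\beta \nabla\innorm{k,v^\pi_R}$ and $\nabla M = \gamma\nabla\innorm{k,v^\pi_\beta}$, substituting the four gradient formulas above, and collecting the four resulting terms should reproduce exactly the stated expression (the $-N\nabla M / M^2$ piece producing the last term with the $M^2$ denominator and the $\gamma^2$, the $(\nabla N)/M$ piece splitting into the second and third terms).

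The main obstacle is bookkeeping rather than conceptual: correctly differentiating $D^\pi$ through its dependence on $\pi$ via $P^\pi_0(s'|s) = \sum_a \pi(a|s)\hat P(s'|s,a)$ and simultaneously through $R^\pi(s) = \sum_a \pi(a|s)R(s,a)$ and $\beta^\pi(s)=\sum_a\pi(a|s)\beta_{sa}$, making sure the two contributions combine into the clean $Q$-function form $r(s,a)+\gamma\sum_{s'}\hat P(s'|s,a)v^\pi_r(s')$ rather than leaving separate reward and transition pieces; and then keeping the signs straight through the quotient rule so that the $+\gamma^2(\cdots)/M^2$ term emerges with the correct coefficient $J^\pi_\beta(k^\top v^\pi)(k^\top D^\pi)$. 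A subtle point worth flagging is that $v^\pi$ in the last term means $v^\pi_R = D^\pi R^\pi$ (the nominal value function), which comes from $\innorm{k,v^\pi_R}$ inside $N$; I would state this explicitly to avoid ambiguity. One should also note non-differentiability is not an issue here because $k,\beta$ are held fixed, so the expression is a genuine smooth function of $\pi$ on the relevant domain (where $M>0$, guaranteed by the positivity of the denominator established earlier in the excerpt).
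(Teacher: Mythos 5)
Your proposal is correct and follows essentially the same route as the paper: the return formula is taken from the Sherman--Morrison (rank-one inverse) expansion established earlier, and the gradient is obtained by differentiating that closed form with the quotient rule, using the policy gradient theorem with reward $R$ or $\beta$ and with the ``initial distribution'' $\mu$ replaced by $k$ to get $\nabla_\pi\,(k^\top D^\pi r^\pi) = (k^\top D^\pi)\circ Q^\pi_r$, exactly the $d^\pi_\mu$/$d^\pi_k$ bookkeeping in the paper's proof. Your explicit remarks that $(k,\beta)$ are held fixed (so no envelope/non-differentiability issue) and that $v^\pi$ in the last term is $v^\pi_R$ are consistent with, and slightly more careful than, the paper's presentation.
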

\begin{proof}
The expression for the return directly follows from the inverse matrix theorem as proved in \cite{LpPgRMDP}. Now, we move our attention towards evaluating the gradient, using the policy gradient theorem \cite{Sutton1998} in the format used in \cite{PG_conv} (see appendix).
\begin{align*}&
\nabla _\pi  J^\pi_P = d^\pi\circ Q^\pi_{R} - \gamma\frac{  k^{\top}D^\pi R^\pi }{1 + \gamma k^{\top}D^\pi \beta^\pi}d^\pi_\mu\circ Q^\pi_{\beta}\\&- \gamma\frac{ \mu^TD \beta^\pi }{1 + \gamma k^{\top}D^\pi \beta^\pi}d^\pi_k\circ Q^\pi_{R}+\gamma^2\frac{ \mu^TD \beta^\pi k^{\top}D^\pi R^\pi}{(1 + \gamma k^{\top}D^\pi \beta^\pi)^2}d^\pi_k\circ Q^\pi_{\beta},\\
&= d^\pi\circ Q^\pi_{R} - \gamma\frac{  k^{\top}v^\pi_R }{1 + \gamma k^{\top}D^\pi \beta^\pi}d^\pi\circ Q^\pi_{\beta}\\&- \gamma\frac{  J^\pi_\beta( k^\top D^\pi)}{1 + \gamma k^{\top}D^\pi \beta^\pi}\circ Q^\pi_{R}+\gamma^2\frac{  J^\pi_\beta (k^{\top}v^\pi)( k^\top D^\pi)}{(1 + \gamma k^{\top}D^\pi \beta^\pi)^2}\circ Q^\pi_{\beta}. 
\end{align*}
\end{proof}

The main advantage of the above policy gradient is that  constituents terms like $ J^\pi_\beta, v^\pi_\beta, Q^\pi_\beta $, in addition to  nominal terms $ J^\pi_R,v^\pi_R,Q^\pi_R $ can be computed easily with bootstrapping exploiting Bellman operators.

\section{Evaluation of \texorpdfstring{$\max_{x,y} x A y$}{max xy A}}
\label{app:sec:NormEvaluation}
Algorithm \ref{alg:RPE} requires an oracle access to 
\[\max_{\norm{b}_p\leq \beta, \norm{k}_p\leq 1, 1^Tk=0} k^TAb,\]
where $k\in\R^{\St}$, $b\in\R^{\St\A}$ and $p\geq 1$. The above is a bilinear problem, which is NP-Hard, but we have a very useful structure on domain set ($L_p$ bounded set).

\begin{proposition}\label{app:rs:ortho} [Orthogonality Equivalence]Let $\mathcal{K} = \{k\mid \norm{k}_p\leq 1, 1^\top k=0\}$, and $\mathcal{W} = \{k^T( I-\frac{11^T}{S}) \mid \norm{k}_p\leq 1\}$ . Then we have,
    \[\mathcal{K} = \mathcal{W}.\]
\end{proposition}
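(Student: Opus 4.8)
The plan is to establish the set equality $\mathcal{K} = \mathcal{W}$ by proving the two inclusions separately, after first recording the elementary algebraic properties of the matrix $M := I - \frac{\mathbf{1}\mathbf{1}^\top}{S}$. Since $M^\top = M$, $M\mathbf{1} = \mathbf{0}$, and $M^2 = M$, the matrix $M$ is the Euclidean orthogonal projector onto the hyperplane $H := \{v \in \R^S \mid \mathbf{1}^\top v = 0\}$; concretely $M v = v - \frac{\mathbf{1}^\top v}{S}\mathbf{1}$ subtracts the arithmetic mean of $v$, and $\mathcal{K}$ is by definition the intersection of the $L_p$ unit ball with $H$.

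For the inclusion $\mathcal{K} \subseteq \mathcal{W}$, I would take $k \in \mathcal{K}$; then $\mathbf{1}^\top k = 0$ forces $M k = k$, hence $k^\top M = k^\top$, and since $\norm{k}_p \leq 1$ the row vector $k^\top$ is exhibited as a member of $\mathcal{W}$ by using $k$ itself as its generator. This direction is immediate and does not depend on the value of $p$.

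The substance lies in the reverse inclusion $\mathcal{W} \subseteq \mathcal{K}$. Given $w = k^\top M$ with $\norm{k}_p \leq 1$, the mean-zero condition $w\mathbf{1} = k^\top M\mathbf{1} = 0$ is again automatic, so $w \in H$; what remains is the norm bound $\norm{w}_p = \norm{M k}_p \leq 1$, for which it suffices that $M$ be non-expansive on $\ell_p$, i.e.\ $\norm{M k}_p \leq \norm{k}_p$. \textbf{This non-expansiveness is the crux of the argument and the place where I expect the real obstacle.} For $p = 2$ it is immediate from the Pythagorean identity $\norm{k}_2^2 = \norm{M k}_2^2 + \norm{(I-M)k}_2^2 \geq \norm{M k}_2^2$, valid because $M$ is an orthogonal projector; this settles the statement in the $p = 2$ regime that drives the algorithmic part of the paper, and I would present it that way. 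For general $p \neq 2$, subtracting the arithmetic mean is \emph{not} $\norm{\cdot}_p$-non-expansive (the $\ell_p$ operator norm of $M$ exceeds $1$ once $S \geq 3$, e.g.\ at $k = e_1$), so there the clean formulation should replace $M$ by the Hölder-dual mean operator $v \mapsto v - \omega_q(v)\mathbf{1}$; with that replacement the required bound becomes $\norm{v - \omega_q(v)\mathbf{1}}_p = \min_{\omega \in \R}\norm{v - \omega\mathbf{1}}_p \leq \norm{v}_p$, which is just the definition of $\sigma_q$ together with the choice $\omega = 0$, and the rest of the argument carries over verbatim.
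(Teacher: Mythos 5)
Your proof has the same skeleton as the paper's: two inclusions, with $\mathcal{K}\subseteq\mathcal{W}$ handled identically (a mean-zero $k$ satisfies $k^\top\big(I-\tfrac{\mathbf{1}\mathbf{1}^\top}{S}\big)=k^\top$), and the reverse inclusion reduced to non-expansiveness of the centering matrix $M=I-\tfrac{\mathbf{1}\mathbf{1}^\top}{S}$ in $\norm{\cdot}_p$. The difference is that the paper simply asserts $\norm{k^\top M}_p\le\norm{k}_p$ with no justification, whereas you isolate it as the crux, prove it for $p=2$ via the Pythagorean identity for the orthogonal projector $M$, and correctly observe that it fails for $p\ne 2$: already for $p=1$, $S=3$, the element $e_1^\top M=(2/3,-1/3,-1/3)$ of $\mathcal{W}$ has $\norm{\cdot}_1$-norm $4/3>1$, so $\mathcal{W}\not\subseteq\mathcal{K}$ and the proposition as stated is only valid at $p=2$ (which is the case the paper's algorithms and experiments actually use). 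In this respect your write-up is more careful than the paper's own proof, and restricting the statement to $p=2$, or rerouting the downstream argument, is the right call.

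Two small corrections to your closing remarks. First, $e_1$ does not witness expansion for every $p\ne 2$ (at $p=3/2$, $S=3$ it is in fact contracted); it works for $p$ near $1$, and for $p>2$ one needs a different vector such as $(1,-1,\dots,-1)$ or a duality argument, so the failure should be stated with a $p$-appropriate witness. Second, your suggested repair swaps the H\"older indices: the shift minimizing $\norm{k-\omega\mathbf{1}}_p$ is $\omega_p(k)$ and yields $\sigma_p(k)\le\norm{k}_p$, not $\sigma_q$; moreover $k-\omega_p(k)\mathbf{1}$ need not have zero arithmetic sum, so this substitution does not literally restore $\mathcal{K}=\mathcal{W}$. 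The clean general-$p$ fix for the place where this proposition is actually invoked (the binary-search lemma) is the scalar identity $\max_{k\in\mathcal{K}}\innorm{k,y}=\sigma_q(y)$, i.e.\ the paper's Proposition \ref{rs:kappa}, which replaces $\norm{\big(I-\tfrac{\mathbf{1}\mathbf{1}^\top}{S}\big)y}_q$ by $\sigma_q(y)$; at $p=q=2$ the two coincide and your argument is complete as written.
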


\begin{proof} 
 Now let  $k\in \mathcal{K}$, then $k^T( I-\frac{11^T}{S}) = k^\top \in \mathcal{W}$. Now the other direction, let $k\in\mathcal{W}$, then $\innorm{k^T( I-\frac{11^T}{S}),1} = 0$ by construction and $\norm{k^T( I-\frac{11^T}{S})}_p \leq \norm{k}_p\leq 1$, this implies $k^T( I-\frac{11^T}{S})\in\mathcal{K}$.
\end{proof}
The above result implies that
\begin{align*}
    \max_{\norm{b}_p\leq \beta, \norm{k}_p\leq 1, 1^Tk=0} k^TAb =& 
    \max_{\norm{b}_p\leq \beta, k\in\mathcal{K}} k^TAb\\ =& 
    \max_{\norm{b}_p\leq \beta, k\in\mathcal{W}} k^TAb,\qquad\text{(as $\mathcal{K} =\mathcal{W}$ from above Proposition \ref{app:rs:ortho})}\\ =&\max_{\norm{b}_p\leq \beta, \norm{k}_p=1}k^\top ( I-\frac{11^T}{S}) Ab,\qquad \text{(def. of $\mathcal{W}$)}. 
\end{align*}

Further, we have equivalence of optimizers
\[ \argmax_{ \norm{k}_p\leq 1, 1^Tk=0,\norm{b}_p\leq \beta} k^TAb = \Bigm\{ (b^*,( I-\frac{11^T}{S})k^*) \mid (b^*,k^*)\in \argmax_{ \norm{k}_p=1,\norm{b}_p\leq \beta}k^\top ( I-\frac{11^T}{S}) Ab\Bigm\}.\]

\begin{proposition}\label{app:rs:projection2matrix}
  The solving of \[\max_{ \norm{k}_p\leq 1, 1^Tk=0,\norm{b}_p\leq \beta} k^TAb,\quad\text{ is equivalent to }\qquad  \max_{ \norm{k}_p=1,\norm{b}_p\leq \beta}k^\top ( I-\frac{11^T}{S}) Ab.\]
\end{proposition}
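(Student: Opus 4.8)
The plan is to prove Proposition \ref{app:rs:projection2matrix} as a direct corollary of the orthogonality equivalence (Proposition \ref{app:rs:ortho}) together with the fact that the matrix $I - \frac{\mathbf{1}\mathbf{1}^\top}{S}$ is the orthogonal projection onto the subspace $\{x : \mathbf{1}^\top x = 0\}$ and is symmetric and idempotent. First I would fix the notation $\Pi_0 := I - \frac{\mathbf{1}\mathbf{1}^\top}{S}$ and record the two properties $\Pi_0 = \Pi_0^\top$ and $\Pi_0^2 = \Pi_0$; these are the only algebraic facts needed.

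\begin{proof}
Write $\Pi_0 := I - \frac{\mathbf{1}\mathbf{1}^\top}{S}$, which is symmetric ($\Pi_0^\top = \Pi_0$) and idempotent ($\Pi_0^2 = \Pi_0$), being the orthogonal projection onto $\{x \mid \mathbf{1}^\top x = 0\}$. Let $\mathcal{K} = \{k \mid \norm{k}_p \leq 1, \mathbf{1}^\top k = 0\}$ and $\mathcal{W} = \{\Pi_0 k \mid \norm{k}_p \leq 1\}$. By Proposition \ref{app:rs:ortho}, $\mathcal{K} = \mathcal{W}$. Hence, for any fixed $b$,
\[
\max_{k \in \mathcal{K}} k^\top A b \;=\; \max_{k \in \mathcal{W}} k^\top A b \;=\; \max_{\norm{k}_p \leq 1} (\Pi_0 k)^\top A b \;=\; \max_{\norm{k}_p \leq 1} k^\top \Pi_0^\top A b \;=\; \max_{\norm{k}_p \leq 1} k^\top \Pi_0 A b,
\]
where the last step uses $\Pi_0^\top = \Pi_0$. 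Taking the maximum over $\norm{b}_p \leq \beta$ on both sides gives
\[
\max_{\norm{k}_p \leq 1,\; \mathbf{1}^\top k = 0,\; \norm{b}_p \leq \beta} k^\top A b \;=\; \max_{\norm{k}_p \leq 1,\; \norm{b}_p \leq \beta} k^\top \Pi_0 A b.
\]
Finally, on the right-hand side the objective $k \mapsto k^\top \Pi_0 A b$ is linear in $k$, so its maximum over the $L_p$-ball $\norm{k}_p \leq 1$ is attained on the boundary $\norm{k}_p = 1$ (the maximum is nonnegative, since $k = 0$ is feasible and yields $0$, and if the maximum is strictly positive any optimal $k$ has $\norm{k}_p = 1$; if it is $0$, some boundary point also attains it). This yields
\[
\max_{\norm{k}_p \leq 1,\; \mathbf{1}^\top k = 0,\; \norm{b}_p \leq \beta} k^\top A b \;=\; \max_{\norm{k}_p = 1,\; \norm{b}_p \leq \beta} k^\top \Pi_0 A b,
\]
which is the claimed equivalence. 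The correspondence of optimizers stated after the proposition follows by tracking $k \mapsto \Pi_0 k$ through the same chain of equalities.
\end{proof}

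The only genuinely substantive input is Proposition \ref{app:rs:ortho}, already established in the excerpt; everything else is bookkeeping with the projection matrix and the elementary observation that a linear functional over an $L_p$-ball is maximized on the sphere. I do not expect a real obstacle here — the one point requiring a word of care is the passage from $\norm{k}_p \leq 1$ to $\norm{k}_p = 1$ in the last display, which is why I phrase it in terms of the maximum being nonnegative and attained on the boundary rather than claiming every optimizer lies on the sphere. If one instead wants equality of the optimizer \emph{sets} (as in the displayed equation preceding the proposition), one should keep the constraint as $\norm{k}_p \leq 1$ throughout or note explicitly that the degenerate all-zero case is harmless.
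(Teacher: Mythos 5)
Your proof is correct and follows essentially the same route as the paper, which also deduces the claim directly from Proposition \ref{app:rs:ortho} by rewriting the feasible set $\mathcal{K}$ as $\mathcal{W}$ and unwinding the definition of the projection matrix. Your version is in fact slightly more careful than the paper's, since you make explicit the symmetry of $I-\frac{\mathbf{1}\mathbf{1}^\top}{S}$ and justify replacing $\norm{k}_p\leq 1$ by $\norm{k}_p=1$ on the right-hand side, a boundary detail the paper glosses over.
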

\begin{proof}
    Directly follow from the proposition above.
\end{proof}

\subsection{Eigenvalue Approach (Spectral Methods)}
This section focus on deriving a spectral method for solving the optimization problem:
\[
\max_{\|x\|_2 \leq 1, \, x \geq 0} \|Ax\|_2 ,
\]
where \(A \in \mathbb{R}^{n \times n}\).
Compute $A^\top A$. We perform eigenvalue decomposition of $A^\top A$: 
\[
A^\top A = V \Lambda V^\top,
\]
where $\Lambda = \text{diag}(\lambda_1, \lambda_2, \dots, \lambda_n)$ (eigenvalues) and $V = [v_1, v_2, \dots, v_n]$ (eigenvectors). Further, WLOG \[\lambda_1\geq \lambda ,\cdots, \qquad , \text{and}\quad \norm{v+_i}\geq  \norm{v-_i}\qquad \forall i,\qquad u_i := \frac{v^+_i}{\norm{v^+_i}}\]
where $v_i^+ = \max(v_i, 0), \quad v_i^- = -\min(v_i, 0)$ denotes positive and negative parts respectively.

\begin{itemize}
    \item Zero Order Solution:
    \[f_0 = \norm{Au_1}. \]
    \item First order solution:
    \[f_1 = \max_{i}\norm{Au_i}.\]
    \item  Second order solution:
    \[f_2 = \max_{i,j}\max_{t\in[0,1]}\norm{A\frac{(tv_i+(1-t)v_j)^+}{\norm{(tv_i+(1-t)v_j)^+}}}.\]
    \item Third order solution:
    \[f_3 = \max_{i,j,k}\max_{r,s,t,\in[0,1],r+s+t=1}\norm{A\frac{(rv_i+sv_j+tv_k)^+}{\norm{(rv_i+sv_j+tv_k)^+}}}.\]
\end{itemize}

Upper bounds on $\max_{\norm{x}_2\leq 1, x \succeq 0}\norm{Ax}_2$:
\begin{itemize}
    \item Zero order upper bound: $\lambda_1$
    \item First order upper bound: 
    \[
    \sqrt{\sum_{i}\lambda_ic_i },
    \]
    where
    \[
    c_i = \begin{cases}
    \innorm{v_i,u_i}^2, & \quad\text{if} \quad \sum_{j=1}^{i}\innorm{v_j,u_j}^2\leq 1, \\[0.8em]
    1-\sum_{j=1}^{i-1}\innorm{v_j,u_j}^2, & \quad \text{if} \quad 
    \begin{aligned}
        &\sum_{j=1}^{i}\innorm{v_j,u_j}^2\geq 1, \\
        &\sum_{j=1}^{i-1}\innorm{v_j,u_j}^2\leq 1
    \end{aligned} \\[0.8em]
    0, & \quad\text{otherwise}.
    \end{cases}
    \]
\end{itemize}

\begin{lemma}[Zero Order Approximation] The highest projected eigenvector  $ u=\frac{v^+_1}{\norm{v^+_1}}$ is atleast a half-good solution, \ie 
    \[\norm{Au}^2_2 \geq\frac{\lambda_1}{2}\geq \frac{1}{2}\max_{\|x\|_2 \leq 1, \, x \geq 0} \|Ax\|^2_2.  \]
Further, if $A$ is rank-one then it is exact, \ie
\[ \norm{Au}_2 = \max_{\|x\|_2 \leq 1, \, x \geq 0} \|Ax\|_2.\]
\end{lemma}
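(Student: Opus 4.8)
The plan is to prove the two inequalities $\norm{Au}_2^2 \geq \lambda_1/2$ and $\lambda_1 \geq \max_{\norm{x}_2\leq 1, x\geq 0}\norm{Ax}_2^2$ separately, then handle the rank-one case. For the second inequality, note that for any feasible $x$ with $\norm{x}_2 \leq 1$ we have $\norm{Ax}_2^2 = x^\top A^\top A x \leq \lambda_{\max}(A^\top A) \norm{x}_2^2 \leq \lambda_1$, which is just the variational characterization of the top eigenvalue and uses nothing about the sign constraint; so the upper bound is immediate.

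The core of the argument is the first inequality. The idea is to start from the top eigenvector $v_1$ of $A^\top A$, split it as $v_1 = v_1^+ - v_1^-$ into its nonnegative and nonpositive parts, and observe that $v_1^+$ and $v_1^-$ have disjoint supports, hence $\norm{v_1^+}_2^2 + \norm{v_1^-}_2^2 = \norm{v_1}_2^2 = 1$. By the WLOG convention $\norm{v_1^+}_2 \geq \norm{v_1^-}_2$, so $\norm{v_1^+}_2^2 \geq 1/2$. Now I would relate $\norm{A v_1^+}_2$ to $\norm{A v_1}_2 = \sqrt{\lambda_1}$. The key observation: since $v_1^+$ and $v_1^-$ are supported on disjoint index sets and $A^\top A v_1 = \lambda_1 v_1$, restricting the quadratic form to each support gives a useful decomposition. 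Concretely, write $\lambda_1 = v_1^\top A^\top A v_1 = (v_1^+ - v_1^-)^\top A^\top A (v_1^+ - v_1^-) = \norm{Av_1^+}_2^2 - 2 (v_1^+)^\top A^\top A v_1^- + \norm{Av_1^-}_2^2$. Separately, applying $A^\top A v_1 = \lambda_1 v_1$ and taking inner product with $v_1^+$ (whose entries vanish wherever $v_1$ is negative) yields $(v_1^+)^\top A^\top A v_1 = \lambda_1 (v_1^+)^\top v_1 = \lambda_1 \norm{v_1^+}_2^2$, i.e. $\norm{Av_1^+}_2^2 - (v_1^+)^\top A^\top A v_1^- = \lambda_1 \norm{v_1^+}_2^2$. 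Since $A^\top A$ has nonnegative... — actually this is where care is needed, because $A^\top A$ need not be entrywise nonnegative, so $(v_1^+)^\top A^\top A v_1^- $ could have either sign. I expect the clean route is instead: from $\norm{Av_1^+}_2^2 = \lambda_1\norm{v_1^+}_2^2 + (v_1^+)^\top A^\top A v_1^-$ and the analogous $\norm{Av_1^-}_2^2 = \lambda_1\norm{v_1^-}_2^2 + (v_1^-)^\top A^\top A v_1^+$, add them and compare with the expansion of $\lambda_1 = \norm{A v_1}_2^2$ to solve for the cross term, concluding $(v_1^+)^\top A^\top A v_1^- = -\lambda_1 \norm{v_1^+}_2\norm{v_1^-}_2 \cdot(\text{something})$ — then $\norm{A v_1^+}_2^2 \geq \lambda_1\norm{v_1^+}_2^2 \geq \lambda_1/2$ should fall out once the cross term is shown to be $\geq 0$ on the $v_1^+$ side. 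Finally, $u = v_1^+/\norm{v_1^+}_2$ is feasible ($u \geq 0$, $\norm{u}_2 = 1$) and $\norm{Au}_2^2 = \norm{Av_1^+}_2^2/\norm{v_1^+}_2^2 \geq \lambda_1 \geq (\lambda_1/2)\cdot 2$... so in fact the bound $\norm{Au}_2^2 \geq \lambda_1/2$ follows, possibly with room to spare depending on how the cross term behaves.

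For the rank-one case, write $A = \sigma w z^\top$ with $\norm{w}_2 = \norm{z}_2 = 1$, so $A^\top A = \sigma^2 z z^\top$ and $v_1 = \pm z$, $\lambda_1 = \sigma^2$. Then $\norm{Ax}_2 = \sigma |z^\top x|$, and maximizing $|z^\top x|$ over $\norm{x}_2\leq 1, x\geq 0$ — by a rearrangement/sign argument the optimum picks up exactly the positive part of $z$ (or of $-z$, whichever has larger norm), i.e. $x^* = z^+/\norm{z^+}_2$, giving $\max \norm{Ax}_2 = \sigma\norm{z^+}_2 = \norm{A u}_2$ with $u = v_1^+/\norm{v_1^+}_2$. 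I would spell this out via the observation that for fixed sign pattern the problem is a Cauchy–Schwarz tight case.

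The main obstacle I anticipate is controlling the cross term $(v_1^+)^\top A^\top A v_1^-$: unlike the situation for a nonnegative matrix (where Perron–Frobenius would make everything transparent), here $A^\top A$ is merely PSD, so I will need to extract the sign of this cross term purely from the eigenvector equation $A^\top A v_1 = \lambda_1 v_1$ together with the disjoint-support structure, rather than from any positivity of entries. Getting that sign argument right — and making sure the constant genuinely is $1/2$ and not something weaker — is the crux; everything else is bookkeeping.
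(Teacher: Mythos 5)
Your upper bound $\lambda_1 \geq \max_{\|x\|_2\leq 1,\,x\geq 0}\|Ax\|_2^2$ and your rank-one argument are fine, but the core inequality $\|Au\|_2^2 \geq \lambda_1/2$ is left with a genuine gap, and the specific claim you hope to use to close it is false. From $\|Av_1^+\|_2^2 = \lambda_1\|v_1^+\|_2^2 + (v_1^+)^\top A^\top A\, v_1^-$ you want the cross term to be nonnegative; but if it were, you would conclude $\|Au\|_2^2 \geq \lambda_1$, which together with your own upper bound would force $u=v_1^+/\|v_1^+\|_2$ to maximize the Rayleigh quotient, i.e.\ to be a top eigenvector of $A^\top A$ — impossible in general whenever $v_1$ has mixed signs. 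Indeed, expanding in the eigenbasis, the cross term equals $-\lambda_1\|v_1^+\|_2^2\|v_1^-\|_2^2 + \sum_{i\neq 1}\lambda_i\langle v_i,v_1^+\rangle\langle v_i,v_1^-\rangle$, whose first piece is always nonpositive. So the sign argument you flag as "the crux" cannot succeed as stated, and no replacement argument is supplied; the proposal does not yet prove the $\lambda_1/2$ bound.

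The paper's route avoids cross terms entirely: write $u=\sum_i\sigma_i v_i$ in the orthonormal eigenbasis of $A^\top A$ with $\sigma_i=\langle u,v_i\rangle$. Then $\|Au\|_2^2=\sum_i\lambda_i\sigma_i^2\geq\lambda_1\sigma_1^2$ since all $\lambda_i\geq 0$ (PSD), and $\sigma_1=\langle v_1^+,v_1\rangle/\|v_1^+\|_2=\|v_1^+\|_2\geq 1/\sqrt{2}$ by exactly the disjoint-support pigeonhole fact you already invoke (Proposition \ref{app:rs:projPignHole}), giving $\|Au\|_2^2\geq\lambda_1/2$. If you prefer to salvage your cross-term computation, note that orthogonality $\langle v_i,v_1\rangle=0$ for $i\neq 1$ forces $\langle v_i,v_1^+\rangle=\langle v_i,v_1^-\rangle$, so those contributions are nonnegative squares and the only negative contribution is $-\lambda_1\|v_1^+\|_2^2\|v_1^-\|_2^2$; this still yields $\|Av_1^+\|_2^2\geq\lambda_1\|v_1^+\|_2^4$ and hence the claimed bound — but it is the eigenbasis expansion in disguise, so the direct expansion is the cleaner fix.
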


\begin{proof}
     We have $\norm{v_1^+} \geq \frac{1}{\sqrt{2}}$ from Proposition \ref{app:rs:projPignHole}. Let $u =\frac{(v_1)_+}{\norm{(v_1)_+}} = \sum_{i}\sigma_i v_i $, where $\sigma_i=\innorm{u,v_i}$,  we have
    \begin{align*}
        u^TA^TAu &= (\sum_{i}\sigma_i v_i)(\sum_{i}\lambda_iv_iv_i^T)(\sum_{i}\sigma_i v_i) \\
        &= \sum_{i}\lambda_i\sigma^2_i,\qquad\text{(as $v_i$ are orthogonal)} \\
        &= \lambda_1\sigma_1^2 +\sum_{i\neq 1}\lambda_i\sigma^2_i,\qquad\text{} \\
        &\geq \lambda_1\sigma_1^2 +\sum_{i\neq 1}\lambda_n\sigma^2_i,\qquad\text{(as $\lambda_2\geq \lambda_3,\cdots$)} \\
        &= \lambda_1\sigma_1^2 +\lambda_n(1-\sigma_1^2),\qquad\text{(as $\sum_{i}\sigma_i^2=1$)} \\
        &\geq \frac{1}{2}(\lambda_1+\lambda_n),\qquad\text{(as $\sigma_1\geq\frac{1}{\sqrt{2}}$)}.
    \end{align*}
    
Rest follows.
    
\end{proof}

\begin{proposition}[First Order is Better than the First]
 \[\norm{Au_j}^2_2\geq \max_{i}\lambda_i\sigma_i^2\geq \frac{\lambda_1}{2}\]
 where $j \in \argmax_{i} \lambda_i\innorm{v_i,u_i}$ and $\sigma_i = \innorm{v_i,u_i}\geq\frac{1}{\sqrt{2}}$.
\end{proposition}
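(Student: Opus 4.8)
The plan is to expand $\|Au_j\|_2^2$ in the eigenbasis of $A^\top A$, discard all but the diagonal term, and then invoke the rule defining $j$. First I would record the elementary identity $\sigma_i = \innorm{v_i, u_i} = \|v_i^+\|_2$: since $u_i = v_i^+/\|v_i^+\|_2$ and $v_i = v_i^+ - v_i^-$ with $v_i^+$ and $v_i^-$ supported on disjoint coordinates, we have $\innorm{v_i, v_i^+} = \|v_i^+\|_2^2$, and dividing by $\|v_i^+\|_2$ gives the claim. The WLOG normalization $\|v_i^+\|_2 \ge \|v_i^-\|_2$ is enforced by replacing $v_i \mapsto -v_i$ when needed (still a unit eigenvector, and it makes $u_i$ well defined). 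Combining $\|v_i\|_2 = 1$, this normalization, and Proposition \ref{app:rs:projPignHole} then yields $\sigma_i \ge 1/\sqrt{2}$, hence $\sigma_i^2 \ge 1/2$, for every $i$.

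The core step is a Rayleigh-quotient estimate. Since $\{v_1,\dots,v_n\}$ is orthonormal and $A^\top A = \sum_k \lambda_k v_k v_k^\top$, writing $u_i = \sum_k \innorm{u_i, v_k} v_k$ gives $\|Au_i\|_2^2 = u_i^\top A^\top A\, u_i = \sum_k \lambda_k \innorm{u_i, v_k}^2$. As $A^\top A \succeq 0$, each $\lambda_k \ge 0$, so every term is nonnegative and all but $k = i$ may be dropped: $\|Au_i\|_2^2 \ge \lambda_i \innorm{u_i, v_i}^2 = \lambda_i \sigma_i^2$ for every $i$, in particular for $i = j$. Because $j$ is chosen to maximize $\lambda_i \sigma_i^2$, we get $\lambda_j \sigma_j^2 = \max_i \lambda_i \sigma_i^2$, so $\|Au_j\|_2^2 \ge \max_i \lambda_i \sigma_i^2$, which is the first inequality. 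Restricting the maximum to $i = 1$ and using $\lambda_1 \ge 0$ together with $\sigma_1^2 \ge 1/2$ gives $\max_i \lambda_i \sigma_i^2 \ge \lambda_1 \sigma_1^2 \ge \lambda_1/2$, the second inequality.

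The argument is short, so I do not anticipate a serious obstacle; the one place to be careful is bookkeeping around the index $j$. The bound $\|Au_j\|_2^2 \ge \lambda_j \sigma_j^2$ only upgrades to $\ge \max_i \lambda_i \sigma_i^2$ when $j$ maximizes $\lambda_i \sigma_i^2$ (equivalently $\lambda_i \innorm{v_i, u_i}^2$), so I would read the ``best score'' rule of Algorithm \ref{alg:Ax:EH} with the square present — the unsquared score need not suffice in general, since the residual mass $1 - \sigma_i^2$ of $u_i$ can lie on small eigenvalues and push $\|Au_j\|_2^2$ below $\max_i \lambda_i \sigma_i^2$. The other thing to keep in view is the nonnegativity $\lambda_k \ge 0$, which is precisely why one works with $A^\top A$ rather than $A$.
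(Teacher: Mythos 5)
Your proof is correct and follows essentially the same route as the paper's: expand $\norm{Au_j}_2^2 = u_j^\top A^\top A\, u_j$ in the orthonormal eigenbasis of $A^\top A$, drop the nonnegative off-diagonal contributions to get $\norm{Au_j}_2^2 \ge \lambda_j \sigma_j^2$, and then invoke the choice of $j$ together with $\sigma_1^2 \ge 1/2$ from the pigeonhole bound. Your side remark that the selection rule must maximize $\lambda_i\sigma_i^2$ rather than the unsquared score $\lambda_i\sigma_i$ is a genuine point the paper glosses over --- the stated $\argmax$ (and the score in Algorithm 2) is unsquared, while the proof's step ``$\lambda_j(\sigma_j)^2 = \max_i \lambda_i(\sigma_i)^2$ by definition of $j$'' silently assumes the squared one, and the two argmaxes can differ --- so reading the rule with the square, as you do, is the correct repair.
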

\begin{proof}
Let $u_j =\frac{(v_j)_+}{\norm{(v_j)_+}} = \sum_{i}\sigma^j_i v_i $, where $\sigma^j_i=\innorm{u_j,v_i}$,  we have
    \begin{align*}
        u^T_jA^TAu_j &= (\sum_{i}\sigma_i^j v_i)(\sum_{i}\lambda_iv_iv_i^T)(\sum_{i}\sigma_i^j v_i) \\
        &= \sum_{i}\lambda_i(\sigma^j_i)^2,\qquad\text{(as $v_i$ are orthogonal)} ,\\&\geq\lambda_j(\sigma^j_j)^2 , \\        &=\max_{i} \lambda_i(\sigma_i)^2,\qquad\text{(by definition of $j$)}. 
    \end{align*}
    
Rest follows.
    
\end{proof}

\begin{proposition}Second order solution $f_2 = \max_{i,j}\max_{t\in[0,1]}\norm{A\frac{(tv_i+(1-t)v_j)^+}{\norm{(tv_i+(1-t)v_j)^+}}}$ is exactly equal to $\max_{\norm{x}_2\leq 1, x\succeq 0}\norm{Ax}_2$ when $A$ is rank two.
\end{proposition}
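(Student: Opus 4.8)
The plan is to collapse the problem onto the two-dimensional eigenspace of $A^\top A$, use a weighted Cauchy--Schwarz inequality to produce \emph{some} maximizer having exactly the orthant-projection shape appearing in $f_2$, and then check that the enumeration defining $f_2$ reaches that maximizer. The easy direction is free: every vector $\tfrac{(tv_i+(1-t)v_j)^+}{\norm{(tv_i+(1-t)v_j)^+}_2}$ lies in the feasible set $\{x\succeq0,\ \norm{x}_2=1\}$, so $f_2\le M^{1/2}$ where $M:=\max_{\norm{x}_2\le1,\,x\succeq0}\norm{Ax}_2^2$; the whole content is the reverse inequality.

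First I would reduce the dimension. Write $A^\top A=\sum_i\lambda_i v_iv_i^\top$ with $\lambda_1\ge\lambda_2\ge\cdots\ge0$; since $A$ has rank two, $\lambda_i=0$ for $i\ge3$ (if $\mathrm{rank}(A)<2$ the claim already follows from the zero-order lemma together with $f_2\ge f_0$, so assume $\lambda_2>0$). Hence for every $x$,
\[\norm{Ax}_2^2=x^\top A^\top A x=\lambda_1\innorm{v_1,x}^2+\lambda_2\innorm{v_2,x}^2,\]
so $\norm{Ax}_2$ depends on $x$ only through the pair $\big(\innorm{v_1,x},\innorm{v_2,x}\big)$.

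The key step is a Cauchy--Schwarz reduction. Let $x^*$ attain $M$; if $M=0$ there is nothing to prove, so $M>0$ and $\norm{x^*}_2=1$. Set $p_i=\innorm{v_i,x^*}$ and $w:=\lambda_1p_1v_1+\lambda_2p_2v_2$. Cauchy--Schwarz in the inner product $\innorm{u,u'}_\lambda=\sum_i\lambda_i u_iu_i'$ gives, for every $x$, the bound $\innorm{w,x}^2\le\big(\lambda_1p_1^2+\lambda_2p_2^2\big)\big(\lambda_1\innorm{v_1,x}^2+\lambda_2\innorm{v_2,x}^2\big)=M\,\norm{Ax}_2^2$, using $\lambda_1p_1^2+\lambda_2p_2^2=\norm{Ax^*}_2^2=M$. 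Since $\innorm{w,x^*}=M>0$, the positive part $w^+$ is nonzero, so $\hat x:=w^+/\norm{w^+}_2$ is well defined and feasible; moreover $\innorm{w,\hat x}=\norm{w^+}_2\ge\innorm{w,x^*}=M$, because $w^+/\norm{w^+}_2$ maximizes $\innorm{w,\cdot}$ over $\{x\succeq0,\ \norm{x}_2\le1\}$ (value $\norm{w^+}_2$). Substituting $\hat x$ into the Cauchy--Schwarz bound gives $M\ge\norm{A\hat x}_2^2\ge\innorm{w,\hat x}^2/M\ge M$, hence $\norm{A\hat x}_2^2=M$: $\hat x$ is itself a maximizer, and by construction $\hat x=\tfrac{(\alpha v_1+\beta v_2)^+}{\norm{(\alpha v_1+\beta v_2)^+}_2}$ with $(\alpha,\beta)=(\lambda_1p_1,\lambda_2p_2)\ne(0,0)$.

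It remains to identify $\hat x$ with a member of the $f_2$ family. Orienting the two eigenvectors so that $p_1,p_2\ge0$ (they are fixed up to sign when $\lambda_1>\lambda_2$, and the eigenbasis may be rotated inside the plane when $\lambda_1=\lambda_2$) makes $\alpha,\beta\ge0$; with $t:=\alpha/(\alpha+\beta)\in[0,1]$ — or $t\in\{0,1\}$, i.e.\ $\hat x\in\{u_1,u_2\}$, if one of $\alpha,\beta$ vanishes — the vector $tv_1+(1-t)v_2$ is a positive multiple of $\alpha v_1+\beta v_2$, so $\tfrac{(tv_1+(1-t)v_2)^+}{\norm{\cdot}_2}=\hat x$. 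Thus $\hat x$ is enumerated in $f_2$, giving $f_2^2\ge\norm{A\hat x}_2^2=M$, and with $f_2^2\le M$ we conclude $f_2=\sqrt M=\max_{\norm{x}_2\le1,\,x\succeq0}\norm{Ax}_2$. The step I expect to be most delicate is exactly this matching under the sign convention ``$\norm{v_i^+}_2\ge\norm{v_i^-}_2$'' used for the lower-order bounds: the parametrization $t\in[0,1]$ only sweeps the cone $\{\alpha,\beta\ge0\}$ of the eigenspace, so one must either argue that the fixed-sign eigenvectors may be reoriented per maximizer without changing $f_2$, or (cleaner) read $\max_{i,j}$ in the definition of $f_2$ as ranging over eigenvectors of either sign — equivalently, over all directions of each two-dimensional eigenspace. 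The remaining bookkeeping (rank exactly two versus smaller, the repeated-eigenvalue case, and the non-vanishing of $w^+$, checked above) is routine.
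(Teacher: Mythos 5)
The paper states this proposition without proof, so there is nothing to compare against; judged on its own, your argument is essentially right in its main step but leaves a genuine gap at the end. The Cauchy--Schwarz/supporting-hyperplane reduction is correct and clean: writing $M=\max_{\lVert x\rVert_2\le1,\,x\succeq0}\lVert Ax\rVert_2^2$, $p_i=\langle v_i,x^*\rangle$ and $w=\lambda_1p_1v_1+\lambda_2p_2v_2$, you correctly get $\langle w,x\rangle^2\le M\lVert Ax\rVert_2^2$ for all $x$, $\langle w,x^*\rangle=M>0$ so $w^+\neq0$, and the chain $M\ge\lVert A\hat x\rVert_2^2\ge\langle w,\hat x\rangle^2/M\ge M$ forces $\hat x=w^+/\lVert w^+\rVert_2$ to be a maximizer. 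This proves that \emph{some} maximizer lies in the family $\bigl\{(\alpha v_1+\beta v_2)^+/\lVert(\alpha v_1+\beta v_2)^+\rVert_2:(\alpha,\beta)\neq(0,0)\bigr\}$, which is strictly larger than what $f_2$ enumerates.

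The gap is the last matching step, which you flag but do not close. The definition of $f_2$ sweeps only $t\in[0,1]$, i.e.\ the cone $\alpha,\beta\ge0$ in the coordinates of the \emph{sign-fixed} eigenvectors (the convention $\lVert v_i^+\rVert_2\ge\lVert v_i^-\rVert_2$), whereas your maximizer has $(\alpha,\beta)=(\lambda_1p_1,\lambda_2p_2)$ and nothing in that convention controls the signs of $p_1=\langle v_1,x^*\rangle$ and $p_2=\langle v_2,x^*\rangle$ (the objective is even in each $p_i$, so a maximizer can perfectly well have $p_1p_2<0$). Your first proposed fix --- that the eigenvectors ``may be reoriented per maximizer without changing $f_2$'' --- is not valid as stated: replacing $v_i$ by $-v_i$ changes the set $\{(tv_i+(1-t)v_j)^+\}$ because $(-u)^+\neq-(u^+)$, so it can change the value of $f_2$. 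Your second fix (letting $i,j$ range over eigenvectors of both signs) does complete the argument, but it proves a modified statement, not the proposition as written. To close the gap for the literal $f_2$ you would need an additional argument that the extreme point of the image set $\Omega=\{(\langle v_1,x\rangle,\langle v_2,x\rangle):x\succeq0,\lVert x\rVert_2\le1\}$ attaining the maximum of $\lambda_1p_1^2+\lambda_2p_2^2$ always has a normal cone meeting the closed first quadrant under the stated sign convention; this is not obvious (though I could not produce a counterexample either), and without it the proof is incomplete.
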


This approach is computationally efficient but may not always yield the exact solution, especially when multiple eigenvectors significantly contribute to the optimal \(x\).

The intuition behind this approach is that the matrix \(A^\top A\) can be decomposed into its eigenvalues and eigenvectors, representing the principal directions of the transformation applied by \(A\). The eigenvector corresponding to the largest eigenvalue provides the direction of maximum scaling for \(A\). However, since the solution is constrained to the nonnegative orthant (\(x \geq 0\)), we adjust the eigenvectors by only considering their positive parts. The method identifies an approximate solution \(u_j\) by selecting and normalizing the positive part of the eigenvector that contributes the most to the objective function.

\begin{algorithm}[H]
\caption{Second Order Spectral Approximation for $\max_{\|x\|_2 \leq 1, x \geq 0} \|Ax\|_2$}
\label{app:alg:Ax:EH}
\begin{algorithmic}[1]    \STATE Normalize the positive part:
    \[
    u_i = \frac{v_i^+}{\|v_i^+\|_2}.
    \]
\STATE Compute scores for all eigenvectors:
\[
\text{Score}_i = \lambda_i \langle v_i, u_i \rangle.
\]
\STATE Select $j = \arg\max_i \text{Score}_i$.
\STATE \textbf{Output:} Approximate solution $u_j = v_j^+ / \|v_j^+\|_2$ and approximate maximum value $\|A u_j\|_2$.
\end{algorithmic}
\end{algorithm}

\subsection*{Notes}
\begin{itemize}
    \item This approach is effective when the largest eigenvalue \(s_1\) dominates the others. It approximates the solution by leveraging the spectral properties of \(A^\top A\).
    \item The result might not be exact if multiple eigenvalues contribute significantly, as the approach considers only the contribution of individual eigenvectors.
\end{itemize}

\subsubsection{Reformulation}
\begin{proposition}
     \[\max_{\norm{x}_2\leq 1, x\succeq  0 }xA^TAx =\max_{V\sigma\succeq 0, \norm{\sigma}_2\leq 1}\sum_{i}\lambda_i\sigma^2_i,\qquad \text{(where $VV^\top = I, \lambda_i\geq 0$)}\]
\end{proposition}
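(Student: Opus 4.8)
The plan is to prove this by a linear change of variables that diagonalizes the quadratic form. Since $A^\top A$ is symmetric positive semidefinite, write its eigendecomposition $A^\top A = V\Lambda V^\top$ with $V$ orthogonal ($VV^\top = V^\top V = I$) and $\Lambda = \mathrm{diag}(\lambda_1,\dots,\lambda_n)$, $\lambda_i \geq 0$. I would introduce the substitution $x = V\sigma$, equivalently $\sigma = V^\top x$, which is a bijection of $\R^n$ onto itself because $V$ is invertible.

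Next I would check that this bijection carries the feasible set of the left-hand problem onto the feasible set of the right-hand problem. For the norm constraint, orthogonality of $V$ gives $\norm{x}_2 = \norm{V\sigma}_2 = \norm{\sigma}_2$, so $\norm{x}_2 \leq 1 \iff \norm{\sigma}_2 \leq 1$. For the sign constraint, $x \succeq 0$ is literally $V\sigma \succeq 0$, which is exactly the constraint written on the right. Hence $\{x : \norm{x}_2 \le 1,\ x\succeq 0\}$ and $\{V\sigma : \norm{\sigma}_2 \le 1,\ V\sigma \succeq 0\}$ are the same set, traversed by the bijection.

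Then I would rewrite the objective: $x^\top A^\top A x = \sigma^\top V^\top (V\Lambda V^\top) V \sigma = \sigma^\top \Lambda \sigma = \sum_i \lambda_i \sigma_i^2$, using $V^\top V = I$. Since the feasible sets correspond under the bijection and the objective values agree pointwise, the two suprema coincide, which is the claimed identity. (One may also note both maxima are attained by compactness of the constraint sets and continuity of the objectives, so ``$\max$'' is justified on both sides.)

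I do not expect a genuine obstacle here; the only thing to be careful about is making the bijection-of-feasible-sets argument explicit rather than hand-waving it, and recording that $V^\top x$ (not $V x$) is the right direction of the substitution so that the $x \succeq 0$ constraint translates verbatim into $V\sigma \succeq 0$.
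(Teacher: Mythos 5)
Your proposal is correct and follows essentially the same route as the paper: expand $x$ in the eigenbasis of $A^\top A$ (i.e.\ the change of variables $x = V\sigma$, a bijection since $V$ is orthogonal), diagonalize the quadratic form to $\sum_i \lambda_i\sigma_i^2$, and observe that the constraints $\norm{x}_2\le 1,\ x\succeq 0$ translate exactly into $\norm{\sigma}_2\le 1,\ V\sigma\succeq 0$. Your write-up is simply a more explicit version of the paper's argument (which states the bijection and orthogonality and leaves the feasible-set correspondence implicit), so no further changes are needed.
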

 \begin{proof}
     $x = \sum_{i}\sigma_i v_i $, where $\sigma_i=\innorm{x,v_i}$,  we have
    \begin{align*}
        x^TA^TAx &= (\sum_{i}\sigma_i v_i)(\sum_{i}\lambda_iv_iv_i^T)(\sum_{i}\sigma_i v_i) \\
        &= \sum_{i}\lambda_i(\sigma_i)^2,\qquad\text{(as $v_i$ are orthogonal)}. 
    \end{align*}
Further, $x = V\sigma$. This map is bijection. Rest follows.
 \end{proof}

Since, $\lambda_i\geq 0$, the objective $\innorm{\lambda,\sigma}$ is convex in $\sigma$, further the domain $\{\sigma \mid V\sigma\succeq 0, \norm{\sigma}_2\leq 1\}$ is intersection of a polytope and a sphere, hence convex. This makes the following
\[\max_{V\sigma\succeq 0, \norm{\sigma}_2\leq 1}\sum_{i}\lambda_i\sigma^2_i\]
 as convex objective with convex domain.

\begin{proposition}
     \[\max_{\norm{x}_2\leq 1, x\succeq  0 }xA^TAx =\max_{b\in B}\sum_{i}\lambda_ib_i,\]
    where $B =\{b\mid b_i =\innorm{v_i,x}^2,x\succeq 0\}$.
\end{proposition}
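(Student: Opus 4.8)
The plan is to obtain this identity as a pure change of variables applied to the immediately preceding proposition. We have already established that
\[
\max_{\norm{x}_2\leq 1,\, x\succeq 0} x^\top A^\top A x \;=\; \max_{V\sigma\succeq 0,\ \norm{\sigma}_2\leq 1}\ \sum_{i}\lambda_i \sigma_i^2,
\]
where $A^\top A = V\Lambda V^\top$ with $V=[v_1,\dots,v_n]$ orthonormal, $\Lambda=\mathrm{diag}(\lambda_1,\dots,\lambda_n)$ and $\lambda_i\geq 0$, and the correspondence $x=V\sigma$, i.e.\ $\sigma_i=\innorm{v_i,x}$, identifies the feasible points of the two problems. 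First I would introduce the new variable $b_i:=\sigma_i^2=\innorm{v_i,x}^2$ and track how the constraints and the objective transform under $\sigma\mapsto b$.

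Three observations carry the argument. First, since $V$ is orthonormal, $\sum_i b_i=\sum_i\sigma_i^2=\norm{\sigma}_2^2=\norm{x}_2^2$, so the ball constraint $\norm{\sigma}_2\le 1$ is exactly $\sum_i b_i\le 1$; I would fold this into the description of $B$ (as written, $B$ should carry the constraint $\norm{x}_2\le 1$, equivalently $\sum_i b_i\le 1$). Second, $V\sigma\succeq 0$ is literally $x\succeq 0$. Third, the objective becomes linear in the new variable: $\sum_i\lambda_i\sigma_i^2=\sum_i\lambda_i b_i=\innorm{\lambda,b}$. Hence the feasible set of the right-hand problem is precisely $B=\{\,b\mid b_i=\innorm{v_i,x}^2,\ x\succeq 0,\ \norm{x}_2\le 1\,\}$, and each feasible point carries the same objective value on both sides. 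Taking the supremum on each side then gives the two inequalities — $\le$: any admissible $x$ yields $b=(\innorm{v_i,x}^2)_i\in B$ with equal value; $\ge$: any $b\in B$ is realized by some admissible $x$, which is feasible for the left problem with equal value — so the two maxima coincide.

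The computation is genuinely a one-liner, so I do not expect a hard step; the only point requiring care is precision in the statement of $B$. Note that the coordinate-squaring map $x\mapsto b$ is not injective — it forgets the signs of the $\sigma_i$ — but this is harmless, since maximizing $\innorm{\lambda,b}$ only needs the two feasible sets to have matching images and matching objective values under the correspondence, not the map to be invertible. I would also remark, to justify writing $\norm{x}_2\le 1$ rather than $\norm{x}_2=1$, that $A^\top A\succeq 0$ makes the objective nondecreasing under scaling $x$ upward, so the norm constraint is active at the optimum and either form yields the same value.
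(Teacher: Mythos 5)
Your argument is correct and is essentially the paper's own proof: expand $x$ in the orthonormal eigenbasis of $A^\top A$ so the objective becomes $\sum_i\lambda_i\sigma_i^2$, then substitute $b_i=\sigma_i^2$; the paper does exactly this and leaves the rest implicit. Your added remark that the set $B$ as stated must also carry the constraint $\norm{x}_2\leq 1$ (equivalently $\sum_i b_i\leq 1$) is a genuine and worthwhile correction --- without it the right-hand side is unbounded --- and it matches the corrected definition of $B$ the paper uses in the proposition that follows.
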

 \begin{proof}
     $x = \sum_{i}\sigma_i v_i $, where $\sigma_i=\innorm{x,v_i}$,  we have
    \begin{align*}
        x^TA^TAx &= (\sum_{i}\sigma_i v_i)(\sum_{i}\lambda_iv_iv_i^T)(\sum_{i}\sigma_i v_i) \\
        &= \sum_{i}\lambda_i(\sigma_i)^2,\qquad\text{(as $v_i$ are orthogonal)}. 
    \end{align*}
    
Rest follows.
 \end{proof}

\begin{proposition}
    The set 
    \[B =\{b\mid b_i =\innorm{v_i,x}^2,x\succeq 0, \norm{x}_2=1\},\]
is convex.
\end{proposition}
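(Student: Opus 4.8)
The plan is to present $B$ as a \emph{linear image of a convex set}, so that its convexity reduces to tightness of a rank-one relaxation. First I would pass to the eigenbasis: writing $V = [v_1,\dots,v_n]$ (an orthogonal matrix) and $x = V\sigma$, we have $\sigma_i = \langle v_i,x\rangle$, $\|x\|_2 = \|\sigma\|_2$ and $b_i = \sigma_i^2$, so that
\[
B = \bigl\{(\sigma_1^2,\dots,\sigma_n^2) \bigm| \|\sigma\|_2 = 1,\ V\sigma \succeq 0\bigr\}.
\]
The key observation is $b_i = \sigma_i^2 = (v_i^\top x)^2 = v_i^\top(xx^\top)v_i$, so every $b\in B$ is the image of the rank-one matrix $X = xx^\top$ under the linear map $\Phi(X) := (v_1^\top X v_1,\dots,v_n^\top X v_n)$, where $X$ is positive semidefinite, entrywise nonnegative (since $x\succeq 0$), and of trace $\|x\|_2^2 = 1$.

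Next I would relax the rank-one constraint. Let $\mathcal{N} := \{X \mid X \succeq 0,\ X \ge 0\ \text{entrywise},\ \operatorname{tr} X = 1\}$, the set of doubly-nonnegative matrices of trace one; it is the intersection of the PSD cone, the entrywise-nonnegative orthant and an affine hyperplane, hence convex. As $\Phi$ is linear, $\Phi(\mathcal{N})$ is convex, and the previous paragraph gives $B \subseteq \Phi(\mathcal{N})$. It would then remain to prove the reverse inclusion $\Phi(\mathcal{N}) \subseteq B$, i.e., that the rank-one relaxation is exact for the ``diagonal-in-the-$v$-basis'' functionals $X \mapsto v_i^\top X v_i$.

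For that step, take $X \in \mathcal{N}$ and set $d_i := v_i^\top X v_i \ge 0$; since $\sum_i v_i v_i^\top = I$ we get $\sum_i d_i = \operatorname{tr} X = 1$, so $d$ is a probability vector. It suffices to find signs $\epsilon\in\{-1,+1\}^n$ such that $x := \sum_i \epsilon_i\sqrt{d_i}\,v_i \succeq 0$, because then $\|x\|_2 = 1$ and $(v_i^\top x)^2 = d_i$, so $d = \Phi(X) \in B$. I would try to build such signs constructively from the entrywise-nonnegativity of $X$ --- for instance via a Perron--Frobenius / continuity argument deforming $X$ inside $\mathcal{N}$ toward a rank-one matrix while tracking $\Phi$, or by a greedy assignment of the $\epsilon_i$ in decreasing order of $\sqrt{d_i}$ that keeps the relevant partial sums nonnegative.

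The hard part is exactly this sign-selection step, and it is where the argument needs the most care: the relaxation and the reduction are routine, but producing a valid sign pattern for \emph{every} $X\in\mathcal{N}$ is the real content, and one should be prepared for it to require additional structure on $\{v_i\}$ (already a $3\times 3$ instance with $v_1 \propto \mathbf{1}$ suggests the naive sign choice can fail, so some genuine use of the nonnegativity --- or of the particular structure of $A$ in the application --- appears unavoidable). If the exact statement resists proof, it is worth recording that for the spectral maximization in this section only $\max_{b\in B}\langle\lambda,b\rangle$ is used, and this equals $\max_{b\in\operatorname{conv}(B)}\langle\lambda,b\rangle$, so $B$ may be replaced by its convex hull there without loss.
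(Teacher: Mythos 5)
Your plan correctly reduces the problem to the inclusion $\Phi(\mathcal{N})\subseteq B$, and you rightly flag the sign-selection step as the real content and suspect it may fail when $v_1\propto\mathbf{1}$. That suspicion is exactly right: the step cannot be completed, because the proposition is false as stated. Take $n=3$ with the orthonormal basis $v_1=\tfrac{1}{\sqrt3}(1,1,1)$, $v_2=\tfrac{1}{\sqrt2}(1,-1,0)$, $v_3=\tfrac{1}{\sqrt6}(1,1,-2)$. For any $x\succeq 0$ with $\norm{x}_2=1$ we have $b_1=\innorm{v_1,x}^2=\tfrac13\norm{x}_1^2\geq\tfrac13\norm{x}_2^2=\tfrac13$, with equality iff $x$ has at most one nonzero entry, i.e.\ $x\in\{e_1,e_2,e_3\}$. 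Hence the slice $B\cap\{b_1=\tfrac13\}$ consists of exactly the two points $(\tfrac13,\tfrac12,\tfrac16)=\phi(e_1)=\phi(e_2)$ and $(\tfrac13,0,\tfrac23)=\phi(e_3)$, while every point of $B$ satisfies $b_1\geq\tfrac13$. If $B$ were convex, its intersection with the supporting hyperplane $\{b_1=\tfrac13\}$ would be convex; two isolated points are not, and indeed the midpoint $(\tfrac13,\tfrac14,\tfrac{5}{12})$ of two members of $B$ is not in $B$. So no sign-selection argument (Perron--Frobenius, greedy, or otherwise) can establish $\Phi(\mathcal{N})\subseteq B$; what is true is only the one-sided inclusion $B\subseteq\Phi(\mathcal{N})$ that you already proved.

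For what it is worth, the paper's own proof fails at the analogous point: it asserts that as $x$ traverses the minor great-circle arc from $x$ to $x'$, the image $b$ traverses the segment $[b,b']$. Writing the arc as $\cos t\,u+\sin t\,w$ with $u\perp w$, one gets $b_i(t)=\tfrac{u_i^2+w_i^2}{2}+\tfrac{u_i^2-w_i^2}{2}\cos 2t+u_iw_i\sin 2t$, an arc of an ellipse, which is a segment only in degenerate cases. Your closing remark is the correct salvage and should be promoted to the main point: the only use of $B$ in this section is $\max_{b\in B}\innorm{\lambda,b}$ with $\lambda\succeq 0$, and since a linear functional attains the same maximum over $B$, over $\operatorname{conv}(B)$, and (by tightness at rank one for this particular objective, or simply as an upper bound) over the convex relaxation $\Phi(\mathcal{N})$, the convexity claim about $B$ itself is neither needed nor true.
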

\begin{proof}
    Le $b, b'\in B$ be corresponding point for $x,x' \succeq 0$, and $\norm{x},\norm{x'}=1$ respectively.
    Now make circle with origin, $x$ and $x'$. The minor arc containing $x$ and $x'$ lies entirely in the $\R^n_+$.  The $x$ lying on the arc will generate a line  connecting $b$ and $b'$. Hence $B$ is convex.
\end{proof}

\subsection{Experimental Verification}
This section describes three different methods for solving the optimization problem:
\[
\max_{\|x\|_2 \leq 1, \, x \geq 0} \|Ax\|_2,
\]
where \(A \in \mathbb{R}^{n \times n}\). The methods are compared in terms of their computational efficiency and the quality of their solutions.

\subsubsection{Brute Force Random Search}
The brute force method randomly samples vectors \(x \in \mathbb{R}^n\) from the nonnegative orthant, normalizes them to satisfy \(\|x\|_2 = 1\), and evaluates \(\|Ax\|_2\) for each sampled vector. The steps are as follows:
\begin{enumerate}
    \item Generate \(N\) random vectors \(x_i \geq 0\), \(i = 1, \dots, N\).
    \item Normalize each vector to unit norm: \(x_i \gets x_i / \|x_i\|_2\).
    \item Compute \(\|Ax_i\|_2\) for each vector and select the maximum value.
\end{enumerate}
This method is simple to implement but computationally expensive, as it evaluates \(A\) for a large number of randomly generated vectors. See figure \ref{fig:random-guess}
\begin{figure}[ht!]
    \centering
    \includegraphics[width=0.70\linewidth]{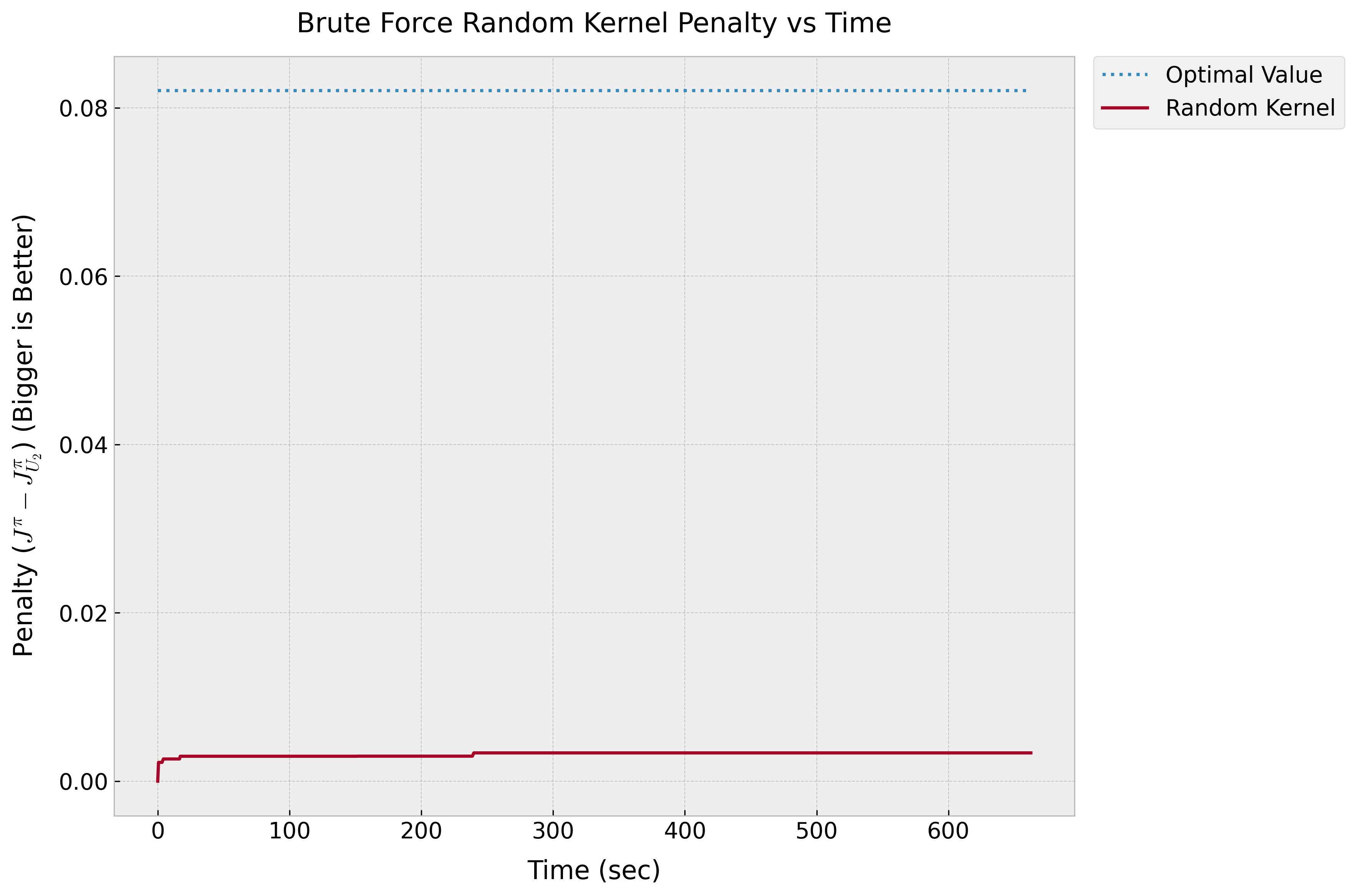}
    \caption{Random Kernel Guess takes exponentially long time to converge. While Algorithm 1 only took 0.14 sec to find the optimal value.}
    \label{fig:random-guess}
\end{figure}

\subsubsection{Numerical Optimization (Scipy Minimize)}
This approach uses numerical optimization to directly solve the problem:
\[
\max_{\|x\|_2 \leq 1, \, x \geq 0} \|Ax\|_2.
\]
The optimization problem is formulated as:
\[
\min_{x} -\|Ax\|_2, \quad \text{subject to } \|x\|_2 \leq 1 \text{ and } x \geq 0.
\]
Steps include:
\begin{enumerate}
    \item Define the objective function as \(-\|Ax\|_2\).
    \item Impose constraints: \(\|x\|_2 \leq 1\) and \(x \geq 0\).
    \item Solve the problem using \texttt{scipy.optimize.minimize}, with an initial guess \(x_0\).
\end{enumerate}
This method provides the exact solution but is computationally more expensive than the spectral method.

\subsection{Comparison Metrics}
The three methods are compared based on:
\begin{itemize}
    \item \textbf{Optimality:} The maximum value \(\|Ax\|_2\) achieved by each method.
    \item \textbf{Time Efficiency:} The computational time required by each method.
\end{itemize}

\subsection{Results and Observations}
The following plots compare the performance of the three methods:
\begin{itemize}
    \item \textbf{Optimality Plot:} Shows that the maximum value obtained with scipy.minimize is slightly better than our spectral method, while random search performs poorly.
    \item \textbf{Time Efficiency Plot:} Illustrates the that scipy.minimize scales much poorly with the dimension, while our spectral method is way faster than both methods. 
\end{itemize}

\begin{figure}[ht]
    \centering
    \includegraphics[width=0.8\textwidth]{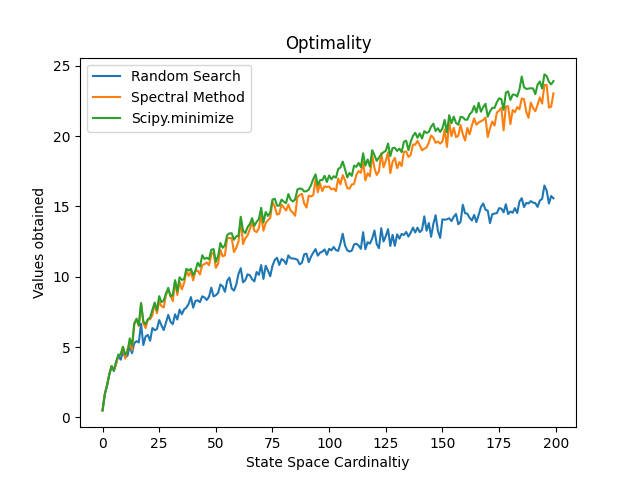}
    \caption{Comparison of optimality across methods.}
    \label{fig:optimality}
\end{figure}

\begin{figure}[ht]
    \centering
    \includegraphics[width=0.8\textwidth]{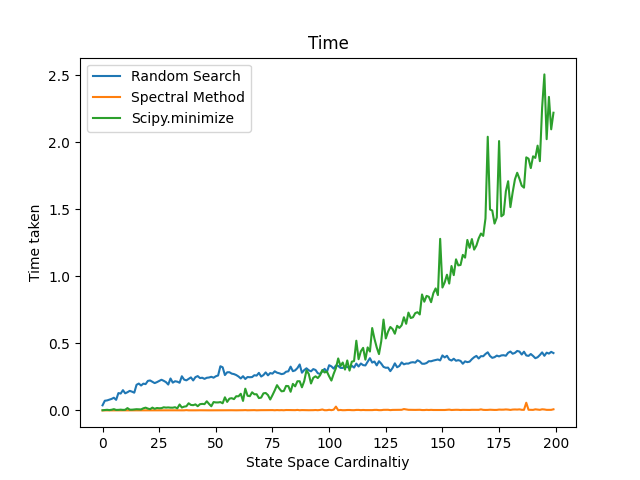}
    \caption{Comparison of computational time across methods}
    \label{fig:time}
\end{figure}
\begin{table}[ht]
    \centering
    \begin{tabular}{c|c|c|c||c|c|c}
    \toprule
& \multicolumn{3}{|c|}{Optimal values attained} & \multicolumn{3}{|c|}{Time taken} \\
 \midrule
    $n$ & Random & Spectral & minimize&Random & Spectral & minimize \\
    \midrule
    10 & 4.10& 4.45&4.46& 0.12&
       0.0007 &0.005 \\
    20&5.14& 6.71& 6.82&0.19&
       0.0003 & 0.01\\
     50& 9.23& 11.59& 11.93&  0.25&  0.0007  &  0.03  \\
     100&11.95& 16.44& 17.19&  0.31&  0.001  &  0.28\\
     200& 15.74& 22.1 & 23.68&  0.44&  0.004  &  2.1\\
     300& 19.32& 28.58& 29.73& 0.57& 0.012& 8.19\\
  500& 24.46& 36.56& 38.47& 0.83& 0.209& 43.49\\
1000& 33.91& 51.64& 54.25& 1.38& 0.171& 313.6\\
    \bottomrule
    \end{tabular}
    \caption{Attained Values and Time Taken. }
    \label{tab:Ax}
\end{table}

\subsubsection{Parameters of Experiments }

The experiments were conducted to evaluate the performance of three methods—brute force random search, eigenvalue heuristic, and numerical optimization—on solving the problem:
\[
\max_{\|x\|_2 \leq 1, \, x \geq 0} \|Ax\|_2.
\]

\paragraph{State Space Cardinality and Random matrix Generation}
\begin{itemize}
    \item \textbf{State Space Cardinality (\(n\)):} The dimension of the problem, denoted by \(n\), represents the state space cardinality. In the experiments, \(n\) varied from 1 to 300 to analyze the scalability of the methods.
    \item \textbf{Matrix Generation:} The matrix \(A \in \mathbb{R}^{n \times n}\) was generated as a random matrix with entries sampled from a standard normal distribution:
    \[
    A_{ij} \sim \mathcal{N}(0, 1), \quad i, j = 1, \dots, n.
    \]
    The same random seed (\texttt{seed = 42}) was used across all runs to ensure reproducibility.
    \item 10000 random vectors $x$ were generated for Brute Search Method.
\end{itemize}

\paragraph{Process of matrix Evaluation}
The goal of the experiments is to maximize \(\|Ax\|_2\) under the constraints \(\|x\|_2 \leq 1\) and \(x \geq 0\). The matrix \(A\) is evaluated by:
\begin{enumerate}
    \item Generating random vectors \(x \in \mathbb{R}^n\) for the brute force method.
    \item Computing the spectral decomposition of \(A^\top A\) for the eigenvalue heuristic.
    \item Defining and solving a constrained optimization problem for the numerical optimization method.
\end{enumerate}
The results, including the optimal values and computational times, are recorded for each method.

\paragraph{Evaluation Metrics}
The performance of the methods was assessed using the following metrics:
\begin{itemize}
    \item \textbf{Optimality:} The maximum value \(\|Ax\|_2\) obtained by each method.
    \item \textbf{Computational Efficiency:} The time taken by each method to compute the result.
    \item \textbf{Scalability:} The behavior of the methods as \(n\) increases.
\end{itemize}
This systematic evaluation ensures a fair comparison of the three approaches across varying problem sizes.
\paragraph{Hardware and Software Specifications}

The experiments were conducted on the following hardware and software setup:

\begin{itemize}
    \item \textbf{Model Name:} MacBook Pro (2023 model).
    \item \textbf{Model Identifier:} Mac14,7.
    \item \textbf{Chip:} Apple M2 with 8 cores (4 performance and 4 efficiency cores).
    \item \textbf{Memory:} 16 GB Unified Memory.
    \item \textbf{Operating System:} macOS Ventura.
    \item \textbf{Programming Language:} Python 3.9.
    \item \textbf{Libraries Used:} 
    \begin{itemize}
        \item \texttt{numpy} for numerical computations.
        \item \texttt{scipy} for numerical optimization.
        \item \texttt{matplotlib} for generating plots.
        \item \texttt{time} for recording computational times.
    \end{itemize}
\end{itemize}

The experiments were designed to ensure reproducibility by fixing the random seed (\texttt{seed = 42}). Computational times and results are specific to the above hardware configuration and may vary on different systems.

\section{Convexity of \texorpdfstring{$\D$}{D}}
\subsection{MDP Configuration}
We define an MDP with the following parameters:
\begin{itemize}
    \item \textbf{State space size}: \( S = 3 \)
    \item \textbf{Action space size}: \( A = 2 \)
    \item \textbf{Discount factor}: \( \gamma = 0.9 \)
    \item Random kernel $P$, random reward $R$, seed 42.
    \item Compute the set $\D =\{D^\pi H^\pi|\pi\}$ with 10 millions random policies $\pi$
\end{itemize}

\subsection{Dimensionality Reduction via PCA}
Given the high-dimensional nature of the \( D^\pi H^\pi \) representations, we apply \textbf{Principal Component Analysis (PCA)} to extract meaningful structure.
\begin{itemize}
    \item We \textbf{retain the top 10 components} to capture the dominant variations in the dataset.
    \item The \textbf{explained variance ratio} is visualized to assess how much information each component retains.
    \item \textbf{2D and 3D projections} of the first few principal components are generated for visualization.
\end{itemize}

\begin{figure}[ht]
    \centering
    \includegraphics[width=0.7\textwidth]{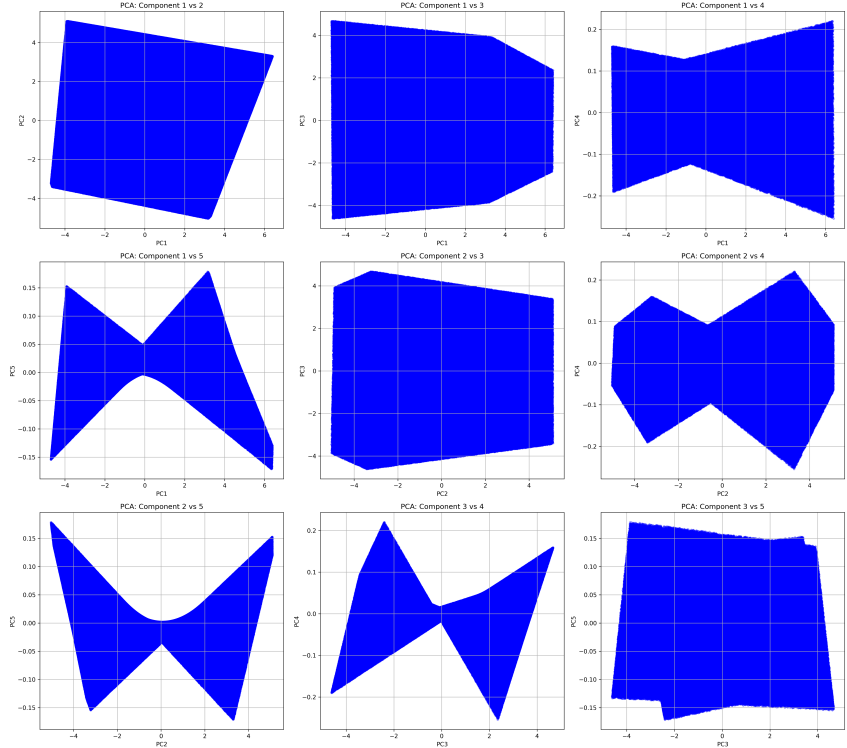}
    \caption{2D PCA projections of the first 5 components.}
    \label{fig:pca_2d}
\end{figure}

\begin{figure}[ht]
    \centering
    \includegraphics[width=0.6\textwidth]{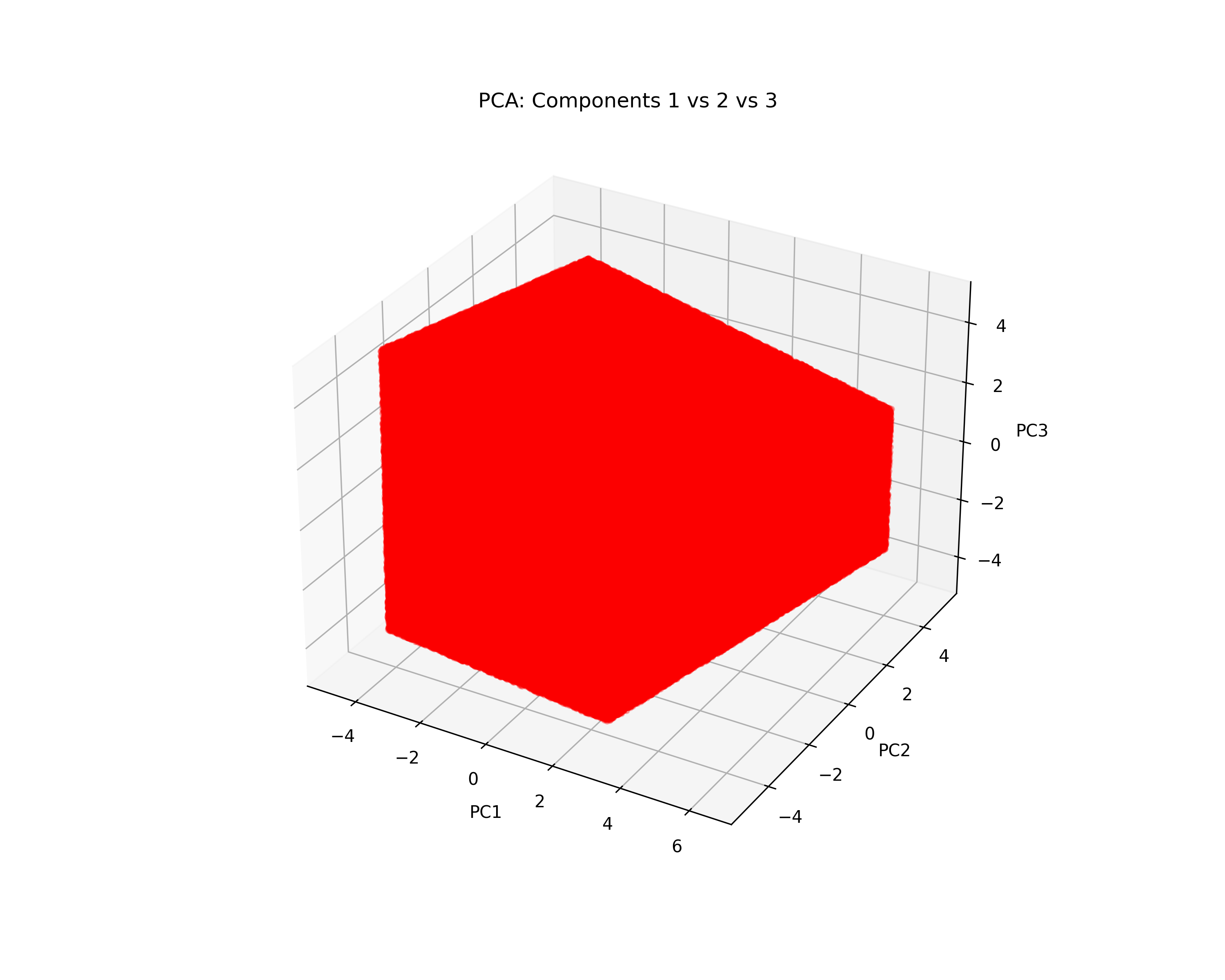}
    \caption{3D PCA projection of the first three principal components.}
    \label{fig:pca_3d}
\end{figure}

\subsection{Random Linear Projections}
To further explore the \textbf{geometry of the occupancy measure set}, we apply \textbf{random linear projections} of the high-dimensional data:
\begin{itemize}
    \item \textbf{2D Random Projections}: The data is projected onto \textbf{randomly chosen 2D subspaces}.
    \item \textbf{3D Random Projections}: The data is projected onto \textbf{randomly chosen 3D spaces}.
\end{itemize}

\begin{figure}[ht]
    \centering
    \includegraphics[width=0.7\textwidth]{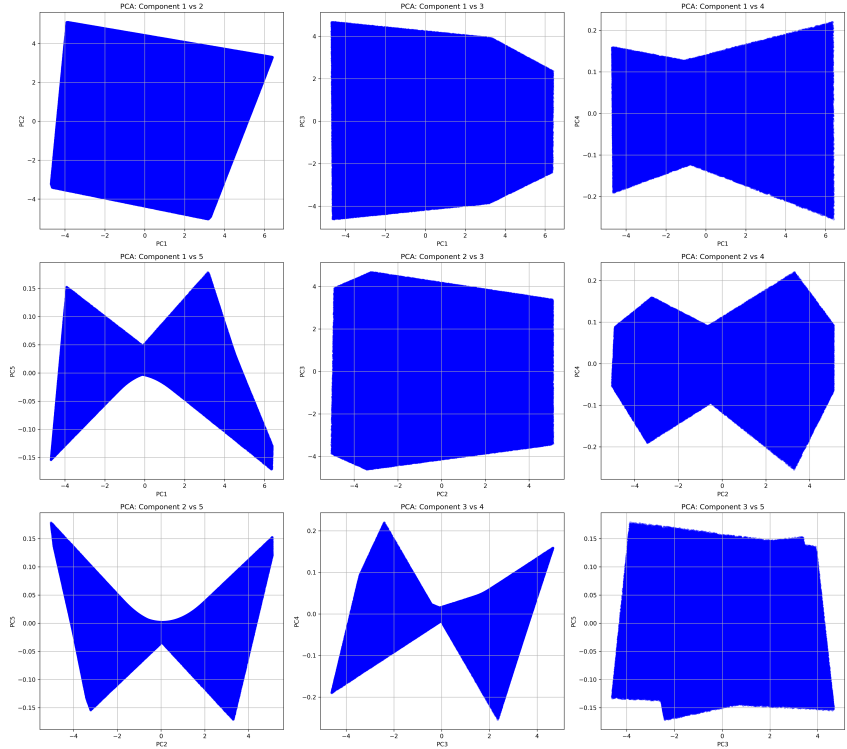}
    \caption{2D Random Projections of the Data.}
    \label{fig:random_2d}
\end{figure}

\begin{figure}[ht]
    \centering
    \includegraphics[width=0.6\textwidth]{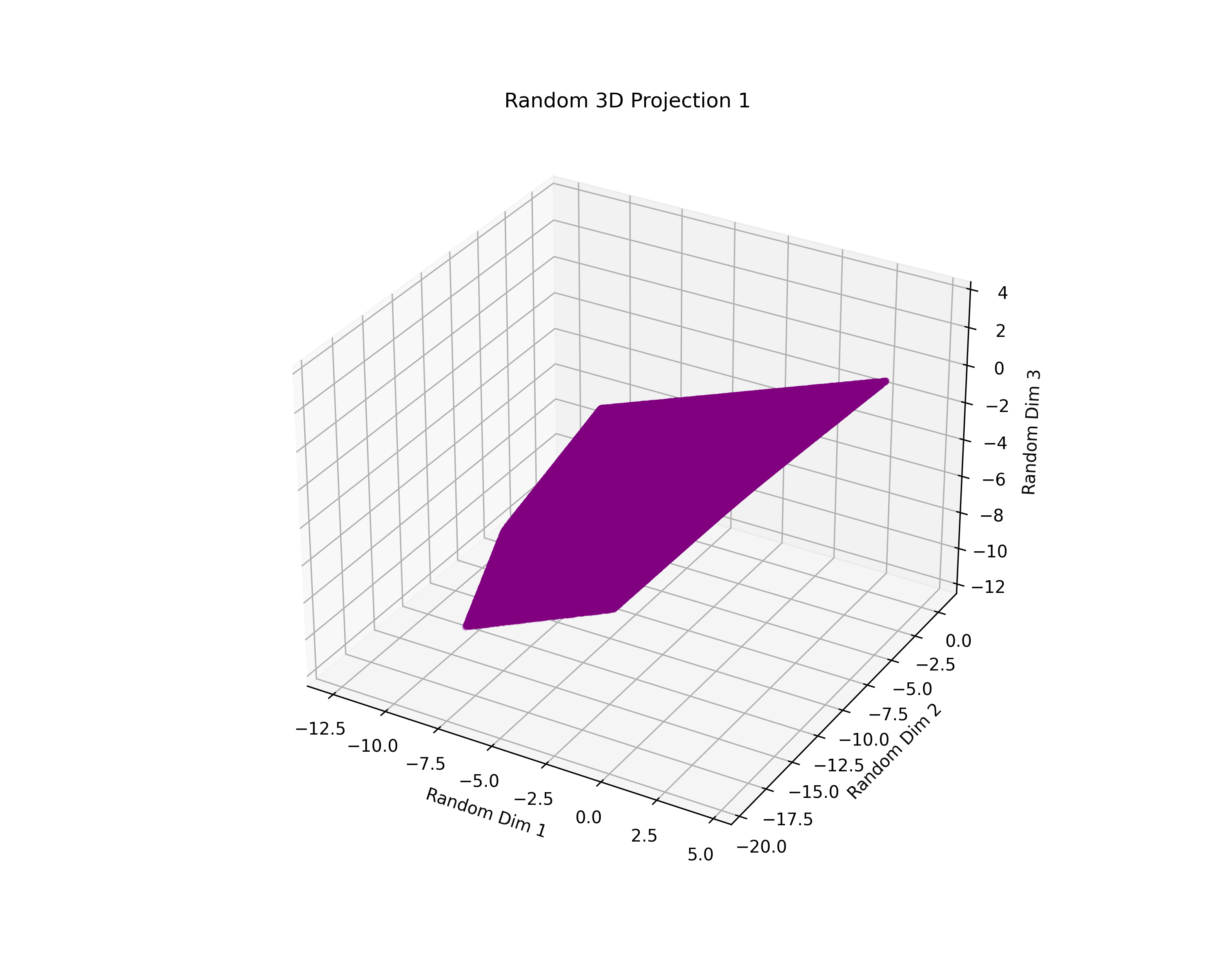}
    \caption{3D Random Projection Example.}
    \label{fig:random_3d}
\end{figure}












\section{Experiments: Robust Policy Evaluation}
Setting: Randomly generate $\hat{P}$ is nominal kernel, reward function $R$, and  $\pi$ is a policy. Discount factor $\gamma=0.9$, uncertainty radius $\beta = 0.01$, initial distribution $\mu = uniform distribution$.

Notations: Value function $v^\pi = (I-\gamma \hat{P}^\pi)^{-1}R^\pi$, Occupancy matrix $D^\pi = (I-\gamma \hat{P}^\pi)^{-1}R^\pi$, occupation measure $d^\pi = \mu^T(I-\gamma \hat{P}^\pi)^{-1}R^\pi$, return $J^\pi_{P} = (I-\gamma P^\pi)^{-1}R^\pi$,.

In this section, we evaluate the performance of Algorithm \ref{alg:RPE} for robust policy evaluation, focusing on the case \( p = 2 \). The algorithm requires computing \( F(\lambda) \) at each iteration, which involves solving the constrained matrix norm problem \( \max_{x \in \mathcal{B}} \|Ax\|_2 \). This can be efficiently handled using our spectral Algorithm \ref{alg:Ax:EH}. Figures \ref{main:fig:rve:value} and \ref{main:fig:rve:time} compare four methods for computing the robust return, specifically the penalty term \( J^\pi - J^\pi_{\Uc_2} \):  
\begin{itemize}
    \item \textbf{ SLSQP from scipy \cite{2020SciPy-NMeth} :} This is  A semi-brute force approach that uses Lemma \ref{main:rs:RPE:dual}. It computes the penalty term via scipy.minimize to directly optimize over ($b,k$). This is equivalent to optimize over only rank-one perturbation of nominal kernel as a\( (b, k) \) corresponds to selecting a rank-one perturbation of the nominal kernel \( P = \hat{P} - bk^\top \).  Note that this method is local, hence can sometime be stuck in very bad local solution. 
    
\item \textbf{Binary search Algorithm \ref{alg:RPE} :} Uses the binary search Algorithm \ref{alg:RPE}, and  spectral Algorithm \ref{alg:Ax:EH} for computing \( F(\lambda) \) in each iteration.

\item \textbf{Random Rank-One Kernel Sampling :} A semi-brute force approach that uses Lemma \ref{main:rs:RPE:dual} to sample random pairs \( (b, k) \in \mathcal{B}, \mathcal{K} \), empirically maximizing the penalty term. Since choosing \( (b, k) \) corresponds to selecting a rank-one perturbation of the nominal kernel \( P = \hat{P} - bk^\top \), the method is named accordingly.  
\item \textbf{Random Kernel Sampling :} A brute-force approach that samples random kernels directly from the uncertainty set \( \Uc_2 \), computing the empirical minimum as an estimate of the robust return.  
\end{itemize}

\begin{figure}
    \centering
    \includegraphics[width=0.7\linewidth]{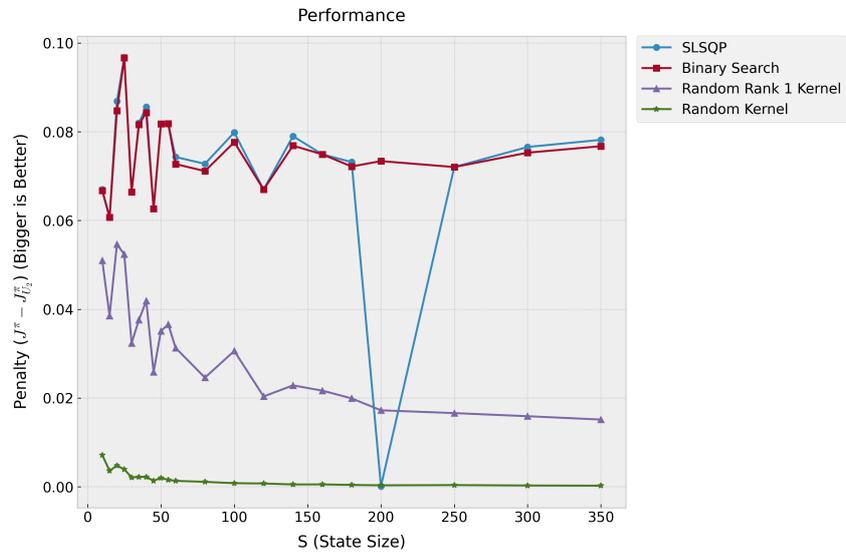}
    \caption{Performance of Robust Policy Evaluation methods with equal amount of time.}
    \label{fig:rve:value}
\end{figure}
Figure \ref{fig:rve:value} presents the penalty value (\(J^\pi - J^\pi_{\Uc_2}\)) computed by different methods across various state space sizes while keeping the action space fixed, with each method given the same computational time. Our binary search Algorithm \ref{alg:RPE} performs significantly better in both variations compared to brute-force (random kernel sampling) and semi-brute (random sampling of rank-one perturbations of the nominal kernel) approaches. Notably, the scipy SLSQP variant performs slightly better on average, but the spectral Algorithm \ref{alg:Ax:EH} is more reliable. This is expected, as the spectral method is global, while scipy SLSQP is a local optimizer and thus more prone to getting stuck in suboptimal solutions.

\begin{figure}
    \centering
    \includegraphics[width=0.7\linewidth]{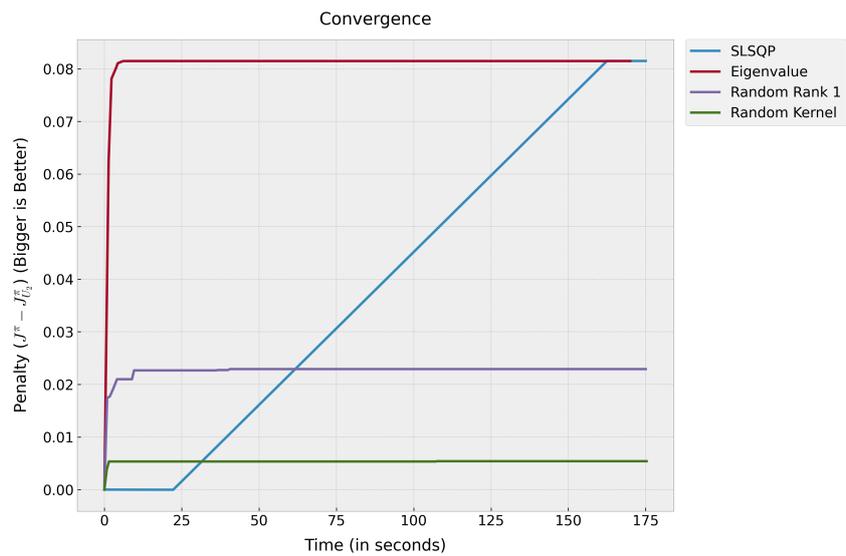}
    \caption{Convergence of Robust Policy Evaluation Methods}
    \label{fig:rve:time}
\end{figure}
\subsection{Penalty Values  (\texorpdfstring{$J^\pi - J^\pi_{\Uc_2}$}{Jpi - Jpi U2}) vs Sample Size for Different Algorithms}

Figure \ref{fig:beta_0.1}, figure \ref{fig:beta_0.02} and figure \ref{fig:beta_0.005} show Penalty Values  ($J^pi-J^\pi_{\Uc_2}$) for different values of $\beta$ calculated using different algorithms.

The experiment as shown in figure \ref{fig:beta_0.005} shows that the SLSQP algorithm is not robust for this problem and can end up in local minima multiple times.
\begin{figure}
    \centering
    \includegraphics[width=0.7\linewidth]{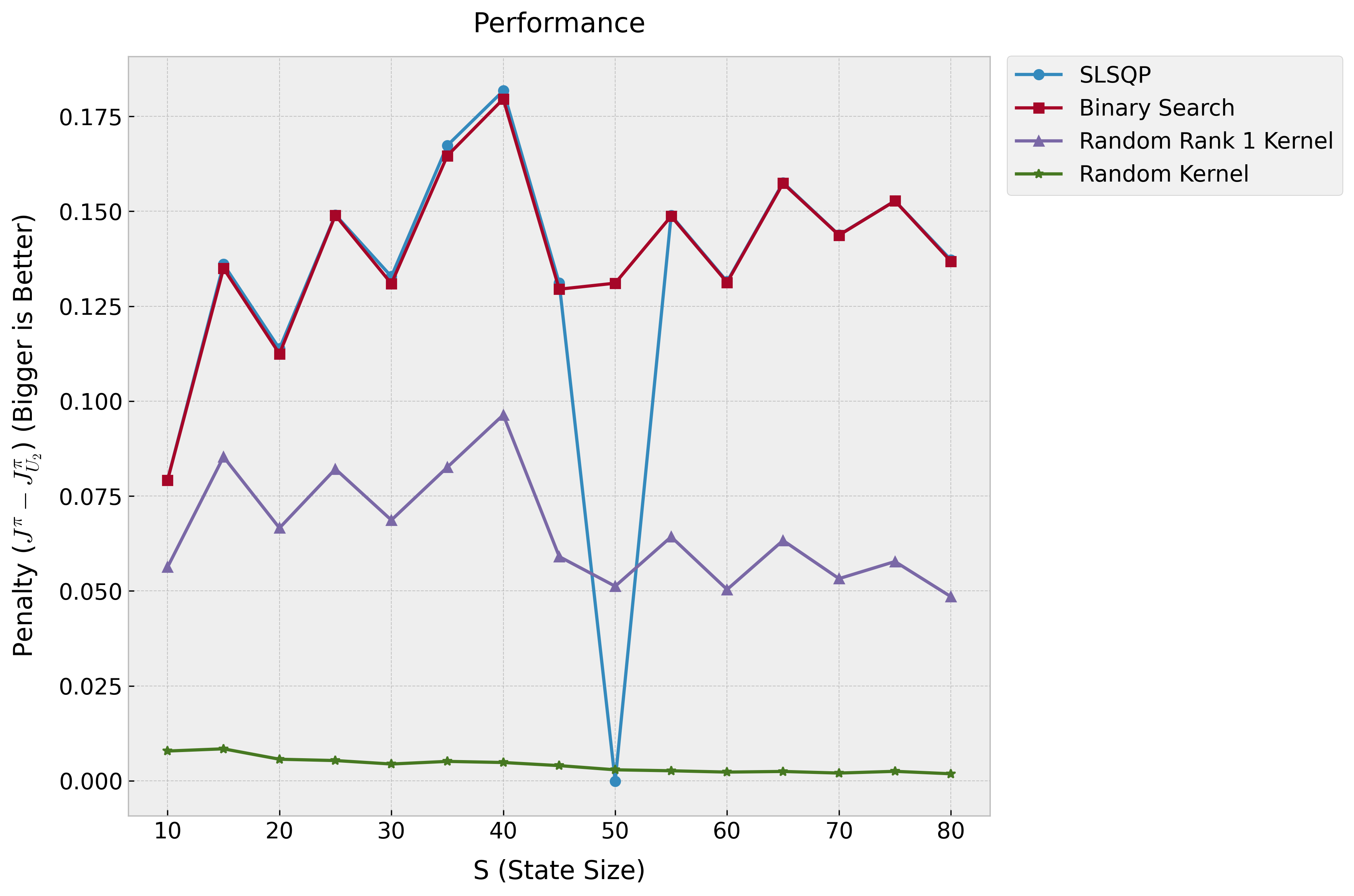}
    \caption{$\beta$=0.1, A=8}
    \label{fig:beta_0.1}
\end{figure}
\begin{figure}[ht]
    \centering
    \includegraphics[width=0.7\linewidth]{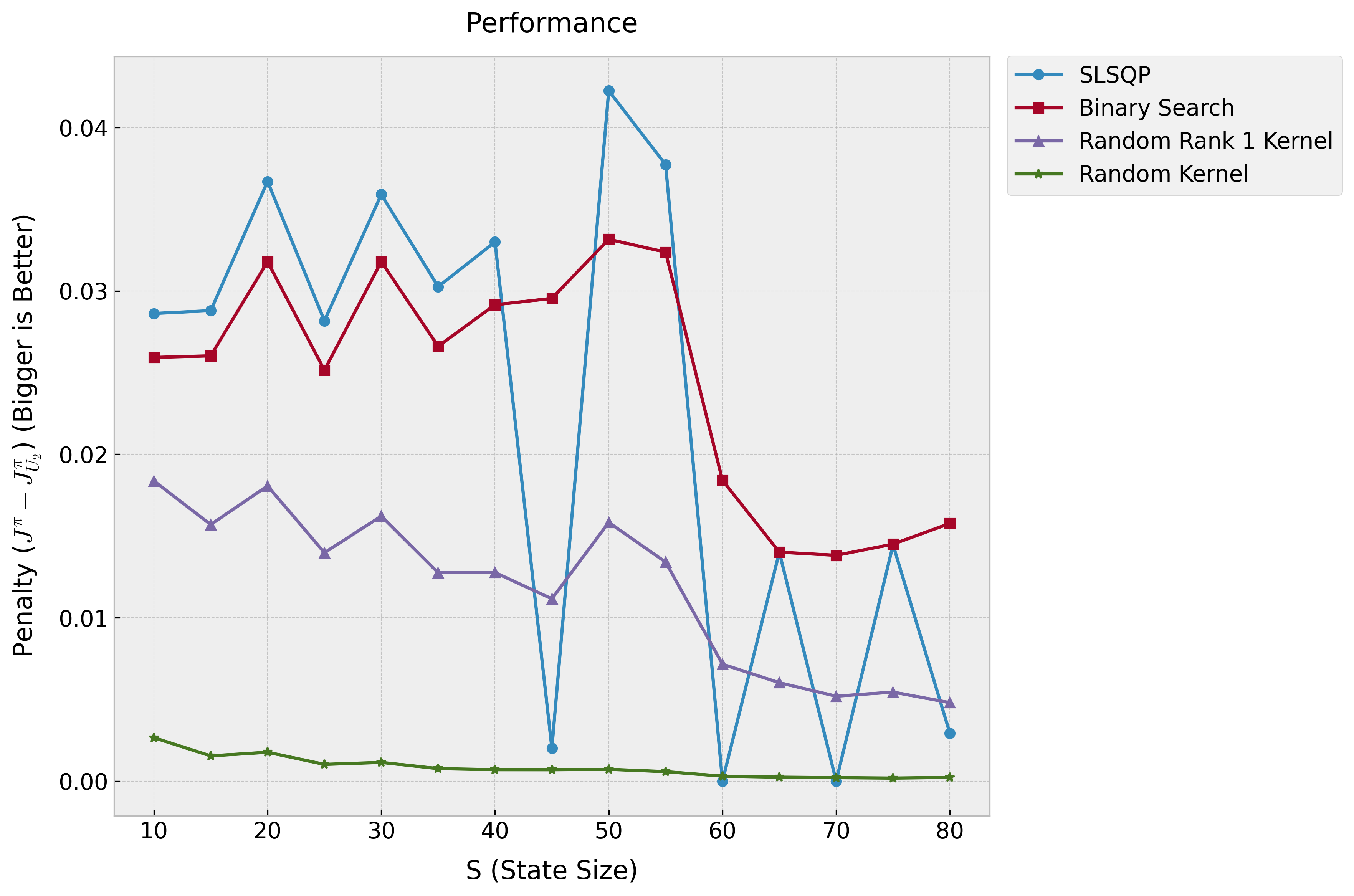}
    \caption{$\beta$=0.02, A=8}
    \label{fig:beta_0.02}
\end{figure}

\begin{figure}[ht]
    \centering
    \includegraphics[width=0.7\linewidth]{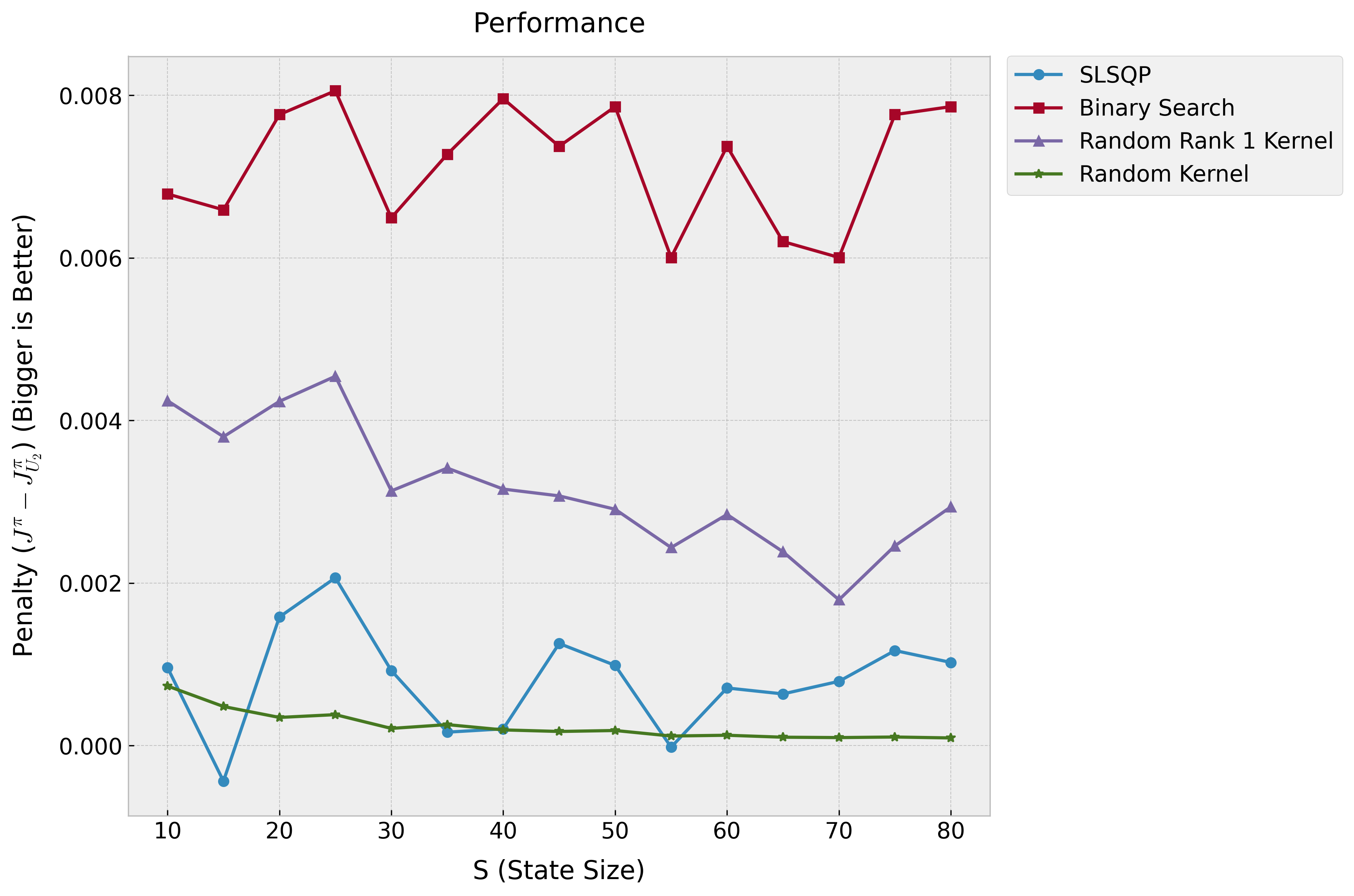}
    \caption{$\beta$=0.005, A=8}
    \label{fig:beta_0.005}
\end{figure}

Figure \ref{fig:rve:time} illustrates the convergence of different approaches over time (in seconds) for a fixed state space size (\(S=8\)). The spectral variation of Algorithm \ref{alg:RPE} converges rapidly, while the SLSQP variation gradually approaches the same performance. In contrast, the brute-force method shows slow, logarithmic improvement. This behavior arises because brute-force methods require an exponential number of samples (in the dimensionality of the problem) to adequately explore all directions. In comparison, our binary search Algorithm combined with the spectral Algorithm \ref{alg:Ax:EH} achieves significantly better efficiency, with a complexity of \(O(S^3 A^3 \log \epsilon^{-1})\).

More details of these experiments along with others can be found in the appendix, and codes are available at \url{https://anonymous.4open.science/r/non-rectangular-rmdp-77B8}. 

Our experiments confirm the efficiency of our binary search Algorithm \ref{alg:RPE} for robust policy evaluation, significantly outperforming brute-force approaches in both accuracy and convergence speed.

\begin{algorithm}

\subsection{Our Method}
\caption{ Binary Search for Robust Policy Evaluation for Uncertainty set $\Uc_p$}\label{app:alg:RPE}
\begin{algorithmic} [1]
\STATE \textbf{Input}: Tolerance $\epsilon =0.001, \beta=0.01$\\
\STATE\textbf{Initialize}: $\lambda=\frac{0.5}{1-\gamma}, \lambda_{max} = \frac{1}{1-\gamma}, \lambda_{min}=0 $  \\
\WHILE{ Tolerance is not met: $f(\lambda) > \epsilon$  }
\STATE Compute: 
\begin{align*}
f(\lambda)
 =& \max_{\norm{b}_p\leq 1, b\succeq 0}\beta\gamma\norm{ \Phi(v^\pi {d^\pi}^{\top} -\lambda D^\pi\bigm) H^\pi b}_q,
\end{align*}
where $\Phi = I-\frac{\mathbf{1}\mathbf{1}^\top}{S}$ is projection matrix and $(H^\pi b)(s) = \sum_{a}\pi(a|s)b(s,a)$ is policy averaging operator.
\STATE Bisection: If $f(\lambda) > \epsilon$, set $\lambda_{min} = \lambda$, if $f(\lambda) < \epsilon$, set $\lambda_{max} = \lambda$ 

\ENDWHILE

\STATE \textbf{Output}:  Robust return: $ J^\pi_{\Uc_p} =J^\pi - \lambda $.
\end{algorithmic}
\end{algorithm}

\subsection{Robust Penalty Function}

We have defined robust penalty function in \ref{subsection: robust_policy evaluation} as 

\[
F(\lambda) = \max_{b \in \mathcal{B}} \|E^\pi_\lambda b\|_q,
\]

Figure \ref{fig:beta_1_over_S} and Figure \ref{fig:beta_0_1} show graph of F($\lambda$) vs $\lambda$ for different value of  $\beta$  for fixed value of S=100 and A=10.

\begin{figure}[ht]
    \includegraphics[width=0.8\textwidth]{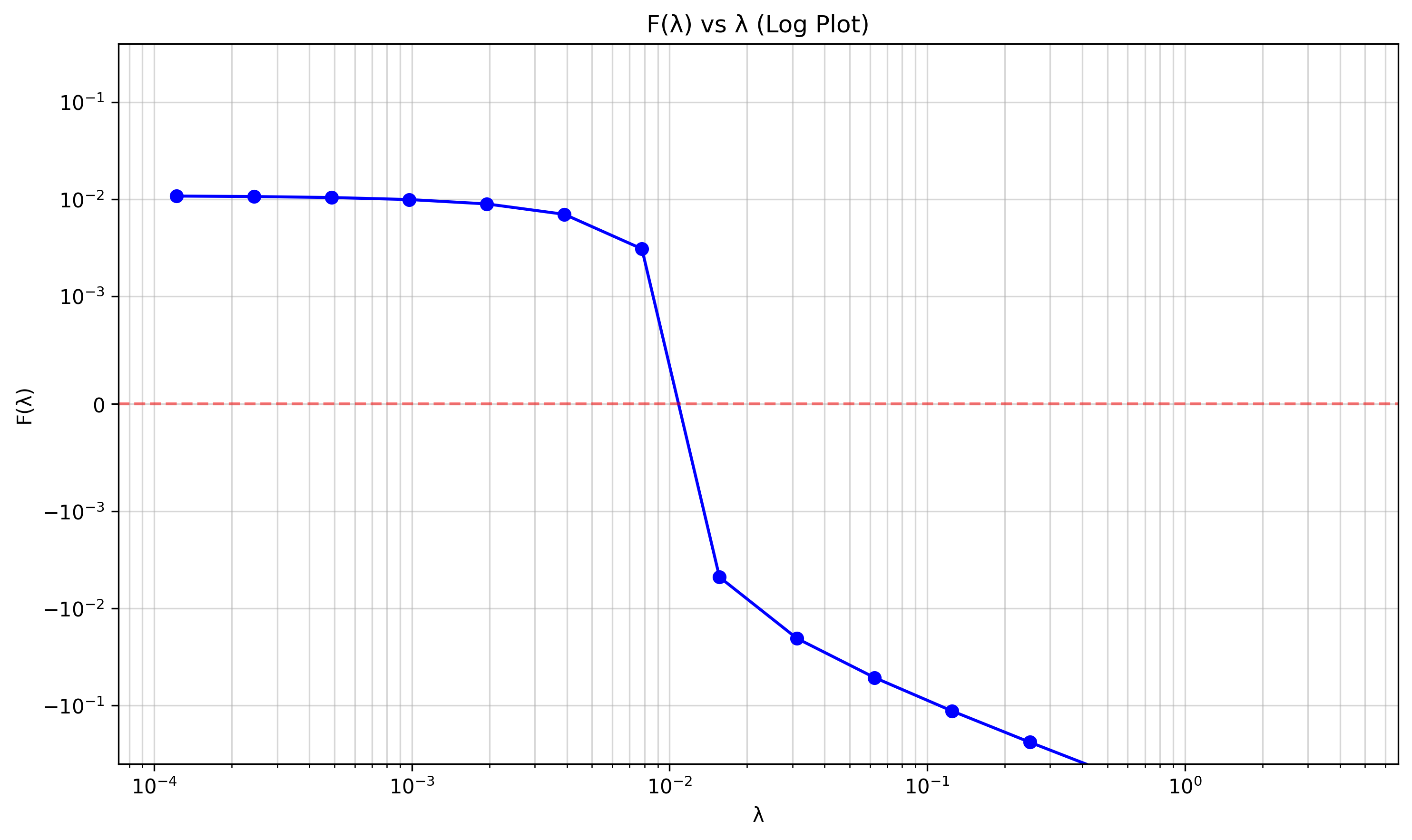}
    \caption{Robust Penalty Function F($\lambda$) vs $\lambda$ for S = 100, a = 10, $\beta = 1/S$}
    \label{fig:beta_1_over_S}
\end{figure}

\begin{figure}[ht]
    \includegraphics[width=0.8\textwidth]{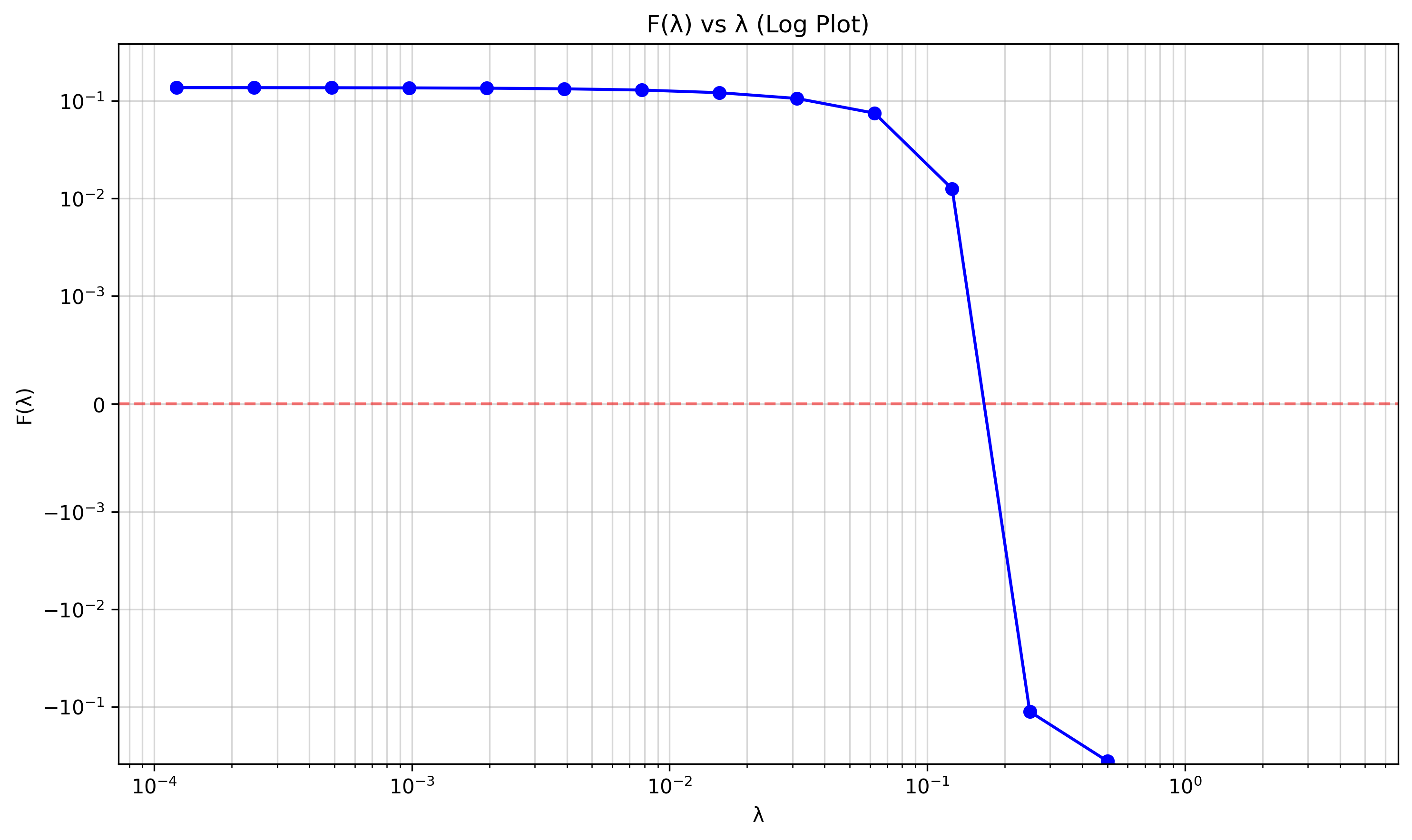}
    \caption{Robust Penalty Function F($\lambda$) vs $\lambda$ for S = 100, a = 10, $\beta = 0.1$}
    \label{fig:beta_0_1}
\end{figure}

\end{document}